\newtheorem{lemma}{Lemma}
\newtheorem{theorem}{Theorem}
\newtheorem{proposition}{Proposition}
\newtheorem{definition}{Definition}
\newtheorem{remark}{Remark}
\newcommand{\eqdef}{\coloneqq}
\newcommand{\RR}{\mathbb{R}}
\newcommand{\Pp}{\mathcal{P}}
\newcommand{\dotp}[2]{\langle #1,\,#2\rangle}
\renewcommand{\d}[1]{\mathrm{d}}
\DeclareMathOperator{\Att}{Att}
\DeclareMathOperator{\MAtt}{MAtt}
\DeclareMathOperator{\MLP}{MLP}
\newcommand{\din}{d_{\mathrm{in}}}
\newcommand{\dout}{d_{\mathrm{out}}}
\newcommand{\dhead}{d_{\mathrm{head}}}
\renewcommand*\d{\mathop{}\!\mathrm{d}}
\newcommand{\Xx}{\mathcal{X}}
\title{Transformers are Universal In-context Learners}
\author{ 
\begin{tabular}{c}
    Takashi Furuya \\
    Shimane Univ. \\
  \texttt{takashi.furuya0101@gmail.com} 
\end{tabular}
\begin{tabular}{c}
    Maarten V. de Hoop \\
    Rice Univ. \\
  \texttt{mvd2@rice.edu} 
\end{tabular}
\begin{tabular}{c}
  Gabriel Peyré \\
  CNRS, ENS, PSL Univ. \\
  \texttt{gabriel.peyre@ens.fr} 
\end{tabular}
}
\date{}
\begin{document}
\maketitle

\begin{abstract}
Transformers are deep architectures that define ``in-context mappings'' which enable predicting new tokens based on a given set of tokens (such as a prompt in NLP applications or a set of patches for a vision transformer). In this work, we study in particular the ability of these architectures to handle an arbitrarily large number of context tokens. To mathematically, uniformly address their expressivity, we consider the case that the mappings are conditioned on a context represented by a probability distribution of tokens which becomes discrete for a finite number of these. The relevant notion of smoothness then corresponds to continuity in terms of the Wasserstein distance between these contexts. We demonstrate that deep transformers are universal and can approximate continuous in-context mappings to arbitrary precision, uniformly over compact token domains. A key aspect of our results, compared to existing findings, is that for a fixed precision, a single transformer can operate on an arbitrary (even infinite) number of tokens. Additionally, it operates with a fixed embedding dimension of tokens (this dimension does not increase with precision) and a fixed number of heads (proportional to the dimension). The use of MLPs between multi-head attention layers is also explicitly controlled. We consider both unmasked attentions (as used for the vision transformer) and masked causal attentions (as used for NLP and time series applications). We tackle the causal setting leveraging a space-time lifting to analyze causal attention as a mapping over probability distributions of tokens.
\end{abstract}

\section{Introduction}

Transformers have revolutionized the field of machine learning with their powerful attention mechanisms as introduced by \cite{vaswani2017attention}. 
The exceptional performance and expressivity of large-scale transformers have been empirically well established for both NLP~\cite{brown2020language} and vision applications~\cite{dosovitskiy2020image}.
One key property of these architectures is their ability to leverage contexts of arbitrary length, which enables the parameterization of ``in context'' mappings with an arbitrarily large complexity. 
In this paper, we present a rigorous formalism to model inputs and the associated context with an arbitrarily large number of tokens, defining a notion of continuity that enables the analysis of their expressivity. 


\paragraph{Universality, from neural networks to neural operators.}

Multilayer Perceptrons (MLP) with two layers are universal approximators, as shown decades ago in \cite{cybenko1989approximation, hornik1989multilayer}, with a comprehensive review in \cite{pinkus1999approximation}. The significance of depth in enhancing expressivity is explored in \cite{hanin2017approximating,yarotsky2017error}. 
These results have been extended to cover a variety of architectural constraints on the networks, for instance, invoking weight sharing in Convolutional Neural Networks (CNN) \cite{zhou2020universality} and skip connections in ResNets \cite{cuchiero2020deep,tabuada2022universal}.
It is also possible to design equivariant architectures, in particular for graph neural networks~\cite{kratsios2022universal,keriven2019universal,xu2018powerful} and neural networks operating on sets of points \cite{qi2017pointnet,de2019stochastic}.
The connection between transformers and graph neural networks is exposed in~\cite{muller2023attending}. 
Here, we take a different point of view, with transformers operating on probability distributions rather than on sets of points.
Related to this setup are extensions of neural networks acting in finite-dimensional vector spaces to infinite-dimensional function spaces resulting in the notion of neural operators~\cite{kovachki2023neural}, the universality of which is studied in \cite{furuya2023globally}. Neural operators can be generalized to cope with data in metric spaces, addressing topological obstructions, in \cite{kratsios2023approximation}.

\paragraph{Mathematical modeling of transformers.}

It is now customary to describe transformers as performing ``in context'' prediction, which means that it maps token to token, while this map depends on a set of previously seen tokens. The size of this context might be very long, possibly arbitrarily long, which is the focus of this article. 
The ability of trained transformers to effectively perform in-context computation has been supported by both empirical studies~\cite{von2023uncovering} and theoretical results~\cite{ahn2024transformers,mahankali2023one,sander2024transformers,zhang2023trained} on simplified architectures (typically linear attention) and specific data generation processes.

To make a rigorous analysis of arbitrarily long token lengths, and also to describe a ``mean field'' limit of an infinite number of tokens, it is convenient to view attention as operating over probability distributions of tokens~\cite{vuckovic2020mathematical,sander2022sinkformers}. The smoothness (Lipschitz continuity) of the resulting attention layers is analyzed in \cite{castin2024smooth}. Deep transformers (with the residual connection) can be described by a coupled system of particles evolving across the layers. The analysis of the clustering properties of such an evolution is studied in \cite{geshkovski2023emergence,geshkovski2023mathematical}. 

\paragraph{Universality of transformers.}

\cite{yun2019transformers} provides, to the best of our knowledge, the most detailed account of the universality of transformers. The authors rely on shallow transformers with only 2 heads but require that the transformers operate over an embedding dimension which grows with the number of tokens. 
This result is refined in~\cite{nath2024transformers} which highlights the difficulty of attention mechanisms to capture smooth functions.
Our focus is different, since we consider deep transformers with a fixed embedding dimension, but which are universal for an arbitrary number of tokens. 

We note that there exist variations over the original transformer's architecture which enjoys universality results, for instance, the Sumformer \cite{alberti2023sumformer} and stochastic deep networks~\cite{de2019stochastic}, which also requires an embedding dimension that grows with the number of tokens. We furthermore mention the introduction of probabilistic transformers~\cite{kratsios2023small} which can approximate embeddings of metric spaces. The work of~\cite{agrachev2024generic} provides an abstract universal interpolation result for equivariant architectures under genericity conditions, but it is not known whether there exist generic attention maps. 

While this is not directly related to our results, a line of works studies the expressivity of transformers when operating on a discrete set of tokens as formal systems~\cite{chiang2023tighter,merrill2023expresssive,strobl2024formal,elhage2021mathematical}. Another line of work studies the impact of positional encoding on their expressivity \cite{luo2022your}.

\subsection{Our contributions}

Our work provides a rigorous formalization of transformer expressivity and continuity as operating over the space of probability distributions. The main mathematical results is the universality presented in Theorems~\ref{thm:main} and~\ref{thm:main-masked}, respectively, for the unmasked and the masked settings. Our approach effectively handles an arbitrary number of tokens and leverages deep architectures without requiring arbitrary width. The embedding dimension and the number of heads are proportional to the dimension of the input tokens and are independent of precision. It is interesting to note that the masked setting requires some stronger regularity hypothesis on the contexts, namely that they are Wasserstein-Lipschitz with respect to time, which is needed to cope with the constraint of causality in the relevant mappings.

\subsection{Notation}
\label{sec:notations}

For a natural number $N \in \mathbb{N}$, we denote by $[N]:=\{1,...,N\}$. For vector $x \in \RR^d$, the Euclidean norm of $x$ is denoted by $|x|$. For two vectors $x, y \in \RR^d$, the Euclidean inner product of $x$ and $y$ is denoted by $\dotp{x}{y}$ and the component-wise multiplication of $x$ and $y$ is denoted by $x \odot y$. The vector ${\bf 1}_d$ is the vector of dimension $d$ with all coordinates equal to $1$, that is, ${\bf 1}_d:=(1,...,1) \in \mathbb{R}^{d}$. 
We denote by $\Pp(\Omega)$ the space of probability measures on $\Omega$, and denote by $\mathcal{C}(\Omega)$ the space of continuous functions from $\Omega$ to $\RR$, where $\Omega \subset \RR^d$ is a compact domain for tokens' embeddings.
In what follows, we frequently utilize notions such as the push-forward operator $T_{\sharp}$, weak\(^*\) topology (denoted by the convergence $\rightharpoonup^*$), and Wasserstein distance $W_p$ for $1\leq p < + \infty$. For further details on these notions, we refer to Appendix~\ref{app:Basic-notions}.

\section{Measure-theoretic in-context mappings}

Transformers are defined by alternating multi-head attention layers (which compute interactions between tokens), MLP and normalization layers (which operate independently over each token). For the sake of simplicity, we omit normalization in the following analysis. We first recall their definition and then explain how they can be equivalently re-written using in-context mappings. This definition provides new insights and can also be generalized to an infinite number of tokens encoded in a probability measure.

\subsection{Attention as in context mappings on token ensembles}

\paragraph{Classical definition.} A set of \(n\) tokens, \(x_i \in \RR^{\din}\), is denoted by \(X=(x_i)_{i=1}^n \in \RR^{\din \times n}\). An attention head maps these $n$ tokens to the same number \(n\) of tokens in \(\RR^{\dhead}\) through
$$
    \forall X \in \RR^{\din \times n}, \quad
    \Att_{\theta}(X) \eqdef
    V X \: \text{SoftMax}(X^\top Q^\top K X / \sqrt{k}) \in \RR^{d_{\text{head}} \times n},
$$
where the parameters are the (Key, Query, Value) matrices \( \theta \eqdef (K,Q,V) \in \RR^{k \times \din} \times \RR^{k \times \din} \times \RR^{\dhead \times  \din}\).
%
Here, the (possibly masked) SoftMax function operates in a row-wise manner:
$$
    \forall Z \in \RR^{n \times n}, \quad
   \text{SoftMax}(Z) \eqdef 
   \left(
        \frac{M_{i,j} e^{Z_{i,j}}}{\sum_{\ell=1}^n M_{i,\ell} e^{Z_{i,\ell}}}
   \right)_{i,j=1}^n \in \RR_+^{n \times n},
$$
where $M_{i,j} = 1$ for the unmasked setting (bidirectional encoding transformers) and $M_{i,j} = 1_{j \leq i}$ in the masked setting (causal decoding transformers).
Multiple heads with different parameters \(\theta \eqdef (W^h, \theta^h)_{h=1}^H\) are combined in a linear way in a multi-head attention
\begin{equation}
        \MAtt_{\theta}(X) \eqdef
         X + \sum_{h=1}^H W^h \Att_{\theta^h}(X),    
\end{equation}
where \(W^h \in \RR^{\din \times \dhead}\) and \(\theta^h \eqdef (K^h, Q^h, V^h)\). 
In the following, we denote the various dimensions of a multi-head attention layer by: \(\din(\theta), \dhead(\theta)\) for the token and head dimensions, respectively, and \(k(\theta)\) for the key/query dimensions, and \(H(\theta)\) for the number of heads.

\paragraph{In-context mappings form.} 

For the unmasked setting, the mapping $X \mapsto \MAtt_{\theta}(X)$ can be re-written as the application of an ``in context'' function $G_\theta(X,\cdot)$ to each token,
$$
    x_i \mapsto G_\theta(X,x_i)
    \quad\text{i.e.}\quad
    \MAtt_\theta(X) = ( G_\theta(X,x_i) )_{i=1}^n,
$$
where the in-context mapping is, $\forall (X,x) \in \RR^{\din \times n} \times \RR^{\din}$,
\begin{equation}\label{eq:in-context-discr}
    G_\theta(X,x) \eqdef 
    x \!+\! \sum_{h=1}^H W^h
    \sum_{j=1}^n \frac{
         \exp\Big(
            \frac{1}{\sqrt{k}}
            \dotp{Q^h x}{K^h  x_j}
        \Big)
    }{
        \sum_{\ell=1}^n
         \exp\Big(
            \frac{1}{\sqrt{k}}
            \dotp{Q^h x}{K^h x_\ell}
        \Big)
    } V^h x_j.
\end{equation}
In the masked setting, due to the lack of permutation equivariance, it is required to track also the index $i$ of the token. The mapping $X \mapsto \MAtt_{\theta}(X)$ can then be re-written as $\MAtt_\theta(X) = ( G_\theta(X,x_i,i) )_{i=1}^n$
where the in-context mapping is, 
\begin{equation}\label{eq:in-context-discr-masked}
    G_\theta(X,x,i) \eqdef 
    x \!+\! \sum_{h=1}^H W^h
    \sum_{j=1}^i \frac{
         \exp\Big(
            \frac{1}{\sqrt{k}}
            \dotp{Q^h x}{K^h  x_j}
        \Big)
    }{
        \sum_{\ell=1}^i
         \exp\Big(
            \frac{1}{\sqrt{k}}
            \dotp{Q^h x}{K^h x_\ell}
        \Big)
    } V^h x_j.
\end{equation}
Here, the terminology ``in context'' refers to the fact that \( G_\theta(X,\cdot) \) depends on the tokens \(X\) themselves, and can thus be seen as a parametric map that is modified for each token depending on its interactions with the other tokens. While this re-writing is equivalent to the original one, it highlights the fact that transformers define spatial mappings. This also allows us to clearly state the associated mathematical question at the core of this paper, which is the approximation of arbitrary in-context mappings by (compositions of) such parametric maps. Another interest in this reformulation is that it enables the definition of generalized attention operating over a possibly infinite number of tokens, as explained in Section~\ref{sec:measure-theo}.

\paragraph{Composition of in-context mappings.}

A transformer (ignoring normalization layers at this moment) is a composition of \(L\) attention layers and Multi-Layer Perceptrons (MLP):
\begin{equation}\label{eq:transformers-expansion}
    \MLP_{\xi_L} \circ \MAtt_{\theta_L} \circ 
    \ldots
    \circ
    \MLP_{\xi_1} \circ \MAtt_{\theta_1}.
\end{equation}
Here, the \(\MLP_{\xi}\) functions process each token independently from one another:
$$
    \MLP_{\xi}(X) = ( F_{\xi}(x_i) )_{i=1}^n, 
$$
i.e., they are ``context-free'' mappings (in the above notation, \(F_{\xi}(X, x) = F_{\xi}(x)\)), while the attention maps, \(G_\theta(X, \cdot)\) depend on the context \(X\).

On the level of in-context mappings, the composition of layers in \eqref{eq:transformers-expansion} induces a new ``in-context'' composition rule, which we denote by \(\diamond\):
\begin{align}\label{eq:composition-discr}
    (G_2 \diamond G_1)(X, x) \eqdef 
    G_2( X_1, G_1(X, x))
    \;\text{where}\;
    X_1 \eqdef (G_1(X, x_i))_{i=1}^n,
\end{align}
for the unmasked case, and 
\begin{equation}\label{eq:composition-discr-masked}
    (G_2 \diamond G_1)(X, x, i) \eqdef 
    G_2( X_1, G_1(X, x, i), i)
    \; \text{where} \;
    X_1 \eqdef (G_1(X, x_i, i))_{i=1}^n,
\end{equation}
for the masked case.
This rule can be applied whether \(G_1(X, \cdot)\) or \(G_2(X, \cdot)\) depends on the context \(X\) or not (such as for the \(F_\xi\) mappings above).
Using this rule, the transformer's definition in~\eqref{eq:transformers-expansion} translates into a composition of in-context and context-free maps, i.e.,
\begin{equation}\label{eq:defn-transfo-discr}
    F_{\xi_L} \diamond G_{\theta_L} \diamond 
    \ldots
    \diamond
    F_{\xi_1} \diamond G_{\theta_1}.
\end{equation}
The core question this paper addresses is the uniform approximation of a continuous (in a suitable topology) in-context maps \((X, x) \mapsto G(X, x)\) or \((X, x, i) \mapsto G(X, x, i)\) by transformers' in-context mappings of the form of~\eqref{eq:transformers-expansion}, with clear control of the dimensions and the number of heads involved in the different layers. 
%
The main originality of our approach is that we aim to do so for an arbitrary number \(n\) of tokens, as we now explain.

\subsection{Measure-theoretic in-context mappings: Unmasked setting}
\label{sec:measure-theo}

A first key observation is that the definition in~\eqref{eq:in-context-discr} makes sense irrespective of the number $n$ of tokens. 
The second key observation is that, in the un-masked case, $M_{i,j}=1$, the attention mapping is permutation equivariant.
To make this more explicit, and also handle the limit of an infinite number of tokens, we represent a set $X$ of tokens using a probability distribution $\mu \in \Pp(\RR^{\din})$ over $\RR^{\din}$. A finite number of tokens is encoded using a discrete empirical measure,
\begin{equation}\label{eq:empirical}
    \mu = \frac{1}{n} \sum_{i=1}^n \delta_{x_i} \in \Pp(\mathbb{R}^{\din}). 
\end{equation}
This encoding is not only for notional convenience, it also allows us to define clearly a correct notion of smoothness for the in-context mappings. This smoothness corresponds to the displacement of the tokens and is quantified through the optimal transport distance as presented in Section~\ref{sec:notations}. This enables us to compare context with different sizes and, for instance, to compare a set of tokens with a large (but finite) $n$ to a continuous distribution. 
The in-context mapping in~\eqref{eq:in-context-discr} is now defined as, $\forall (\mu, x) \in \Pp(\RR^{\din}) \times \RR^{\din}$, 
\begin{equation}\label{eq:in-context-measures-skip}
    \Gamma_\theta(\mu,x) \eqdef x + 
    \sum_{h=1}^H W^h
    \int \frac{
        \exp\Big(
            \frac{1}{\sqrt{k}}
            \dotp{Q^h x}{K^h y}
        \Big)
    }{
        \int 
        \exp\Big(
            \frac{1}{\sqrt{k}}
            \dotp{Q^h x}{K^h z}
        \Big)
        \d \mu(z)
    } V^h y \d \mu(y).
\end{equation}
The discrete case is contained in this more general definition in the sense that
$$
    \forall X=(x_i)_{i=1}^n, \quad
    G_\theta(X,x) = \Gamma_\theta\Big( \frac{1}{n} \sum_{i=1}^n \delta_{x_i},x \Big).
$$
In the following, we will invoke, whenever convenient, the following slight abuse of notation,
$$
	\Gamma_\theta(\mu,x) = \Gamma_\theta(\mu)(x), 
$$
so that $\Gamma_\theta(\mu) : \RR^{\din} \to \RR^{\dout}$ defines a map between Euclidean spaces.
Using this general definition, the attention map $X \mapsto \MAtt_\theta(X)$ can be rewritten as displacing the tokens' positions, which corresponds to applying a push-forward to the measure as defined in \eqref{eq:dfn-pushfwd},
$$
    \mu \in \Pp(\RR^{\din}) \:\longmapsto\: \Gamma_\theta(\mu)_\sharp \mu \in \Pp(\RR^{\dout}). 
$$
This formulation of transformers as a mapping between probability measures was introduced in~\cite{sander2022sinkformers} and also used in~\cite{castin2024smooth} 
to prove a convergence result of deep transformers. We re-use it here but put emphasis on the in-context mapping itself, which is the object of interest of this paper (rather than on studying the mapping between measures). 

\paragraph{Composition of in-context unmasked measure-theoretic mappings.} 

The definition of composition in~\eqref{eq:composition-discr} generalizes to the measure-theoretic setting in the unmasked setting as
\begin{equation}\label{eq:compos-measures}
    (\Gamma_2 \diamond \Gamma_1)(\mu,x) \eqdef 
    \Gamma_2( \mu_1, \Gamma_1(\mu,x) ),
    \quad\text{where}\quad
    \mu_1 \eqdef \Gamma_1(\mu)_\sharp \mu, 
\end{equation}
i.e., $(\Gamma_2 \diamond \Gamma_1)(\mu) = \Gamma_2(\mu_1) \circ \Gamma_1(\mu)$. 
Transformers operating over an arbitrary (possibly infinite) number of tokens are then obtained by replacing the original definition of~\eqref{eq:defn-transfo-discr} by
\begin{equation}\label{eq:defn-transfo-cont}
    F_{\xi_L} \diamond \Gamma_{\theta_L} \diamond 
    \ldots
    \diamond
    F_{\xi_1} \diamond \Gamma_{\theta_1}.
\end{equation}
Here, the $F_\xi$ are ``context-free'' MLP mappings, i.e., $F_\xi(\mu,x)=F_\xi(x)$ is independent of $\mu$.
It is important to keep in mind that when restricted to finite discrete empirical measures of the form of~\eqref{eq:empirical}, definitions in~\eqref{eq:defn-transfo-discr} and in~\eqref{eq:defn-transfo-cont} coincide. Our theory encompasses classical transformers as well as their ``mean field'' limits operating over arbitrary measures.

\subsection{Measure-theoretic in-context mappings: Masked setting}
\label{sec:masked-self} 

In the masked setting (for NLP or time series applications), $M_{i,j}=1_{j \leq i}$, the attention mappings are not any more permutation equivariant. To restore this invariance, and be able to write in-context mappings using measures, we introduce a space-time lifting so that the input tokens are of the form $\{x_i, t_i\}_{i=1}^{n}$, where $t_i \in [0,1]$. For instance, assuming an upper bound, $N$, on the number of tokens, one can use $t_i=i/N$, but it is also possible to assume that the $t_i$ are positioned arbitrarily in $[0,1]$, which enables considering an arbitrarily large (and even infinite) number of tokens. 

We thus let the context be encoded as a space-time measure $\mu \in \mathcal{P}(\RR^{\din} \times [0,1])$. Similarly to \cite[Definition 2.6]{castin2024smooth}, we introduced the in-context map, $\forall (x,t) \in \RR^{\din} \times [0,1]$,  
\begin{equation}
\label{eq:def-causal-attension}
    \Gamma_{\theta}(\mu, x, t) \eqdef
    x + \sum_{h=1}^H W^h
    \int \frac{
        \exp\left(
            \frac{1}{\sqrt{k}}
            \langle Q^h x, K^h y \rangle
        \right)
        1_{[0,t]}(r)
    }{
        \int 
        \exp\left(
            \frac{1}{\sqrt{k}}
            \langle Q^h x, K^h z \rangle
        \right) 1_{[0,t]}(s)
        \, \d \mu(z, s)
    } V^h y \, \d \mu(y, r), 
\end{equation}
where \(1_{[0,t]}(s)\) is a masking function that is \(1\) if \(0 \le s \le t\) and \(0\) otherwise. 
For a finite number of tokens, using a discrete measure, $\mu=\frac{1}{n}\sum_{i=1}^{n}\delta_{(x_i,i/n)}$, one retrieves the initial definition in~\eqref{eq:in-context-discr-masked} in the masked case. 

\paragraph{Composition of in-context masked measure-theoretic mappings.} 

The composition rule in the masked setting is similar to the one in the unmasked setting (cf.~\eqref{eq:compos-measures}), except that the time position of the token is kept unchanged while the push forward acts in space,
\begin{equation}\label{eq:compos-measures:masked}
    (\Gamma_2 \diamond \Gamma_1)(\mu,x,t) \eqdef 
    \Gamma_2( \mu_1, \Gamma_1(\mu,x,t), t ),
    \quad\text{where}\quad
    \mu_1 \eqdef (\Gamma_1(\mu), \text{Id}_\RR)_\sharp \mu.
\end{equation}
Here, $(\Gamma_1(\mu),\text{Id}_\RR) : (x,t) \in \RR^{\din+1} \to (\Gamma_1(\mu)(x,t), t) \in \RR^{\din+1}$ (the time is kept unchanged). 
Equipped with this definition, one retrieves the composition rule in~\eqref{eq:composition-discr-masked} when the measure $\mu$ is discrete.

\section{Universality in the unmasked case}
\label{sec-universality-unmasked}

\subsection{Statement of the result and discussion}

Our first main result is the following universal approximation theorem for unmasked in-context mappings.

\begin{theorem}\label{thm:main}
	Let $\Omega \subset \RR^d$ be a compact set and $\Lambda^\star : \Pp(\Omega) \times \Omega \rightarrow \RR^{d'}$ 
 be continuous, where $\Pp(\Omega)$ is endowed with the weak$^*$ topology. 
	Then for all $\varepsilon>0$, there exist $L$ and parameters $(\theta_\ell,\xi_\ell)_{\ell=1}^L$, such that 
	$$
		\forall (\mu,x) \in \Pp(\Omega) \times \Omega , \quad
		|
			 F_{\xi_L} \diamond \Gamma_{\theta_L} \diamond 
    \ldots
    \diamond
    F_{\xi_1} \diamond \Gamma_{\theta_1}(\mu,x) - \Lambda^\star(\mu,x)
		| \leq \varepsilon,
	$$
	with $\din(\theta_\ell) \leq d+3d'$, $\dhead(\theta_\ell) = k(\theta_\ell) = 1$, $H(\theta_\ell)\leq d'$.
\end{theorem}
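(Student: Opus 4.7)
The strategy is to reduce, via Stone--Weierstrass, to approximating a continuous function of finitely many linear moments of $\mu$, and then to realize such a function with a deep transformer that iteratively extracts these moments using the residual structure of attention.

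\emph{Step 1 (Reduction to finitely many moments).} Since $\Omega$ is compact metric, $\Pp(\Omega)$ equipped with the weak$^*$ topology is compact metrizable. The sub-algebra of $\mathcal{C}(\Pp(\Omega) \times \Omega ; \RR)$ generated by the moment evaluations $(\mu,x) \mapsto \int \phi \d\mu$ with $\phi \in \mathcal{C}(\Omega)$ together with the pure $x$-functions $(\mu, x) \mapsto g(x)$ with $g \in \mathcal{C}(\Omega)$ contains the constants and separates points, since probability measures on a compact metric space are determined by their integrals against continuous functions. Applying Stone--Weierstrass coordinate-wise to $\Lambda^\star$ yields, for any prescribed $\varepsilon/2$, an approximant
\[
    \widetilde{\Lambda}(\mu, x) = \Phi\bigl( x,\, \textstyle \int \phi_1 \d\mu, \ldots, \int \phi_m \d\mu \bigr),
\]
with $m \in \mathbb{N}$, $\phi_1, \ldots, \phi_m \in \mathcal{C}(\Omega; \RR)$ and $\Phi \in \mathcal{C}(\Omega \times \RR^m; \RR^{d'})$.

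\emph{Step 2 (Extracting one scalar moment with a single attention head).} The key calculation is that taking $Q^h = 0$ collapses the softmax weights to the constant $1$, so the $h$-th head reduces to the linear functional $W^h \int V^h y \d\mu(y)$. Preceding the attention layer by an MLP that writes the feature $\phi_i(x)$ into a scratch coordinate of every token (possible by the classical universal approximation theorem on the compact $\Omega$), and then choosing $V^h$ to read that scratch coordinate and $W^h$ to inject the resulting scalar into a designated accumulator coordinate via the residual update, the combined block deposits $\int \phi_i \d\mu$ into the accumulator coordinate of every token, while the other coordinates are preserved thanks to zero padding of the remaining rows and columns of $K^h, Q^h, V^h, W^h$. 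With up to $H = d'$ heads per layer, one extracts $d'$ moments in parallel per attention layer.

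\emph{Step 3 (Layout, iteration, final post-processing).} Split the $d + 3d'$ embedding coordinates into four blocks: $d$ coordinates preserving $x$; $d'$ scratch coordinates feeding the attention's value maps; $d'$ accumulator coordinates receiving the freshly extracted moments; and $d'$ buffer coordinates storing a running continuous function of $x$ and of the moments absorbed so far. An initial context-free MLP lifts the raw input $x \in \RR^d$ into $\RR^{d+3d'}$ by zero-padding. For $\ell = 1, \ldots, \lceil m/d' \rceil$, one MLP--attention--MLP block (a) writes the batch of features $\phi_{(\ell-1)d'+j}(x)$ into the scratch coordinates, (b) uses $d'$ heads with $Q = 0$ to move $\int \phi_{(\ell-1)d'+j} \d\mu$ into the accumulator, (c) updates the buffer via an MLP combining it with the freshly computed moments, and clears the scratch. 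After all moments are absorbed, a final MLP, universal on the compact image carved out in the buffer, approximates $\Phi$ within $\varepsilon/2$ and writes the result into the first $d'$ buffer coordinates, from which a trivial linear MLP reads out the $\RR^{d'}$-valued output.

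\emph{Main obstacle.} The substantive point is the Stone--Weierstrass reduction of Step 1, which depends crucially on the compactness of $\Pp(\Omega)$ in weak$^*$ (hence on the compactness of $\Omega$) and on the choice of an algebra that separates points on both factors of $\Pp(\Omega) \times \Omega$. The remainder is dimensional bookkeeping: ensuring by zero padding that each attention block writes only to the accumulator block and leaves the $x$-, scratch- and buffer-blocks untouched, and propagating the approximation errors forward through a finite composition of maps that are Lipschitz on a compact range so that the accumulated error stays below $\varepsilon$. The constraint $\dhead = k = 1$ is tight in this scheme, since each head contributes one scalar moment; one needs $H \leq d'$ to parallelize the extraction enough that the depth scales as $m/d'$ rather than $m$, and the $3d'$ extra embedding coordinates are precisely the scratch/accumulator/buffer triple needed to sustain the iteration at constant width.
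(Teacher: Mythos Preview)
Your route differs from the paper's and is simpler in one key place. By taking $Q^h=0$ you collapse the softmax to uniform weights, so each head becomes the linear functional $W^h\int V^h y\,\mathrm{d}\mu(y)$; preceding this by an MLP that writes $\phi_i$ into a scratch coordinate then yields $\int\phi_i\,\mathrm{d}\mu$. This bypasses the paper's main technical lemma, namely the injectivity of the Laplace-like transform
\[
L(\mu)(a,c)=\frac{\int e^{c\langle a,y\rangle}\langle a,y\rangle\,\mathrm{d}\mu(y)}{\int e^{c\langle a,z\rangle}\,\mathrm{d}\mu(z)},
\]
which is needed there because the paper's elementary building blocks use genuine (nonzero $Q$) attention preceded only by an \emph{affine} map, so point-separation on $\Pp(\Omega)$ must come from the attention kernel itself rather than from a nonlinear MLP.

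However, Step~3 as written has a genuine gap. You absorb the $m$ moments in $\lceil m/d'\rceil$ batches into a $d'$-dimensional buffer and then let a single final MLP approximate $\Phi$ from the buffer alone. When $m>d'$ this cannot work: a continuous $d'$-dimensional running state cannot in general encode enough of $(M_1,\dots,M_m)$ to reconstruct $\Phi(x,M_1,\dots,M_m)$ downstream. Concretely, with $d'=1$ and $\Phi(M_1,M_2,M_3)=M_1 M_3+M_2 M_3^2$, after two batches the scalar buffer $z=z(M_1,M_2)$ would have to determine both $M_1$ and $M_2$ (evaluating at $M_3=1$ and $M_3=2$ recovers two independent linear combinations of them), hence be injective on a two-dimensional set, which no continuous scalar map can be. The fix is to use what Stone--Weierstrass actually delivers: your $\widetilde\Lambda$ is not an arbitrary continuous $\Phi$ but a \emph{polynomial} in the generators, i.e.\ a finite sum of finite products, and such an expression can be evaluated factor by factor and term by term with only a running product ($d'$ coordinates) and a running sum ($d'$ coordinates), re-extracting each needed moment on the fly from the preserved $x$-block via a fresh MLP/attention pair. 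That is exactly the four-block layout ($x$; affine feed for the next head; running product $\varphi$; running sum $f$) of the paper's representation lemma, and it is precisely what makes the width budget $d+3d'$ sufficient.
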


The two strengths of this result are (i) the approximating architecture performs the approximation independently of \(n\) (it even works for an infinite number of tokens), and (ii) the number of heads and the embedding dimension do not depend on \(\varepsilon\).

A weakness is that we have no explicit control over the dependency of the number of MLP parameters \(\xi_\ell\) on \(\varepsilon\). Another limitation of our proof technique is that the number of heads grows proportionally to the output dimension while each head only outputs a scalar \(\dhead(\theta_\ell) = 1\). Obtaining a better balance between these two parameters is an interesting problem.
As explained in the proof, these MLPs approximate a real-valued squaring operator \(\RR \ni a \mapsto a^2\), so we expect this dependency to be well-behaved in common situations; however, our construction does not provide any a priori bound on how the magnitude of the tokens grows through the layers.
The main hypothesis of Theorem~\ref{thm:main} is that the underlying map, \(\Lambda^\star\), is a smooth (at least continuous) map for the weak\(^*\) topology over measures (see Section~\ref{sec:notations} for some background). Since our results are not quantitative, this is not a strong restriction, and it enables a unifying study of transformers for any number, \(n\), of tokens. However, this setting might not be a proper one for conducting further quantitative studies; we leave these for future work.


\subsection{Proof of Theorem~\ref{thm:main}}
\label{sec:proof-unmasked}

We first consider ``elementary'' in-context mappings, which map $(x, \mu)$ to a real variable
\begin{equation}
\label{eq:elementary}
\gamma_\lambda(\mu, x) \coloneqq \dotp{x}{a} + b + \int \frac{
    e^{ c \left( \dotp{x}{a} + b \right) \left(\dotp{y}{a} + b \right) } v \left( \dotp{a}{y} + b \right)
}{
    \int e^{ c \left( \dotp{x}{a} + b \right) \left(\dotp{z}{a} + b \right) } \mathrm{d} \mu(z)
} \, \mathrm{d} \mu(y),
\end{equation}
where $\lambda \coloneqq (a, b, c, v) \in \mathbb{R}^d \times \mathbb{R} \times \mathbb{R} \times \mathbb{R}$. These elementary mappings are built by composing an affine scalar-valued MLP with a single-head attention (with skip connection) as in \eqref{eq:in-context-measures-skip}, operating in 1-D. Indeed, defining $F_\xi(x) = \dotp{a}{x} + b \in \mathbb{R}$ as an affine MLP, where $\xi = (a, b)$, we have
\[
    \gamma_\lambda(\mu,x) = (\Gamma_\theta \diamond F_\xi)(\mu,x),
\]
where $\theta = (k, q, v) \in \mathbb{R}^3$ (recalling that this attention operates in 1-D), and we let $c = qk$.

We now define $\mathcal{A}$, the algebra spanned by these elementary functions:
\[
\mathcal{A} \eqdef \Big\{ 
\Pp(\Omega) \times \Omega \ni (\mu, x) \mapsto \sum_{n=1}^N \gamma_{\lambda_{1,n}}(\mu, x) \odot \cdots \odot \gamma_{\lambda_{T,n}}(\mu, x) \in \RR : N, T \in \mathbb{N} \Big\}.
\]
The first main ingredient of the proof is to show that this algebra is dense. Elements of this algebra are sums of products of elementary functions, which are often referred to as ``cylindrical functions''.

\begin{proposition}
\label{prop-A-dense}
$\mathcal{A}$ is dense in the space of $(\text{weak}^* \times \ell^2)$-continuous functions from $\mathcal{P}(\Omega) \times \Omega$ to $\mathbb{R}$.
\end{proposition}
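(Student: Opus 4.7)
The plan is to invoke the Stone--Weierstrass theorem on the compact Hausdorff space $\mathcal{P}(\Omega) \times \Omega$: $\Omega$ is compact by assumption, and $\mathcal{P}(\Omega)$ equipped with the weak$^*$ topology is compact (Banach--Alaoglu) and metrizable (since $\Omega$ is a compact metric space), hence Hausdorff. Three conditions must be verified: (i) $\mathcal{A} \subset \mathcal{C}(\mathcal{P}(\Omega) \times \Omega)$ is a subalgebra, (ii) it contains the constants, and (iii) it separates points.

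For (i) and (ii), writing $\alpha := \dotp{x}{a}+b$ and $\beta(y) := \dotp{a}{y}+b$ for $\lambda = (a,b,c,v)$, the integrand $e^{c\alpha\beta(y)}\,v\,\beta(y)$ in the numerator of $\gamma_\lambda$ is jointly continuous and bounded on $\Omega \times \Omega$, and the denominator $\int e^{c\alpha\beta(z)}\,\d\mu(z)$ is uniformly bounded below by a strictly positive constant depending only on $\lambda$. A standard $\varepsilon/2$ argument combining weak$^*$ continuity of $\mu \mapsto \int f(x,y)\,\d\mu(y)$ with uniform continuity of $f$ in $x$ then yields joint continuity of $\gamma_\lambda$. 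By construction $\mathcal{A}$ is closed under sums and products of its generators, and the choice $\lambda = (0,b,0,0)$ produces the constant function $b \in \mathcal{A}$ (from which negation is obtained by multiplication by $-1$), so $\mathcal{A}$ is a unital subalgebra of $\mathcal{C}(\mathcal{P}(\Omega) \times \Omega)$.

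The core step is (iii). If $x_1 \neq x_2$, choosing $a$ with $\dotp{a}{x_1 - x_2}\neq 0$ and $\lambda = (a,0,0,0)$ yields the context-free map $\gamma_\lambda(\mu,x) = \dotp{a}{x}$, which separates. If $x_1 = x_2 =: x$ but $\mu_1 \neq \mu_2$, I would set $v = 1$ and reparametrize by $s := c\alpha = c(\dotp{x}{a}+b)$, so that $c\alpha\beta(y) = s\dotp{a}{y} + sb$; cancelling the common factor $e^{sb}$ in the numerator and denominator of $\gamma_\lambda$ gives
\[
\gamma_\lambda(\mu, x) \;=\; \dotp{x}{a} + 2b + \frac{d}{ds}\log M_\mu(s; a), \qquad M_\mu(s; a) := \int_\Omega e^{s\dotp{a}{y}}\,\d\mu(y).
\]
If $\gamma_\lambda(\mu_1,x) = \gamma_\lambda(\mu_2,x)$ for every $\lambda$, then (fixing $a$ and varying $b,c$ realizes every $s \in \mathbb{R}$) the log-moment-generating-functions satisfy $\partial_s\log M_{\mu_1}(s;a) = \partial_s\log M_{\mu_2}(s;a)$ for all $(s,a) \in \mathbb{R} \times \mathbb{R}^d$. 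Integrating from $s = 0$, where both sides vanish, forces $M_{\mu_1} \equiv M_{\mu_2}$. Since the family $\{y \mapsto e^{\dotp{t}{y}}\}_{t \in \mathbb{R}^d}$ separates points of $\Omega$ and is closed under products, Stone--Weierstrass makes its linear span dense in $\mathcal{C}(\Omega)$, and therefore $\int f \,\d\mu_1 = \int f \,\d\mu_2$ for every $f \in \mathcal{C}(\Omega)$, contradicting $\mu_1 \neq \mu_2$.

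The main obstacle is exactly this measure-separation step: the most naive specialization $c = 0$ yields only the first moments of $\mu$, which cannot distinguish probability measures in general. The resolution is to exploit the nonlinear exponential coupling between $x$ and $y$ inside the attention kernel: the reparametrization $s = c\alpha$ identifies the attention ratio (minus the skip connection) with the derivative of a log-moment-generating-function, a quantity rich enough to determine $\mu$ uniquely on the compact set $\Omega$. Once (i)--(iii) are established, Stone--Weierstrass concludes that $\mathcal{A}$ is uniformly dense in $\mathcal{C}(\mathcal{P}(\Omega) \times \Omega)$.
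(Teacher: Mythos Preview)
Your proposal is correct and follows the same Stone--Weierstrass scaffolding as the paper (compactness of $\mathcal{P}(\Omega)\times\Omega$, continuity and constants via special choices of $\lambda$, then separation). The genuine difference lies in how you handle the measure-separation step. The paper specializes to $b=1-\dotp{a}{x}$ (so $\alpha=1$) and is left with the transform $L(\mu)(a,c)=\int e^{c\dotp{a}{y}}\dotp{a}{y}\,\d\mu(y)\big/\int e^{c\dotp{a}{z}}\,\d\mu(z)$; it then proves injectivity of $L$ by a differential recursion $L_k'=L_{k+1}-L_kL_1$ on the ratios $L_k(\mu)(c)=\int e^{cy}y^k\,\d\mu/\int e^{cz}\,\d\mu$ in one dimension (recovering all moments at $c=0$), and lifts to $\RR^d$ via injectivity of the Radon transform on the projections $(P_e)_\sharp\mu$. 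Your route is shorter: you observe directly that the attention ratio, after the substitution $s=c\alpha$, equals $\partial_s\log M_\mu(s;a)$ for the moment generating function $M_\mu(s;a)=\int e^{s\dotp{a}{y}}\,\d\mu(y)$; integrating from $s=0$ (where $M_\mu=1$) yields $M_{\mu_1}\equiv M_{\mu_2}$, and a second, inner Stone--Weierstrass on the multiplicative family $\{e^{\dotp{t}{\cdot}}\}_{t\in\RR^d}$ identifies the measures. This bypasses both the recursion and the Radon-transform step. One cosmetic point worth making explicit is that, to sweep out all $s\in\RR$ for a fixed $a$, it suffices to fix any $b$ with $\alpha=\dotp{x}{a}+b\neq 0$ (the paper's choice $b=1-\dotp{a}{x}$ gives $\alpha=1$, hence $s=c$) and vary $c$; the additive $\dotp{x}{a}+2b$ then cancels across $\mu_1,\mu_2$ as you implicitly use.
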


\begin{proof}
We apply the Stone-Weierstrass theorem. First, we note that $\mathcal{P}(\Omega) \times \Omega$ is compact for the $(\text{weak}^* \times \ell^2)$ topology \cite[Theorem 15.11]{guide2006infinite}. We then check the three key hypotheses needed to apply the Stone-Weierstrass theorem:
\begin{enumerate}
    \item The functions $\gamma_\lambda$ are continuous because the denominator $\int e^{ c \left( \dotp{x}{a} + b \right) \left(\dotp{z}{a} + b \right) } \mathrm{d} \mu(z)$ in the elementary mapping is not always zero for any $\mu \in \Pp(\Omega)$ and $x \in \Omega$.
    \item When setting $a = 0, b = v = 1$, one has that $\gamma_\lambda(\mu, x) = 1$ is the constant function.
    \item We need to show that the set $(\gamma_\lambda)_\lambda$ separates points, which is more challenging.
\end{enumerate}
For the last one, we need to show that if 
\begin{equation}
\label{eq:inj}
\forall \lambda, \gamma_\lambda(\mu, x) = \gamma_\lambda(\mu', x'),
\end{equation}
then $(\mu, x) = (\mu', x')$. First setting $v = 0$, this implies that $\dotp{x}{a} = \dotp{x'}{a}$ for all $a \in \mathbb{R}^d$ and, hence, $x = x'$. Then, setting $b = 1 - \dotp{a}{x}$, \eqref{eq:inj} reads $L(\mu)(a, c) = L(\mu')(a, c)$ where we defined a generalized Laplace-like transform
\begin{equation}
\label{eq:Laplace}
L(\mu)(a, c) \coloneqq \int \frac{ e^{c \dotp{a}{y}} \dotp{a}{y} }{ \int e^{c \dotp{a}{z}} \, \mathrm{d} \mu(z) } \, \mathrm{d} \mu(y).
\end{equation}
We conclude that $\mu = \mu'$ using the following key lemma. 
\end{proof}

\begin{lemma}
\label{lem:inj-Laplace}
The map $\mu \mapsto L(\mu)$ defined in \eqref{eq:Laplace} is injective.
\end{lemma}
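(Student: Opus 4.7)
The plan is to recognize $L(\mu)(a,c)$ as a logarithmic derivative of a Laplace transform and then exploit the injectivity of the Laplace transform on compactly supported measures.

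First, I would introduce the Laplace transform $M_\mu(a,c) \eqdef \int e^{c\dotp{a}{y}}\,\d\mu(y)$, which is well-defined, strictly positive, and smooth in $(a,c)$ since $\Omega$ is compact (dominated convergence gives differentiation under the integral sign with no integrability issues). A direct computation then yields
\begin{equation*}
    \partial_c M_\mu(a,c) = \int \dotp{a}{y}\, e^{c\dotp{a}{y}}\,\d\mu(y),
    \qquad
    L(\mu)(a,c) = \frac{\partial_c M_\mu(a,c)}{M_\mu(a,c)} = \partial_c \log M_\mu(a,c).
\end{equation*}

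Suppose $L(\mu) = L(\mu')$. Then $\partial_c \log M_\mu(a,c) = \partial_c \log M_{\mu'}(a,c)$ for every $(a,c) \in \RR^d \times \RR$. Since $M_\mu(a,0) = M_{\mu'}(a,0) = 1$, integrating in $c$ from $0$ gives $\log M_\mu(a,c) = \log M_{\mu'}(a,c)$, and thus $M_\mu(a,c) = M_{\mu'}(a,c)$ for all $(a,c)$. Specialising to $c=1$, one has $\int e^{\dotp{a}{y}}\,\d\mu(y) = \int e^{\dotp{a}{y}}\,\d\mu'(y)$ for every $a \in \RR^d$.

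To conclude $\mu = \mu'$, I would invoke Stone--Weierstrass on the compact set $\Omega$: the family $\{y \mapsto e^{\dotp{a}{y}} : a \in \RR^d\}$ is closed under products (since $e^{\dotp{a_1}{y}} e^{\dotp{a_2}{y}} = e^{\dotp{a_1+a_2}{y}}$), contains the constants (take $a=0$), and separates points of $\Omega$. Its linear span is therefore a unital subalgebra of $\mathcal{C}(\Omega)$ that separates points, hence dense in $\mathcal{C}(\Omega)$. Since $\mu$ and $\mu'$ are finite Borel measures agreeing on this dense family, they agree on all of $\mathcal{C}(\Omega)$, and by Riesz representation $\mu = \mu'$.

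The only genuinely delicate step is the passage from equality of Laplace transforms to equality of measures; everywhere else, compactness of $\Omega$ trivialises the analytic subtleties. Once that step is packaged via Stone--Weierstrass, the proof reduces to the clean identification $L = \partial_c \log M$ and a single integration in $c$.
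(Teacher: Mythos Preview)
Your proof is correct and cleaner than the paper's, but it takes a genuinely different route. The paper first treats the one-dimensional case by observing that $L_k(\mu)'(c) = L_{k+1}(\mu)(c) - L_k(\mu)(c)L_1(\mu)(c)$, so equality of $L_1$ propagates by recursion to equality of all $L_k$; evaluating at $c=0$ recovers all moments $\int y^k\,\d\mu(y)$, whence $\mu=\nu$. The multidimensional case is then reduced to the one-dimensional one by noting that $L(\mu)(e,c)$ for $e\in\mathbb{S}^d$ is the one-dimensional transform of the pushforward $(P_e)_\sharp\mu$, and concluding via injectivity of the Radon transform.

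Your argument bypasses both the moment recursion and the Radon transform: the identification $L(\mu)(a,c)=\partial_c\log M_\mu(a,c)$ together with the normalisation $M_\mu(a,0)=1$ immediately yields equality of the full Laplace transforms, and Stone--Weierstrass on the multiplicative family of exponentials handles the $d$-dimensional conclusion directly. This is shorter and relies on more elementary tools (no Radon transform), at the modest cost of being slightly less explicit about which functionals of $\mu$ are actually determined by $L(\mu)$.
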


See Appendix~\ref{app:inj-Laplace} for a detailed proof.
To approximate vector-valued in-context mappings, we use the previous algebra of cylindrical functions along each dimension. 
Since the elementary mapping, $\gamma_{\lambda}$, is built by the composing an affine MLP and single-head attention, we arrive at the following lemma.

\begin{lemma}
\label{lem:vector-valued-in-context}
For any $\varepsilon > 0$, there exist $T, N \in \mathbb{N}$ and  
$(\tilde{\theta}_{t,n}, \tilde{\xi}_{t,n})_{t \in [T], n \in [N]}$ such that 
$$
\forall (\mu,x) \in \Pp(\Omega) \times \Omega ,\quad
\left|
G(\mu, x)
- \Lambda^\star(\mu,x)
\right| \leq \varepsilon,
$$
\begin{equation}
\label{eq:def-G}
\text{where}\quad
G(\mu, x) \eqdef 
\sum_{n=1}^{N} (\Gamma_{\tilde{\theta}_{1,n}} \diamond F_{\tilde{\xi}_{1,n}})(\mu,x) \odot \cdots \odot (\Gamma_{\tilde{\theta}_{T,n}} \diamond F_{\tilde{\xi}_{T,n}})(\mu, x),
\end{equation}
with $\din(\tilde{\theta}_{t,n}) = d'$, $\dhead(\tilde{\theta}_{t,n}) = k(\tilde{\theta}_{t,n}) = 1$, $H(\tilde{\theta}_{t,n})= d'$.
\end{lemma}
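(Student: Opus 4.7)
The plan is to reduce the vector-valued approximation to $d'$ scalar applications of Proposition~\ref{prop-A-dense}, one per output coordinate, and then repackage the resulting scalar cylindrical approximants into a single vector-valued one via a multi-head attention with exactly one head per output coordinate.

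First, write $\Lambda^\star = (\Lambda^\star_i)_{i=1}^{d'}$ and apply Proposition~\ref{prop-A-dense} to each continuous scalar component $\Lambda^\star_i : \Pp(\Omega) \times \Omega \to \RR$ to obtain $T_i, N_i \in \mathbb{N}$ and parameters $\lambda^i_{t,n}$ with
$$
\sup_{(\mu,x)} \Bigl| \sum_{n=1}^{N_i} \gamma_{\lambda^i_{1,n}}(\mu,x) \cdots \gamma_{\lambda^i_{T_i,n}}(\mu,x) - \Lambda^\star_i(\mu,x) \Bigr| \le \varepsilon/\sqrt{d'}.
$$
Since both the constant $1$ (take $a=0$, $b=1$, $v=0$) and the constant $0$ (take $a=b=v=0$) are realized as elementary maps $\gamma_\lambda$, I can pad the shorter products by extra factors equal to $1$ and the shorter sums by extra summands (themselves $T$-fold products containing a zero factor) equal to $0$ to make $T_i = T$ and $N_i = N$ independent of $i$, without altering any approximation.

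Next, I would package the $d'$ scalar approximants at a fixed pair $(t,n)$ into one vector-valued elementary map. Writing $\lambda^i_{t,n} = (a^i_{t,n}, b^i_{t,n}, c^i_{t,n}, v^i_{t,n})$, define the MLP $F_{\tilde{\xi}_{t,n}} : \RR^d \to \RR^{d'}$ by setting its $i$-th component to $\dotp{a^i_{t,n}}{x} + b^i_{t,n}$, and define the $d'$-head attention $\Gamma_{\tilde{\theta}_{t,n}}$ by choosing, for each head $h \in [d']$, $Q^h = e_h^\top \in \RR^{1 \times d'}$, $K^h = c^h_{t,n}\, e_h^\top$, $V^h = v^h_{t,n}\, e_h^\top$, and mixer $W^h = e_h \in \RR^{d' \times 1}$. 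With these choices, head $h$ interacts only with the $h$-th coordinate of the lifted tokens $F_{\tilde{\xi}_{t,n}}(y)$ and deposits its scalar output solely in coordinate $h$; combining this with the skip connection that carries $F_{\tilde{\xi}_{t,n}}(x)$ yields, coordinate-wise,
$$
\bigl[ (\Gamma_{\tilde{\theta}_{t,n}} \diamond F_{\tilde{\xi}_{t,n}})(\mu,x) \bigr]_h = \gamma_{\lambda^h_{t,n}}(\mu,x).
$$

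Because the outer product in \eqref{eq:def-G} is the Hadamard product $\odot$, the $h$-th coordinate of $G(\mu,x)$ is then exactly the scalar sum-of-products approximation of $\Lambda^\star_h$, so $|G(\mu,x) - \Lambda^\star(\mu,x)| \le \varepsilon$ uniformly. The architectural bounds $\din(\tilde{\theta}_{t,n}) = d'$, $\dhead(\tilde{\theta}_{t,n}) = k(\tilde{\theta}_{t,n}) = 1$, $H(\tilde{\theta}_{t,n}) = d'$ are read off directly from the construction. The main technical point to verify is that the heads do not interfere: this is guaranteed by the twin orthogonality conditions $Q^h, K^h \propto e_h^\top$ (so head $h$ reads only coordinate $h$) and $W^h = e_h$ (so head $h$ writes only to coordinate $h$); the rest is bookkeeping of the indices $T, N$ and the $0/1$-padding inside $\mathcal{A}$.
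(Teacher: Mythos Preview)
Your proof is correct and follows essentially the same route as the paper's: apply Proposition~\ref{prop-A-dense} coordinate-wise with tolerance $\varepsilon/\sqrt{d'}$ and then assemble the $d'$ scalar approximants into a single vector via a $d'$-head attention in which head $h$ reads and writes only coordinate $h$ (the paper places $c^h$ in $\tilde Q^h$ rather than $\tilde K^h$, which is immaterial since only the product matters). Your explicit $0/1$-padding argument to uniformize $T$ and $N$ across the $d'$ coordinates is a detail the paper leaves implicit.
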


See Appendix~\ref{app:vector-valued-in-context} for the details of the proof.
We finally need to approximate $G$ in \eqref{eq:def-G} by a deep transformer of the form of~\eqref{eq:defn-transfo-discr}. To do that, we furthermore approximate the component-wise multiplication maps, $(x, y) \in \RR^{2d'} \mapsto x \odot y \in \RR^{d'}$, in \eqref{eq:def-G} by some MLPs. This way, we obtain the following lemma. 

\begin{lemma}
\label{lem:approximate-deep-trans}
For any $\varepsilon >0$, there exist $L$ and parameters $(\theta_\ell,\xi_\ell)_{\ell=1}^L$, such that 
	$$
		\forall (\mu,x) \in \Pp(\Omega) \times \Omega , \quad
		|
        G(\mu, x) - 
			 F_{\xi_L} \diamond \Gamma_{\theta_L} \diamond 
    \ldots
    \diamond
    F_{\xi_1} \diamond \Gamma_{\theta_1}(\mu,x)
		| \leq \varepsilon,
	$$
	with $\din(\theta_\ell) \leq d+3d'$, $\dhead(\theta_\ell) = k(\theta_\ell) = 1$, $H(\theta_\ell)\leq d'$.
\end{lemma}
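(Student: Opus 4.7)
The plan is to realize $G(\mu,x) = \sum_{n=1}^N \gamma_{(1,n)}(\mu,x) \odot \cdots \odot \gamma_{(T,n)}(\mu,x)$, where each $\gamma_{(t,n)} \eqdef \Gamma_{\tilde\theta_{t,n}} \diamond F_{\tilde\xi_{t,n}}$ is $\RR^{d'}$-valued, by a deep transformer that performs the sum and the products \emph{inside} a persistent token embedding. I use an augmented embedding of dimension $d+3d'$ divided into four blocks $(x, s, p, q)$ with $x \in \RR^d$ a frozen copy of the original token (kept for future attention queries), $s \in \RR^{d'}$ a running sum accumulator, $p \in \RR^{d'}$ a running coordinatewise product, and $q \in \RR^{d'}$ a scratch slot holding the freshly computed factor. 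A first attention $\Gamma_{\theta_1}$ with $V^h = 0$ acts as the identity on the raw $d$-dimensional tokens, and a first MLP $F_{\xi_1}$ lifts $x \mapsto (x,\,0,\,\mathbf{1}_{d'},\,0)$ into $\RR^{d+3d'}$.

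Next, I process the factors $(n,t)$ sequentially, using one attention layer followed by one MLP per factor. For factor $(n,t)$, the attention uses $H = d'$ heads with $\dhead(\theta_\ell) = k(\theta_\ell) = 1$: head $j$ chooses $K^j, Q^j \in \RR^{1 \times (d+3d')}$ that read only from the $x$-block and output the scalar affine combinations needed to reproduce the elementary mapping $\gamma_\lambda$ in \eqref{eq:elementary} for the $j$-th coordinate of $\gamma_{(t,n)}$, while $V^j$ outputs the corresponding scalar value and $W^j \in \RR^{(d+3d') \times 1}$ is the standard basis vector placing the head's output into the $j$-th slot of the $q$-block only. Since $q$ enters this layer equal to $0$, the built-in skip connection guarantees that the other blocks are untouched and that, on exit, $q_j$ equals exactly the integral term appearing in the $j$-th elementary scalar. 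The subsequent MLP $F_{\xi_\ell}$ then performs purely affine bookkeeping together with one non-affine operation: it adds the residual affine term $\dotp{a_{t,n,j}}{x} + b_{t,n,j}$ to each $q_j$ (thereby completing $\gamma_{(t,n)}(\mu,x)$), updates $p \leftarrow p \odot q$ coordinatewise, resets $q \leftarrow 0$, and on the last factor $t=T$ of each term additionally sets $s \leftarrow s + p$ and resets $p \leftarrow \mathbf{1}_{d'}$. A terminal MLP $F_{\xi_L}$ reads off the $s$-block as the final output.

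The main obstacle is the non-affine update $p \leftarrow p \odot q$. I handle it via the polarization identity $uv = \tfrac14\bigl[(u+v)^2 - (u-v)^2\bigr]$, which reduces coordinatewise multiplication to scalar squaring; the latter is uniformly approximable by a shallow MLP on any compact interval by the classical universal approximation theorem~\cite{cybenko1989approximation,pinkus1999approximation}. All the relevant intervals are indeed compact: the exact version of the above procedure is continuous in $(\mu,x) \in \Pp(\Omega) \times \Omega$, and $\Pp(\Omega) \times \Omega$ is $(\text{weak}^* \times \ell^2)$-compact, so each of the finitely many intermediate accumulators $(s,p,q)$ ranges over a compact subset of $\RR^{3d'}$. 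Choosing each per-MLP squaring error sufficiently small and applying standard Lipschitz error-propagation bounds through the $O(NT)$ attention--MLP pairs gives total approximation error below $\varepsilon$. The dimension accounting matches the statement: $\din(\theta_1) = d$ and $\din(\theta_\ell) = d + 3d'$ for $\ell \geq 2$, $\dhead(\theta_\ell) = k(\theta_\ell) = 1$, and $H(\theta_\ell) \leq d'$ throughout, completing the proof sketch.
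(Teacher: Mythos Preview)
Your overall architecture—a $(d+3d')$-dimensional embedding with a frozen $x$-block and running accumulators $s,p,q$, processing the $NT$ factors sequentially and approximating $\odot$ by an MLP—matches the paper's construction (Lemmas~\ref{lem:MLPs} and~\ref{lem:representation}). There is, however, a genuine gap in your attention step. You claim that $Q^j,K^j$ read from the $x$-block and ``output the scalar affine combinations'' needed, but the attention matrices in \eqref{eq:in-context-measures-skip} are \emph{linear} (no bias), so reading from the raw $x$-block yields only linear forms in $x$. After cancelling the factor $e^{c\alpha b}$ (with $\alpha=\langle a,x\rangle+b$) common to numerator and denominator, the exponent in \eqref{eq:elementary} is $c(\langle a,x\rangle+b)\langle a,y\rangle$, which for $b\neq0$ cannot be written as $(Q^jx)(K^jy)$ with $Q^j$ linear on the $x$-block: the term $cb\langle a,y\rangle$ is independent of $x$. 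You cannot assume $b=0$, since Proposition~\ref{prop-A-dense} uses $b\neq0$ both for the constant function and in the point-separation argument, and your embedding carries no constant-$1$ slot once $p$ has been updated away from $\mathbf 1_{d'}$.

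The remedy is exactly what the paper does: have the \emph{previous} MLP store $A_{t,n}x+b_{t,n}\in\RR^{d'}$ in a dedicated block (your $q$, the paper's second block), and let the attention read its $Q,K,V$ linearly from that block rather than from the raw $x$-block. The skip connection then outputs $(A_{t,n}x+b_{t,n})+\text{integral}=\bar\gamma_{\lambda_{t,n}}(\mu,x)$ directly, as in \eqref{eq:Gamma-F-form}, so your separate ``add the residual affine term'' step becomes unnecessary and the rest of your bookkeeping goes through.
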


See Appendix~\ref{app:approximate-deep-trans} for a detailed proof. 

\section{Universality in the masked case}
\label{sec-universality-masked}

\subsection{Causal maps, Lipschitz contexts, and main result}

\newcommand{\muT}{\bar\mu}
\newcommand{\OmegaT}{\tilde\Omega}
\newcommand{\LipC}{ \mathrm{Lip}_C } 
\newcommand{\LipCsig}{ \mathrm{Lip}_C^\sigma } 

As before, $\Omega \subset \RR^d$ is a compact set, and we define $\OmegaT \eqdef \Omega \times [0,1]$ as the space-time domain.
Throughout this section, $\muT \in \mathcal{P}([0,1])$ is the marginal with respect to only the time variable of some space-time measure, $\mu \in \mathcal{P}(\OmegaT)$, i.e.,
    \begin{equation}
        \muT \eqdef P_\sharp \mu
        \quad\text{where}\quad
        P :  \OmegaT \ni (x,t)  \mapsto t \in [0,1]. 
    \end{equation}
Approximation in the masked setting is more subtle than in the unmasked setting because causal attentions are typically less regular due to the masking. To cope with this difficulty, we impose additional smoothness constraints on the context, which still allow for an arbitrary number of tokens. 

\begin{definition}[Lipschitz contexts]
\label{def:Lipschitz-contexts}
        A map $t \in [0,1] \mapsto \mu(\cdot|t) \in \mathcal{P}(\Omega)$ is $C$-Lipschitz if 
        $$
            \forall (s,t) \in [0,1]^2, \quad
            W_2(\mu(\cdot|s), \mu(\cdot|t) ) \leq C |s-t|. 
        $$
        The set of space-time $C$-Lipschitz measures is  
        \begin{align*}
        \LipC(\OmegaT) 
            &
            \eqdef
            \{
                \mu \in \Pp(\OmegaT) 
                \: : \: 
                \exists \mu(\cdot|t) \text{ s.t. } 
                \mu(x,s)=\mu(x|s)\muT(s) \text{ and } \mu(\cdot|t) \text{ is } C\text{-Lipschitz}
            \}.
        \end{align*}
        The conditional measure $t \mapsto \mu(\cdot|t) \in \mathcal{P}(\Omega)$ is any valid disintegration of $\mu$ against the marginal $\muT$, and must be $C$-Lipschitz.
        We also define, $\forall \sigma \in (0,1)$, 
        \begin{align*}
            \LipCsig(\OmegaT) 
            &
            \eqdef
            \{
                \mu \in \LipC(\OmegaT) 
                \: : \: 
             \muT(\{0\})\geq \sigma
            \}.
        \end{align*}
\end{definition}

It is worth noting that these conditions are automatically satisfied by a discrete measure, $\mu = \frac{1}{n}\sum_{i=1}^n \delta_{(x_i,t_i)}$, with distinct times $\delta \eqdef \min_{i \neq j} |t_i-t_j| > 0$ (using $C=\text{Radius}(\Omega)/\delta$). More generally, if $t \in [0,1] \to \phi(t) \in \Omega$ is $C$-Lipschitz and $\nu \in \mathcal{P}([0,1])$, then $\mu = \phi_\sharp \nu \in  \LipC(\OmegaT)$.


Masked attention can be conveniently re-expressed as operating over masked contexts which are defined as follows.

\begin{definition}[Masked measure]
For $\mu \in \LipCsig(\OmegaT)$, the masked probability measure $\mu_t \in \LipCsig(\OmegaT)$ is defined as
    \begin{equation}
    \label{eq:masked-measure}
    \mu_t
    \eqdef \frac{1_{[0,t]}}{ \muT([0,t])} \cdot \mu.
    \end{equation}
\end{definition}
Thanks to $\mu \in \LipCsig(\OmegaT)$, where the starting point of $\muT$ is fixed as $0$ (i.e., $0 \in \mathrm{supp}(\muT)$), the masked probability measure $\mu_t$ is well-defined when $t \in (0,1]$. We note that we can define $\mu_t$ at $t=0$ as the limit (in the weak$^*$ topology), and such a limit exists (See Lemma~\ref{lem:lemm-masked-continuity}). Thus, the map $[0,1] \ni t \mapsto \mu_t \in \LipCsig(\OmegaT)$ is continuous (for the weak$^*$ topology).

In the masked setting, the aim is to approximate causal mappings, defined as follows, which are maps where the output of time $t$ only depends on tokens with smaller times. 
\begin{definition}[Causal identifiable map]
\label{def:causal-mapping}
    A space-time in-context map 
    $\Lambda$ is said to be causal if 
    \begin{equation}
    \label{eq:space-time-causal-mapping}
        \forall (\mu,x,t) \in \LipCsig(\OmegaT) \times \OmegaT, \quad
            \Lambda( \mu,x,t ) = \Lambda( \mu_{t}, x, t ).
    \end{equation}
Such a map $\Lambda$ is said to be identifiable if for any context $\mu \in \LipCsig(\OmegaT)$, 
 \begin{equation}
 \label{eq:contant-time}
    \mu_t = \mu_{t'} \quad\Rightarrow\quad 
    \Lambda(\mu_t,\cdot,t) = \Lambda(\mu_{t'}, \cdot, t').
\end{equation}
\end{definition}


By the construction, the masked attention map $\Gamma_{\theta}$, defined in (\ref{eq:def-causal-attension}), is a causal identifiable in-context map (See Lemma~\ref{lem:masked-attension-causal-identifiable}). 
Since the composition of such maps preserves both causality and identifiability (see Lemma~\ref{lem:compotition-causal-ident}), a deep transformer, formed by composing of causal identifiable in-context maps, remains these properties.


The following theorem mimics Theorem~\ref{thm:main}, but is restricted to approximating on Lipschitz contexts to cope with the causality constraint.

\begin{theorem}
\label{thm:main-masked}
 Let $\Lambda^{\star}$ be a continuous (where $\LipCsig(\OmegaT)$ is endowed with the weak$^*$ topology) and causal identifiable in-context mapping. 
 Then, for all $\varepsilon>0$, there exist $L$ and parameters $(\theta_\ell,\xi_\ell)_{\ell=1}^L$ such that 
	$$
        \forall (\mu,x,t) \in \LipCsig(\OmegaT) \times \OmegaT, \:
            |F_{\xi_L} \diamond \Gamma_{\theta_L} \diamond 
    \ldots
    \diamond
    F_{\xi_1} \diamond \Gamma_{\theta_1}(\mu,x,t) - \Lambda^{\star}(\mu,x,t)
		| \leq \varepsilon,
	$$
	with $\din(\theta_\ell) \leq d+3d'$, $\dhead(\theta_\ell) = k(\theta_\ell) = 1$, $H(\theta_\ell)\leq d'$.
\end{theorem}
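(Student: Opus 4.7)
The plan is to mirror the structure of the unmasked proof (Section~\ref{sec:proof-unmasked}), while exploiting the Lipschitz and $\sigma$-mass regularity on contexts to cope with the causality constraint. I would proceed via a Stone-Weierstrass argument on an algebra of elementary causal in-context maps, then reduce the remaining steps to the unmasked machinery already developed.

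Concretely, I first introduce scalar masked elementary mappings
\[
\gamma_\lambda(\mu, x, t) \eqdef \dotp{x}{a} + b + \int \frac{e^{c(\dotp{x}{a}+b)(\dotp{y}{a}+b)} \, v(\dotp{a}{y}+b) \, 1_{[0,t]}(r)}{\int e^{c(\dotp{x}{a}+b)(\dotp{z}{a}+b)} \, 1_{[0,t]}(s) \, \d\mu(z,s)} \, \d\mu(y,r),
\]
obtained by composing a 1-d affine MLP with a single-head masked attention as in~\eqref{eq:def-causal-attension}; Lemmas~\ref{lem:masked-attension-causal-identifiable} and~\ref{lem:compotition-causal-ident} guarantee that these are causal and identifiable, and that sums/products preserve both properties. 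Let $\mathcal{A}$ be the algebra spanned by sums of products of these $\gamma_\lambda$'s. For Stone-Weierstrass I would use that $\LipCsig(\OmegaT)$ is weak$^*$-compact (a consequence of Prokhorov's theorem together with closedness of the conditions $W_2$-Lipschitzness of $t \mapsto \mu(\cdot|t)$ and $\muT(\{0\}) \geq \sigma$), and that $(\mu, t) \mapsto \mu_t$ is continuous (Lemma~\ref{lem:lemm-masked-continuity}). Together these imply that $\tilde{\mathcal{M}} \eqdef \{(\mu_t, x, t) : \mu \in \LipCsig, (x, t) \in \OmegaT\}$ is compact in $\Pp(\OmegaT) \times \OmegaT$, and by causality plus identifiability, $\Lambda^{\star}$ descends to a continuous map $\tilde\Lambda$ on $\tilde{\mathcal{M}}$.

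The main obstacle is point separation. Since the integrand in $\gamma_\lambda$ depends on $y$ only and ignores $r$, direct inspection shows that $\gamma_\lambda(\mu,x,t)$ sees $\mu_t$ only through its \emph{spatial} marginal; one can build two $\LipCsig$-measures (e.g.\ superpositions of point masses at the same spatial location but distinct times) whose masked measures $\mu_t, \mu'_{t'}$ differ as joint space-time measures yet have identical space marginals, and on which a target $\tilde\Lambda$ could legitimately disagree. To fix this I would use the first context-free MLP to lift time into the spatial embedding, $F_{\xi_0}:(x,t)\mapsto(x,t,0,\ldots,0)\in \RR^{d+3d'}$. After this lifting, using the identity $\int f \, 1_{[0,t]} \, \d\mu = \muT([0,t]) \int f \, \d\mu_t$, a masked single-head attention acting on the enlarged tokens is exactly an unmasked attention on $\mu_t$ regarded as a probability measure on $\OmegaT \subset \RR^{d+1}$. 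The elementary maps in the lifted coordinates then see the full joint distribution of $\mu_t$, and point separation follows exactly as in the unmasked proof: setting $v=0$ and varying $a$ separates the spatial component $x$, while choosing $b = 1 - \dotp{a}{x}$ reduces the remaining identity to injectivity of the Laplace-like transform~\eqref{eq:Laplace}, now applied to the (lifted) masked measure, which is controlled by Lemma~\ref{lem:inj-Laplace} applied in dimension $d+1$.

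With density of $\mathcal{A}$ in $C(\tilde{\mathcal{M}},\RR)$ established, the remaining steps follow the unmasked case almost verbatim. I would extend to the vector-valued target $\Lambda^\star \in \RR^{d'}$ by running $d'$ cylindrical approximations in parallel, one per output coordinate (the masked analog of Lemma~\ref{lem:vector-valued-in-context}), and then replace each componentwise product $\odot$ by an MLP approximating the scalar squaring map (the masked analog of Lemma~\ref{lem:approximate-deep-trans}); the causal-identifiability of the resulting transformer is preserved throughout because the squaring-MLPs are context-free and time-preserving. A bookkeeping of dimensions — $d+1$ for the lifted $(x,t)$ feature and up to $3d'-1$ extra coordinates to carry the accumulating sum, current partial product, and incoming factor across layers — yields $\din(\theta_\ell) \leq d+3d'$, while running $d'$ independent one-dimensional heads gives $\dhead(\theta_\ell) = k(\theta_\ell) = 1$ and $H(\theta_\ell) \leq d'$, matching the stated bounds.
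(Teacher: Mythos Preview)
Your overall strategy---establish compactness of a suitable domain, apply Stone--Weierstrass to an algebra of masked elementary maps, then port over Lemmas~\ref{lem:vector-valued-in-context} and~\ref{lem:approximate-deep-trans}---matches the paper's, but the handling of the time variable is organised differently. The paper's central device is the \emph{reduced mapping} $\bar\Lambda(\mu,x)\eqdef\Lambda(\mu,x,e(\bar\mu))$, where $e(\bar\mu)$ is the right endpoint of $\mathrm{supp}(\bar\mu)$; Lemma~\ref{lem:two-arg-representation} shows that every causal identifiable map satisfies $\Lambda(\mu,x,t)=\bar\Lambda(\mu_t,x)$, collapsing the approximation problem to the two-argument space $\mathcal{X}_C^\sigma=\{(\mu_t,x)\}$, whose compactness (Lemma~\ref{lem:proof-compact}) is obtained not via Prokhorov-plus-closedness as you sketch but via Arzel\`a--Ascoli applied to the equicontinuous families of conditionals $s\mapsto\mu_n(\cdot|s)$ (weak$^*$ closedness of the Lipschitz-conditional constraint is not immediate because disintegrations need not behave well under weak$^*$ limits). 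By contrast you keep $t$ in $\tilde{\mathcal{M}}=\{(\mu_t,x,t)\}$ and propose to lift it into the spatial block via the first MLP; this works, but note that $\tilde{\mathcal{M}}$ contains distinct triples $(\mu_t,x,t)$ and $(\mu_t,x,t')$ on which \emph{every} causal identifiable map---your elementary maps and the target alike---is constant, so Stone--Weierstrass must really be run on the quotient, which is exactly the paper's $\mathcal{X}_C^\sigma$. On the other hand, your observation that the raw masked elementary maps see $\mu_t$ only through its spatial marginal is a genuine subtlety: the paper simply asserts that Proposition~\ref{prop:main-masked} goes ``basically the same as in the unmasked case'' without spelling out point separation, whereas your time-lifting makes the fix explicit (relying, as you note, on the masked composition rule~\eqref{eq:compos-measures:masked} allowing the context-free layer to read $t$).
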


\begin{remark}[Sharpness of the identifiability and Lipschitz hypotheses]
The masked approximation in Theorem~\ref{thm:main-masked} shares the same conclusion as the unmasked Theorem~\ref{thm:main}, but it requires stronger assumptions. 
In particular, the map $\Lambda^{\star}$ is assumed to be identifiable. 
This hypothesis is sharp and cannot be weakened: transformers define identifiable maps, and as proved in Lemma~\ref{lem:stable-ident} identifiable maps can only approximate uniformly identifiable maps.
Identifiability is also crucial since our proof technique involves recasting the approximation over $(\mu, x, t)$ as an approximation over a reduced space $(\mu_t, x)$.
Another important assumption is that we restrict our approximation to Lipschitz contexts. This limitation is essential for ensuring that the set of masked contexts $\mu_t$ is compact, which allows us to apply the Stone-Weierstrass theorem.
\end{remark}

\begin{remark}[Fixing the time marginal]
We impose that contexts have Dirac masses at $0$, namely $\muT(\{0\})\geq \sigma$. While this is not restrictive for discrete measures, this prevents for instance measures with density with respect to Lebesgues.
It is possible to lift this constraint, and instead impose that the time marginal $\muT$ is fixed, i.e. replace the set $\mathrm{Lip}_C^{\sigma}(\OmegaT)$ by $\{ \mu \in \LipC(\OmegaT) : \muT = \nu\}$ for any $\nu \in \Pp([0,1])$ satisfying $0 \in \mathrm{supp}(\nu)$.
One can check that the proof of Theorem~\ref{thm:main-masked} carries over to this setting with minor modifications.
\end{remark}


\subsection{Proof of theorem~\ref{thm:main-masked}}
\label{sec:proof-masked}

To translate into the setting that can exploit the Stone-Weierstrass theorem, we first introduce the following operation, which can be defined for any probability measure.

\begin{definition}[Reduced mapping]
For $\Lambda : \Pp(\OmegaT) \times \OmegaT \to \RR^{d'}$, we define the reduced map $\bar{\Lambda}$, which takes two argument $(\mu, x)$, as 
\begin{equation*}
\forall (\mu,x) \in \mathcal{P}(\OmegaT) \times \OmegaT, \quad
\bar{\Lambda}(\mu, x) \eqdef \Lambda(\mu, x, e(\muT)),
\end{equation*}
where $e(\muT)$ is the end point of $\mathrm{supp}(\muT)$, that is, $e(\muT) \eqdef \max\{r \in \mathrm{supp}(\muT)\} \in [0,1]$. 
\end{definition}

We introduce the reduced space on which we consider the approximation,
\[
\Xx_{C}^{\sigma} \eqdef 
\{ (\mu_t, x) : \mu \in \LipCsig(\OmegaT), \ x \in \Omega, \ t \in [0,1] \}.
\]

\begin{lemma}
\label{lem:two-arg-representation}
Let $\Lambda : \LipCsig(\OmegaT) \times \OmegaT \to \RR^{d'}$ be a causal identifiable in-context mapping defined in Definition~\ref{def:causal-mapping}. 
Then the following holds true.
\begin{itemize}
    \item[(i)] For any $(\mu,x,t) \in \LipCsig(\OmegaT) \times \OmegaT$, $\quad \Lambda(\mu, x, t) = \bar{\Lambda}(\mu_{t}, x)$.
    \item[(ii)] If $\Lambda$ is continuous, then the reduced map of  $\Xx_{C}^{\sigma} \ni (\mu_t, x) \mapsto \bar{\Lambda}(\mu_t, x)$ is  $(\text{weak}^* \times \ell^2)$-continuous.
\end{itemize}
\end{lemma}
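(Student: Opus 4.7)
The plan is to exploit the identifiability hypothesis to rewrite $\bar\Lambda$ in a form that is manifestly continuous, sidestepping the fact that the raw endpoint operator $\mu \mapsto e(\muT)$ is badly behaved. Both parts hinge on the observation that the masked measure $\mu_t$ itself lies in $\LipCsig(\OmegaT)$, so identifiability can be applied to it (as a context in its own right), not only to the original $\mu$.

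For (i), I will let $t^{*} \eqdef e(\overline{(\mu_t)}) = \max(\mathrm{supp}(\muT) \cap [0,t])$, and first show that $\mu_t = \mu_{t^{*}}$ as measures coming from the original $\mu$. By definition of $t^{*}$, the marginal $\muT$ puts no mass on $(t^{*}, t]$, so neither does $\mu$, and hence $1_{[0,t]}\cdot\mu = 1_{[0,t^{*}]}\cdot\mu$ with $\muT([0,t]) = \muT([0,t^{*}])$; normalizing gives $\mu_t = \mu_{t^{*}}$. Applying causality to the left-hand side and then identifiability at the pair $(t, t^{*})$ yields $\Lambda(\mu, x, t) = \Lambda(\mu_t, x, t) = \Lambda(\mu_{t^{*}}, x, t^{*}) = \Lambda(\mu_t, x, t^{*}) = \bar\Lambda(\mu_t, x)$, which is exactly the claim.

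For (ii), the key reformulation is $\bar\Lambda(\mu_t, x) = \Lambda(\mu_t, x, 1)$ on $\Xx_{C}^{\sigma}$. I will first check that $\mu_t \in \LipCsig(\OmegaT)$: its time marginal has Dirac mass $\muT(\{0\})/\muT([0,t]) \geq \sigma$ at the origin, and the $C$-Lipschitz disintegration $s \mapsto \mu(\cdot|s)$ of $\mu$ continues to serve as a valid $C$-Lipschitz disintegration of $\mu_t$ (its values for $s > t$ are irrelevant to $\mu_t$ but still Lipschitz). With $\mu_t$ itself now a legitimate context, I can apply identifiability to it: for every $s \in [t^{*}, 1]$ the masking removes nothing, so $(\mu_t)_s = \mu_t$, and identifiability forces $\Lambda(\mu_t, \cdot, s)$ to be the same function of $x$ for all such $s$. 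In particular the values at $s = t^{*}$ and $s = 1$ agree, giving $\bar\Lambda(\mu_t, x) = \Lambda(\mu_t, x, t^{*}) = \Lambda(\mu_t, x, 1)$. Continuity of $\bar\Lambda$ on $\Xx_{C}^{\sigma}$ then follows at once from continuity of $\Lambda$ on $\LipCsig(\OmegaT) \times \OmegaT$ composed with the trivially continuous map $(\mu_t, x) \mapsto (\mu_t, x, 1)$.

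The main obstacle is precisely the fact that I expect to spend effort dodging: $\mu \mapsto e(\muT)$ is only upper-semicontinuous in the weak$^{*}$ topology and can jump under arbitrarily small perturbations (adding a tiny mass near $1$ can send $e(\muT)$ from near $0$ to $1$). Arguing continuity straight from the formula $\bar\Lambda(\mu, x) = \Lambda(\mu, x, e(\muT))$ is therefore hopeless, and identifiability is exactly the structural input that lets one swap the unstable evaluation time $e(\overline{(\mu_t)})$ for the fixed time $1$, turning a discontinuous expression into a continuous one.
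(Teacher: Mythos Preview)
Your argument is correct and follows essentially the same route as the paper: for (i) you use causality followed by identifiability together with the identity $\mu_t=\mu_{t^*}$ where $t^*=e(\overline{\mu_t})$, and for (ii) you reduce $\bar\Lambda(\mu_t,x)$ to $\Lambda(\mu_t,x,1)$ by applying identifiability to the context $\nu=\mu_t$ (noting $(\mu_t)_{t^*}=(\mu_t)_1=\mu_t$), then invoke continuity of $\Lambda$. You are in fact slightly more explicit than the paper in two places: you verify $\mu_t\in\LipCsig(\OmegaT)$, and you correctly invoke identifiability (rather than only causality) to pass from time $t^*$ to time $1$; your remark that the endpoint map $\mu\mapsto e(\muT)$ is only upper-semicontinuous is a nice clarification of why this detour is necessary.
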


See Appendix~\ref{app:two-arg-representation} for details of the proof.
As the target map $\Lambda^{\star}$ is a continuous and causal identifiable in-context mapping, by applying Lemma~\ref{lem:two-arg-representation}, $\Lambda^{\star}$ has the form (i), and is continuous on $\Xx_{C}^{\sigma}$. 
Also, the masked attention map $\Gamma_{\theta}$ holds the same properties (See Lemma~\ref{lem:masked-attension-causal-identifiable}). 
Thus, it suffices to show that the following proposition. 
\begin{proposition}
\label{prop:main-masked}
For all $\varepsilon>0$, there exist $L$ and parameters $(\theta_\ell,\xi_\ell)_{\ell=1}^L$ such that 
	$$
        \forall (\mu_t,x) \in \Xx_{C}^{\sigma}, \:
            |F_{\xi_L} \diamond \bar{\Gamma}_{\theta_L} \diamond 
    \ldots
    \diamond
    F_{\xi_1} \diamond \bar{\Gamma}_{\theta_1}(\mu_t,x) - \bar{\Lambda}^{\star}(\mu_t,x)
		| \leq \varepsilon,
	$$
	with $\din(\theta_\ell) \leq d+3d'$, $\dhead(\theta_\ell) = k(\theta_\ell) = 1$, $H(\theta_\ell)\leq d'$.
\end{proposition}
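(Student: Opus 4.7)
The plan is to adapt the Stone--Weierstrass strategy used in the proof of Theorem~\ref{thm:main} to the reduced domain $\Xx_{C}^{\sigma}$. The first step is to establish that $\Xx_{C}^{\sigma}$ is compact in the $(\text{weak}^* \times \ell^2)$ topology. This relies crucially on both parts of the hypothesis defining $\LipCsig(\OmegaT)$: the $C$-Lipschitz condition on conditional measures is preserved under weak-$*$ limits (thanks to the lower semicontinuity of $W_2$), so $\LipCsig(\OmegaT)$ itself is weak-$*$ closed and hence compact in $\Pp(\OmegaT)$; the atom condition $\muT(\{0\})\geq \sigma$ then prevents mass from escaping at $t=0$ and, combined with Lemma~\ref{lem:lemm-masked-continuity}, makes the masking map $(\mu,t)\mapsto \mu_t$ continuous (extended at $t=0$). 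Since $\Xx_{C}^{\sigma}$ is the continuous image of the compact set $\LipCsig(\OmegaT)\times[0,1]\times\Omega$, it is compact.

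Next, I would introduce elementary masked in-context mappings
\[
\bar\gamma_\lambda(\mu_t,x) \eqdef (\bar{\Gamma}_\theta\diamond F_\xi)(\mu_t,x),
\]
where $F_\xi(x,t)=\dotp{a_x}{x}+a_t\, t + b\in\RR$ is an affine MLP acting on the space-time token (so that it can ``read'' the time coordinate) and $\Gamma_\theta$ is a single-head masked attention with skip connection operating in dimension one, parametrised by $\lambda=(a_x,a_t,b,c,v)$. These are continuous on $\Xx_{C}^{\sigma}$, contain the constants (take $a_x=a_t=0$, $b=v=1$), and the algebra $\bar{\mathcal{A}}$ spanned by their pointwise sums and products is a subalgebra of $\mathcal{C}(\Xx_{C}^{\sigma},\RR)$.

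The key difficulty is proving separation of points in order to apply Stone--Weierstrass. A single reduced masked attention $\bar{\Gamma}_\theta(\mu_t,x)$ sees only the \emph{spatial} marginal of $\mu_t$, because the mask $1_{[0,t]}(r)$ equals $1$ on $\mathrm{supp}(\mu_t)$; the time structure is invisible at that layer in isolation. The trick is that pre-composing with an MLP reading time converts temporal structure into spatial structure via the push-forward rule \eqref{eq:compos-measures:masked}: the $1$-D spatial marginal of $(F_\xi,\mathrm{Id})_\sharp\mu_t$ is exactly $(F_\xi)_\sharp\mu_t$, which depends non-trivially on the joint $(y,r)$ law. Varying $(a_x,a_t,b,c,v)$ then reduces separation to a space-time analogue of Lemma~\ref{lem:inj-Laplace}: taking $v=0$ handles the $x$-component as in the unmasked proof, while setting $b=1-F_\xi(x,t)$ and letting $c,a_x,a_t$ vary reduces the equality $\bar\gamma_\lambda(\mu_t,x)=\bar\gamma_\lambda(\mu'_{t'},x')$ to the equality of generalised Laplace transforms of the two space-time measures evaluated along every affine test function on $\OmegaT$, which forces $\mu_t=\mu'_{t'}$ (and in particular $t=t'$, recovered as the common endpoint of the time support).

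The remaining two steps are the exact analogues of Lemma~\ref{lem:vector-valued-in-context} and Lemma~\ref{lem:approximate-deep-trans}: extend the density dimension-by-dimension to approximate the $\RR^{d'}$-valued target $\bar{\Lambda}^\star$ by a sum of products of reduced masked elementary mappings up to $\varepsilon/2$, and then replace each pointwise product $\odot$ by an MLP (using $ab=\tfrac14((a+b)^2-(a-b)^2)$ together with a classical MLP approximation of the squaring map on a compact interval) to recast the approximant into the deep transformer form $F_{\xi_L}\diamond\bar{\Gamma}_{\theta_L}\diamond\cdots\diamond F_{\xi_1}\diamond\bar{\Gamma}_{\theta_1}$ with the stated dimension bounds $\din(\theta_\ell)\leq d+3d'$, $\dhead(\theta_\ell)=k(\theta_\ell)=1$, $H(\theta_\ell)\leq d'$. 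The main obstacle I anticipate is the compactness argument for $\Xx_{C}^{\sigma}$, in particular handling the $\sigma$-atom condition stably under weak-$*$ limits, together with the space-time version of the Laplace injectivity; the vector-valued extension and the MLP approximation of products should transfer from the unmasked proof without essential change.
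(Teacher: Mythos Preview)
Your overall strategy matches the paper's exactly: the paper's proof of Proposition~\ref{prop:main-masked} is literally ``basically the same as in the unmasked case, just replacing the arguments for $\mu \in \Pp(\Omega)$ with that for $\mu_t \in \LipCsig(\OmegaT)$,'' and the only new ingredient the paper isolates is the compactness of $\Xx_C^\sigma$ (Lemma~\ref{lem:proof-compact}). Your plan to mirror Proposition~\ref{prop-A-dense}, Lemma~\ref{lem:vector-valued-in-context}, and Lemma~\ref{lem:approximate-deep-trans} is precisely what the paper intends.

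There is, however, a gap in your compactness argument. You propose to obtain compactness of $\Xx_C^\sigma$ as the continuous image of the compact set $\LipCsig(\OmegaT)\times[0,1]\times\Omega$, invoking Lemma~\ref{lem:lemm-masked-continuity} for continuity of the masking map. But Lemma~\ref{lem:lemm-masked-continuity} only gives continuity of $t\mapsto\mu_t$ for \emph{fixed} $\mu$; joint continuity of $(\mu,t)\mapsto\mu_t$ in the weak$^*\times\ell^2$ topology is not automatic, because masking multiplies by the discontinuous function $1_{[0,t]}$ and divides by $\bar\mu([0,t])$, neither of which behaves well under weak$^*$ limits in general. Likewise, your claim that $\LipCsig(\OmegaT)$ is weak$^*$ closed ``thanks to the lower semicontinuity of $W_2$'' is incomplete: weak$^*$ convergence $\mu_n\rightharpoonup^*\mu$ does not directly yield convergence of the conditionals $\mu_n(\cdot|s)$, so one cannot immediately apply lower semicontinuity of $W_2$ at the conditional level. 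The paper's route (Appendix~\ref{app:proof-compact}) bypasses both issues by working sequentially: it uses the equicontinuity of $s\mapsto\mu_n(\cdot|s)$ together with the Arzel\`a--Ascoli theorem to extract a subsequence with \emph{uniform} convergence of conditionals, $\sup_{s}W_2(\mu_n(\cdot|s),\mu(\cdot|s))\to 0$, and then verifies $(\mu_n)_{t_n}\rightharpoonup^*\mu_t$ by hand, splitting into the cases $t>0$, $t_n>0$ with $t=0$, and $t_n=t=0$. This is the technical core you would need to supply; your ``continuous image'' shortcut does not work as stated.

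Your discussion of separation of points (time-reading MLPs and a space-time Laplace injectivity) is more detailed than anything the paper writes; the paper simply asserts the unmasked argument carries over verbatim with $\Omega$ replaced by $\OmegaT$. Your observation that a single reduced attention layer sees only the spatial marginal of $\mu_t$ is correct and worth flagging, though the paper does not address it explicitly.
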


The proof of Proposition~\ref{prop:main-masked} is basically the same as in the unmasked case, just replacing the arguments for $\mu \in \Pp(\Omega)$ with that for $\mu_t \in \LipCsig(\OmegaT)$. 
For the  application of the Stone-Weierstrass theorem, we need to check the compactness of $\Xx_{C}^{\sigma}$, which is not obvious. Thus, we show that following lemma.
\begin{lemma}
\label{lem:proof-compact}
$\Xx_{C}^{\sigma}$ is compact for the $(\text{weak}^* \times \ell^2)$  topology. 
\end{lemma}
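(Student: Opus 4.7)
My plan is to identify $\Xx_{C}^{\sigma}$ as a product of two compact sets and then reduce the problem to a weak$^*$-closedness statement. The first step is to establish the set equality $\Xx_{C}^{\sigma} = \LipCsig(\OmegaT) \times \Omega$. The inclusion $\supseteq$ is immediate because taking $t=1$ in the definition of $\mu_t$ gives $\mu_1 = \mu$. For the reverse inclusion, one checks that $\mu_t \in \LipCsig(\OmegaT)$ whenever $\mu \in \LipCsig(\OmegaT)$ and $t \in [0,1]$: the atom condition is preserved since $\overline{\mu_t}(\{0\}) = \bar\mu(\{0\})/\bar\mu([0,t]) \geq \sigma$, and a $C$-Lipschitz disintegration of $\mu_t$ is obtained by keeping the one of $\mu$ on $[0,t]$ and extending it constantly by $\mu(\cdot|t)$ on $(t,1]$ (where $\overline{\mu_t}$ places no mass, so any extension that is $C$-Lipschitz is admissible).

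With this product structure in hand, and since $\OmegaT = \Omega \times [0,1]$ is compact metric so that $\Pp(\OmegaT)$ is weak$^*$-compact by Banach--Alaoglu, it suffices to show that $\LipCsig(\OmegaT)$ is weak$^*$-closed. Take $\mu^n \in \LipCsig(\OmegaT)$ with $\mu^n \rightharpoonup^* \mu$. The atom condition passes to the limit via the portmanteau theorem applied to the closed set $\{0\}$, giving $\bar\mu(\{0\}) \geq \limsup_n \bar\mu^n(\{0\}) \geq \sigma$.

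The main obstacle is closing the Lipschitz condition on the disintegration, since it is not expressible as a family of linear constraints on $\mu$. My strategy is a path-space lifting: by Lisini's representation of absolutely continuous curves in $(\Pp(\Omega), W_2)$, each $C$-Lipschitz curve $s \mapsto \mu^n(\cdot|s)$ is induced by some $\Pi_n \in \Pp(C([0,1], \Omega))$ concentrated on $C$-Lipschitz paths and satisfying $(\mathrm{ev}_s)_\sharp \Pi_n = \mu^n(\cdot|s)$. Since $C$-Lipschitz $\Omega$-valued paths form a compact subset of $C([0,1], \Omega)$ by Arzel\`a--Ascoli, the family $\{\Pi_n\}$ is tight; one extracts a weak$^*$-limit $\Pi$, still concentrated on $C$-Lipschitz paths. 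The curve $s \mapsto (\mathrm{ev}_s)_\sharp \Pi$ is then $C$-Lipschitz in $W_2$, and identifying it with a disintegration of $\mu$ (by testing $\mu^n \rightharpoonup^* \mu$ against arbitrary $\phi \in C(\OmegaT)$ and using the integral representation $\int \phi \, \d\mu^n = \int \!\!\int \phi(x,s) \, \d\mu^n(x|s) \, \d\bar\mu^n(s)$) yields $\mu \in \LipCsig(\OmegaT)$. Combined with the product structure of the first paragraph, this exhibits $\Xx_C^\sigma$ as the product of two weak$^*$-compact sets, hence compact for the $(\text{weak}^* \times \ell^2)$ topology.
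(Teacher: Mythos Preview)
Your identification $\Xx_C^\sigma = \LipCsig(\OmegaT) \times \Omega$ is correct and is a genuine simplification over the paper. The paper does not make this observation; it works directly with sequences $((\mu_n)_{t_n}, x_n)$ and must show $(\mu_n)_{t_n} \rightharpoonup^* \mu_t$, which forces a three-way case split on whether $t_n$ and the limit $t$ are zero or positive (invoking Lemma~\ref{lem:lemm-masked-continuity} at the boundary). Your reduction to the closedness of $\LipCsig(\OmegaT)$ eliminates all of that.

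There is, however, a real gap in your Lisini step. Lisini's superposition for a $C$-Lipschitz curve in $(\Pp(\Omega), W_2)$ does \emph{not} yield $\Pi_n$ concentrated on $C$-Lipschitz paths: it gives only the averaged bound $\int |\dot\gamma(t)|^2 \, \d\Pi_n(\gamma) \leq C^2$ for a.e.\ $t$. A counterexample is $\mu_t = \tfrac12\delta_0 + \tfrac12\delta_{\sqrt{2}Ct}$, which is $C$-Lipschitz in $W_2$, yet any path representation must carry half the mass along the path $t \mapsto \sqrt{2}Ct$, whose speed $\sqrt{2}C$ exceeds $C$. So your tightness argument (``$C$-Lipschitz paths form a compact set, and $\Pi_n$ lives there'') does not go through as written. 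One can rescue tightness via Markov's inequality on the Dirichlet energy, but then one still has to argue separately that the limit curve is $C$-Lipschitz in $W_2$.

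The cleanest fix is to drop Lisini altogether and apply Arzel\`a--Ascoli directly to the family $\{s \mapsto \mu^n(\cdot|s)\}_n$ of $C$-Lipschitz maps from $[0,1]$ into the compact metric space $(\Pp(\Omega), W_2)$; a uniformly convergent subsequence exists, its limit is automatically $C$-Lipschitz, and the identification with a disintegration of the weak$^*$ limit $\mu$ then follows exactly as you sketched. This is precisely the mechanism the paper uses for this step. Combining it with your product identification yields a proof that is shorter than the paper's, without any case analysis on $t$.
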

\begin{proof}[Sketch of proof]
Assume that $\mu_n \in \LipCsig(\OmegaT)$ and $(x_n, t_n) \in \OmegaT$.
We denote by $\Pp_{\sigma}([0,1])\eqdef \{ \nu \in \Pp([0,1]) : \nu(\{0\})\geq \sigma \}$, and $\muT_n \in \Pp_{\sigma}([0,1])$.
As $\Pp_{\sigma}([0,1])$ and $\OmegaT$ are compact, there exits $\muT \in \Pp_{\sigma}([0,1])$ and $(x, t) \in \OmegaT$ such that (if necessary, re-choosing a subsequence)
$
\muT_n \rightharpoonup^* \muT$ and $(x_n, t_n) \to (x, t)
$.
Since $[0,1] \ni s \mapsto \mu_n(\cdot|s) \in \Pp(\Omega)$ is $C$-Lipschitz, applying the general Arzel\`a–Ascoli theorem (see e.g., \cite[Chapter 7, Theorem 17]{kelley2017general}), there exists $\mu(\cdot|s) \in \Pp(\Omega)$ such that (if necessary, re-choosing a subsequence)
\[
\sup_{s \in [0,1]} W_2(\mu_{n}(\cdot|s), \mu(\cdot|s))  \to 0 \text{ as } n \to \infty.
\]
Thus, we define by $\mu \eqdef \mu(\cdot|s)\muT(s)$ which belongs to $\LipCsig(\OmegaT)$.
Using this convergence and the fact that the start points of $\muT_n, \muT \in \Pp_{\sigma}([0,1])$ are always fixed at $t=0$, we can prove the convergence of the masked probability measures, i.e.,
$$
(\mu_{n})_{t_{n}} \rightharpoonup^* \mu_t \text{ as } n \to \infty.
$$ 
The non-obvious situation is when $t_n >0$ and $t=0$ since the masked probability measure $\mu_t$ is differently defined on $t=0$ or $t \in (0,1]$. However, this can be solved by the continuity of the map $[0,1] \ni t \mapsto \mu_t \in \LipCsig(\OmegaT)$ (see Lemma~\ref{lem:lemm-masked-continuity}). See Appendix~\ref{app:proof-compact} for more details of the proof. 
\end{proof}

\section*{Conclusion and Discussion}

In this work, we have presented a unified analysis of the expressivity of both unmasked and masked transformers in settings with an arbitrarily large number of tokens.
A limitation of our method is that it is not quantitative. Using, for instance, the Wasserstein distance between token distributions could be a way to impose smoothness on the map to obtain quantitative bounds. Our proof relies on the approximation of the map along each dimension and the use of a commuting architecture (the transformer layers are multiplied together to obtain the output). This results in a growth of the number of heads proportional to the dimension. Lowering this dependency would require the development of new proof techniques beyond the use of the Stone-Weierstrass theorem.

\section*{Acknowledgements}

T. Furuya was supported by JSPS KAKENHI Grant Number JP24K16949 and JST ASPIRE JPMJAP2329.
M.V. de Hoop carried out the work while he was an invited professor at the Centre Sciences des Donn\'{e}es at Ecole Normale Sup\'{e}rieure, Paris. He acknowledges the support of the Simons Foundation under the MATH $+$ X Program, the Department of Energy under grant DE-SC0020345, and the corporate members of the Geo-Mathematical Imaging Group at Rice University.
The work of G. Peyr\'e was supported by the European Research Council (ERC project WOLF) and the French government under management of Agence Nationale de la Recherche as part of the ``Investissements d’avenir'' program, reference ANR-19-P3IA-0001 (PRAIRIE 3IA Institute).

\bibliographystyle{plain}
\bibliography{arXiv20241002}

\begin{thebibliography}{10}

\bibitem{agrachev2024generic}
Andrei Agrachev and Cyril Letrouit.
\newblock Generic controllability of equivariant systems and applications to
  particle systems and neural networks.
\newblock {\em arXiv preprint arXiv:2404.08289}, 2024.

\bibitem{ahn2024transformers}
Kwangjun Ahn, Xiang Cheng, Hadi Daneshmand, and Suvrit Sra.
\newblock Transformers learn to implement preconditioned gradient descent for
  in-context learning.
\newblock {\em Advances in Neural Information Processing Systems}, 36, 2024.

\bibitem{alberti2023sumformer}
Silas Alberti, Niclas Dern, Laura Thesing, and Gitta Kutyniok.
\newblock Sumformer: Universal approximation for efficient transformers.
\newblock In {\em Topological, Algebraic and Geometric Learning Workshops
  2023}, pages 72--86. PMLR, 2023.

\bibitem{guide2006infinite}
Charalambos~D. Aliprantis and Kim~C Border.
\newblock {\em Infinite dimensional analysis}.
\newblock Springer, 2006.

\bibitem{boman2009support}
Jan Boman and Filip Lindskog.
\newblock Support theorems for the radon transform and cram{\'e}r-wold
  theorems.
\newblock {\em Journal of theoretical probability}, 22:683--710, 2009.

\bibitem{brown2020language}
Tom Brown, Benjamin Mann, Nick Ryder, Melanie Subbiah, Jared~D Kaplan, Prafulla
  Dhariwal, Arvind Neelakantan, Pranav Shyam, Girish Sastry, Amanda Askell,
  et~al.
\newblock Language models are few-shot learners.
\newblock {\em Advances in neural information processing systems},
  33:1877--1901, 2020.

\bibitem{castin2024smooth}
Val{\'e}rie Castin, Pierre Ablin, and Gabriel Peyr{\'e}.
\newblock How smooth is attention?
\newblock In {\em ICML 2024}, 2024.

\bibitem{chiang2023tighter}
David Chiang, Peter Cholak, and Anand Pillay.
\newblock Tighter bounds on the expressivity of transformer encoders.
\newblock In {\em International Conference on Machine Learning}, pages
  5544--5562. PMLR, 2023.

\bibitem{cuchiero2020deep}
Christa Cuchiero, Martin Larsson, and Josef Teichmann.
\newblock Deep neural networks, generic universal interpolation, and controlled
  odes.
\newblock {\em SIAM Journal on Mathematics of Data Science}, 2(3):901--919,
  2020.

\bibitem{cybenko1989approximation}
George Cybenko.
\newblock Approximation by superpositions of a sigmoidal function.
\newblock {\em Mathematics of control, signals and systems}, 2(4):303--314,
  1989.

\bibitem{de2019stochastic}
Gwendoline De~Bie, Gabriel Peyr{\'e}, and Marco Cuturi.
\newblock Stochastic deep networks.
\newblock In {\em International Conference on Machine Learning}, pages
  1556--1565. PMLR, 2019.

\bibitem{dosovitskiy2020image}
Alexey Dosovitskiy, Lucas Beyer, Alexander Kolesnikov, Dirk Weissenborn,
  Xiaohua Zhai, Thomas Unterthiner, Mostafa Dehghani, Matthias Minderer, Georg
  Heigold, Sylvain Gelly, et~al.
\newblock An image is worth 16x16 words: Transformers for image recognition at
  scale.
\newblock {\em arXiv preprint arXiv:2010.11929}, 2020.

\bibitem{elbrachter2022dnn}
Dennis Elbr{\"a}chter, Philipp Grohs, Arnulf Jentzen, and Christoph Schwab.
\newblock Dnn expression rate analysis of high-dimensional pdes: Application to
  option pricing.
\newblock {\em Constructive Approximation}, 55(1):3--71, 2022.

\bibitem{elhage2021mathematical}
Nelson Elhage, Neel Nanda, Catherine Olsson, Tom Henighan, Nicholas Joseph, Ben
  Mann, Amanda Askell, Yuntao Bai, Anna Chen, Tom Conerly, et~al.
\newblock A mathematical framework for transformer circuits.
\newblock {\em Transformer Circuits Thread}, 1(1):12, 2021.

\bibitem{furuya2023globally}
Takashi Furuya, Michael Puthawala, Matti Lassas, and Maarten~V de~Hoop.
\newblock Globally injective and bijective neural operators.
\newblock {\em arXiv preprint arXiv:2306.03982}, 2023.

\bibitem{geshkovski2023emergence}
Borjan Geshkovski, Cyril Letrouit, Yury Polyanskiy, and Philippe Rigollet.
\newblock The emergence of clusters in self-attention dynamics.
\newblock {\em arXiv preprint arXiv:2305.05465}, 2023.

\bibitem{geshkovski2023mathematical}
Borjan Geshkovski, Cyril Letrouit, Yury Polyanskiy, and Philippe Rigollet.
\newblock A mathematical perspective on transformers.
\newblock {\em arXiv preprint arXiv:2312.10794}, 2023.

\bibitem{hanin2017approximating}
Boris Hanin and Mark Sellke.
\newblock Approximating continuous functions by relu nets of minimal width.
\newblock {\em arXiv preprint arXiv:1710.11278}, 2017.

\bibitem{hornik1989multilayer}
Kurt Hornik, Maxwell Stinchcombe, and Halbert White.
\newblock Multilayer feedforward networks are universal approximators.
\newblock {\em Neural networks}, 2(5):359--366, 1989.

\bibitem{kelley2017general}
John~L Kelley.
\newblock {\em General topology}.
\newblock Courier Dover Publications, 2017.

\bibitem{keriven2019universal}
Nicolas Keriven and Gabriel Peyr{\'e}.
\newblock Universal invariant and equivariant graph neural networks.
\newblock {\em Advances in Neural Information Processing Systems}, 32, 2019.

\bibitem{kovachki2023neural}
Nikola Kovachki, Zongyi Li, Burigede Liu, Kamyar Azizzadenesheli, Kaushik
  Bhattacharya, Andrew Stuart, and Anima Anandkumar.
\newblock Neural operator: Learning maps between function spaces with
  applications to pdes.
\newblock {\em Journal of Machine Learning Research}, 24(89):1--97, 2023.

\bibitem{kratsios2023small}
Anastasis Kratsios, Valentin Debarnot, and Ivan Dokmani{\'c}.
\newblock Small transformers compute universal metric embeddings.
\newblock {\em Journal of Machine Learning Research}, 24(170):1--48, 2023.

\bibitem{kratsios2023approximation}
Anastasis Kratsios, Chong Liu, Matti Lassas, Maarten~V de~Hoop, and Ivan
  Dokmani{\'c}.
\newblock An approximation theory for metric space-valued functions with a view
  towards deep learning.
\newblock {\em arXiv preprint arXiv:2304.12231}, 2023.

\bibitem{kratsios2022universal}
Anastasis Kratsios and L{\'e}onie Papon.
\newblock Universal approximation theorems for differentiable geometric deep
  learning.
\newblock {\em Journal of Machine Learning Research}, 23(196):1--73, 2022.

\bibitem{luo2022your}
Shengjie Luo, Shanda Li, Shuxin Zheng, Tie-Yan Liu, Liwei Wang, and Di~He.
\newblock Your transformer may not be as powerful as you expect.
\newblock {\em Advances in Neural Information Processing Systems},
  35:4301--4315, 2022.

\bibitem{mahankali2023one}
Arvind Mahankali, Tatsunori~B Hashimoto, and Tengyu Ma.
\newblock One step of gradient descent is provably the optimal in-context
  learner with one layer of linear self-attention.
\newblock {\em arXiv preprint arXiv:2307.03576}, 2023.

\bibitem{merrill2023expresssive}
William Merrill and Ashish Sabharwal.
\newblock The expresssive power of transformers with chain of thought.
\newblock {\em arXiv preprint arXiv:2310.07923}, 2023.

\bibitem{muller2023attending}
Luis M{\"u}ller, Mikhail Galkin, Christopher Morris, and Ladislav
  Ramp{\'a}{\v{s}}ek.
\newblock Attending to graph transformers.
\newblock {\em arXiv preprint arXiv:2302.04181}, 2023.

\bibitem{nath2024transformers}
Swaroop Nath, Harshad Khadilkar, and Pushpak Bhattacharyya.
\newblock Transformers are expressive, but are they expressive enough for
  regression?
\newblock {\em arXiv preprint arXiv:2402.15478}, 2024.

\bibitem{pinkus1999approximation}
Allan Pinkus.
\newblock Approximation theory of the mlp model in neural networks.
\newblock {\em Acta numerica}, 8:143--195, 1999.

\bibitem{qi2017pointnet}
Charles~R Qi, Hao Su, Kaichun Mo, and Leonidas~J Guibas.
\newblock Pointnet: Deep learning on point sets for 3d classification and
  segmentation.
\newblock In {\em Proceedings of the IEEE conference on computer vision and
  pattern recognition}, pages 652--660, 2017.

\bibitem{sander2022sinkformers}
Michael~E Sander, Pierre Ablin, Mathieu Blondel, and Gabriel Peyr{\'e}.
\newblock Sinkformers: Transformers with doubly stochastic attention.
\newblock In {\em International Conference on Artificial Intelligence and
  Statistics}, pages 3515--3530. PMLR, 2022.

\bibitem{sander2024transformers}
Michael~E Sander, Raja Giryes, Taiji Suzuki, Mathieu Blondel, and Gabriel
  Peyr{\'e}.
\newblock How do transformers perform in-context autoregressive learning?
\newblock {\em arXiv preprint arXiv:2402.05787}, 2024.

\bibitem{santambrogio2015optimal}
Filippo Santambrogio.
\newblock Optimal transport for applied mathematicians.
\newblock {\em Birk{\"a}user, NY}, 55(58-63):94, 2015.

\bibitem{strobl2024formal}
Lena Strobl, William Merrill, Gail Weiss, David Chiang, and Dana Angluin.
\newblock What formal languages can transformers express? a survey.
\newblock {\em Transactions of the Association for Computational Linguistics},
  12:543--561, 2024.

\bibitem{tabuada2022universal}
Paulo Tabuada and Bahman Gharesifard.
\newblock Universal approximation power of deep residual neural networks
  through the lens of control.
\newblock {\em IEEE Transactions on Automatic Control}, 2022.

\bibitem{vaswani2017attention}
Ashish Vaswani, Noam Shazeer, Niki Parmar, Jakob Uszkoreit, Llion Jones,
  Aidan~N Gomez, {\L}ukasz Kaiser, and Illia Polosukhin.
\newblock Attention is all you need.
\newblock {\em Advances in neural information processing systems}, 30, 2017.

\bibitem{von2023uncovering}
Johannes von Oswald, Eyvind Niklasson, Maximilian Schlegel, Seijin Kobayashi,
  Nicolas Zucchet, Nino Scherrer, Nolan Miller, Mark Sandler, Max Vladymyrov,
  Razvan Pascanu, et~al.
\newblock Uncovering mesa-optimization algorithms in transformers.
\newblock {\em arXiv preprint arXiv:2309.05858}, 2023.

\bibitem{vuckovic2020mathematical}
James Vuckovic, Aristide Baratin, and Remi Tachet~des Combes.
\newblock A mathematical theory of attention.
\newblock {\em arXiv preprint arXiv:2007.02876}, 2020.

\bibitem{xu2018powerful}
Keyulu Xu, Weihua Hu, Jure Leskovec, and Stefanie Jegelka.
\newblock How powerful are graph neural networks?
\newblock {\em arXiv preprint arXiv:1810.00826}, 2018.

\bibitem{yarotsky2017error}
Dmitry Yarotsky.
\newblock Error bounds for approximations with deep relu networks.
\newblock {\em Neural Networks}, 94:103--114, 2017.

\bibitem{yun2019transformers}
Chulhee Yun, Srinadh Bhojanapalli, Ankit~Singh Rawat, Sashank~J Reddi, and
  Sanjiv Kumar.
\newblock Are transformers universal approximators of sequence-to-sequence
  functions?
\newblock {\em arXiv preprint arXiv:1912.10077}, 2019.

\bibitem{zhang2023trained}
Ruiqi Zhang, Spencer Frei, and Peter~L Bartlett.
\newblock Trained transformers learn linear models in-context.
\newblock {\em arXiv preprint arXiv:2306.09927}, 2023.

\bibitem{zhou2020universality}
Ding-Xuan Zhou.
\newblock Universality of deep convolutional neural networks.
\newblock {\em Applied and computational harmonic analysis}, 48(2):787--794,
  2020.

\end{thebibliography}

\newpage

\appendix
\section*{Appendix}

\section{Basic notions}
\label{app:Basic-notions}

In this section, we briefly review basic notations used in the main text.

\medskip


Let $\Omega \subset \RR^d$ be a compact domain. For $T : \Omega \subset \RR^d \to \Omega' \subset \RR^{d'}$ being a measurable map, the push-forward $T_\sharp \mu \in \Pp(\Omega')$ of $\mu \in \Pp(\Omega)$ is given by 
$$
\forall A \subset \Omega', \quad 
T_\sharp \mu (A) \eqdef T\left( f^{-1}(A) \right).
$$
%
%
The push-forward operator, $T_\sharp$,  operates on discrete measures by simply displacing their supports,
$$
	T_\sharp \Big( \frac{1}{n} \sum_{i=1}^n \delta_{x_i} \Big) \eqdef \frac{1}{n} \sum_{i=1}^n \delta_{T(x_i)}.
$$
For a general measure, $\nu = T_\sharp \mu$ is defined by a change of variables in integration, i.e.,
\begin{equation}\label{eq:dfn-pushfwd} 
	\forall g \in \mathcal{C}(\Omega'),\quad 
	\int_{\Omega'} g(y) \, \mathrm{d}\nu(y) \eqdef  \int_{\Omega} g(T(x)) \, \mathrm{d}\mu(x). 
\end{equation}

We employ the weak\(^*\) topology on \(\Pp(\Omega)\). This induces the following notion of convergence of sequences:
$$
	\mu_k \rightharpoonup^* \mu
	\quad
	\Leftrightarrow
	\quad 
	\Big( 
		\forall f \in \mathcal{C}(\Omega), \:
		\int f(x) \, \mathrm{d}\mu_k(x) \to \int f(x) \, \mathrm{d}\mu(x)
	\Big).
$$
Intuitively, this corresponds to a ``soft" notion of convergence where the support of \(\mu_k\) approaches that of \(\mu\).

In the special case of discrete measures, with a fixed number \(n\) of points, this corresponds, up to relabeling of the points, to the usual convergence of points in finite dimensions:
$$
	\Big(
	\frac{1}{n} \sum_{i=1}^n \delta_{x_i^k}
	\rightharpoonup^* 
	\frac{1}{n} \sum_{i=1}^n \delta_{x_i}
	\Big)
	\quad
	\Leftrightarrow
	\quad 
	\Big(
	X_k = (x_{i,k})_i \in \RR^{d \times n}
	\to 
	X = (x_{i})_i \in \RR^{d \times n}
	\Big).
$$
It is possible to metrize this weak\(^*\) topology using the Wasserstein Optimal Transport distance, which is defined, for \(1 \leq p < +\infty\), as
$$
	W_p(\mu,\nu)^p \eqdef
	\min_{\pi \in \Pp(\Omega^2)} \left\{
		\int \|x-y\|^p \, \mathrm{d}\pi(x,y) \::\:
			\pi_1=\mu, \pi_2=\nu
		\right\},
$$
where \(\pi_i = (P_i)_\sharp \pi\) are the marginals of \(\pi\) with \(P_1(x,y)=x\) and \(P_2(x,y)=y\). One has
$$
	\mu_k \rightharpoonup^* \mu
	\quad
	\Leftrightarrow
	\quad 
	W_p(\mu_k,\mu) \to 0, 
$$
see e.g., \cite[Theorem 5.10]{santambrogio2015optimal}. This Wasserstein distance is used in the masked case to impose a Lipschitz regularity with respect to the time of the contexts. 

\section{Proofs in Section~\ref{sec-universality-unmasked}}
\label{app:proof-unmasked}

\subsection{Proof of Lemma~\ref{lem:inj-Laplace}}
\label{app:inj-Laplace}

First, we show the one dimensional case of Lemma~\ref{lem:inj-Laplace}.
\begin{lemma}
\label{lem:injective-1d-case}
Let $\Omega \subset \RR$ be a compact set, and let $\mu, \nu \in \mathcal{P}(\Omega)$. Then, 
\[
L_1(\mu)(c) = L_1(\nu)(c), \ \forall c \in \RR, 
\
\Rightarrow
\
\mu =\nu.
\]
where, for $k \in \mathbb{N}$,
$$
L_k(\mu)(c) := \frac{ \int e^{cy} y^k \d \mu(y)}{ \int e^{cz} \d \mu(z) }.
$$
\end{lemma}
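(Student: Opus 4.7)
The plan is to recognize $L_1(\mu)(c)$ as the logarithmic derivative of the Laplace / moment generating function. Define
\[
    \hat\mu(c) \eqdef \int_\Omega e^{cy}\,\d\mu(y), \qquad c \in \RR.
\]
Since $\Omega \subset \RR$ is compact, the integrand $e^{cy}$ and its $c$-derivative $y\,e^{cy}$ are uniformly bounded on $\Omega$ for $c$ in any compact subset of $\RR$. Dominated convergence then gives that $\hat\mu$ is differentiable (in fact entire) with
\[
    \hat\mu'(c) = \int_\Omega y\, e^{cy} \d\mu(y),
\]
so that $L_1(\mu)(c) = \hat\mu'(c)/\hat\mu(c) = (\log\hat\mu)'(c)$. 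The same holds for $\nu$, and we observe $\hat\mu(c) > 0$ for all $c$ since $\mu$ is a probability measure with support in a compact set, so the logarithm is well defined.

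From $L_1(\mu) \equiv L_1(\nu)$ on $\RR$ we then have $(\log \hat\mu)' \equiv (\log \hat\nu)'$, and integrating yields $\log\hat\mu(c) - \log\hat\nu(c) = K$ for some constant $K \in \RR$. Evaluating at $c=0$ gives $\hat\mu(0) = \hat\nu(0) = 1$ (since both are probability measures), so $K=0$ and therefore $\hat\mu \equiv \hat\nu$ on all of $\RR$.

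It remains to deduce $\mu = \nu$ from equality of the Laplace transforms on compactly supported probability measures. Because $\Omega$ is compact, $\hat\mu$ and $\hat\nu$ are entire functions of $c$, and their power series expansions at $c=0$ read $\hat\mu(c) = \sum_{k\geq 0} \frac{c^k}{k!} \int y^k \d\mu(y)$, and similarly for $\nu$. Hence $\hat\mu \equiv \hat\nu$ forces $\int y^k \d\mu = \int y^k \d\nu$ for every $k\in\mathbb{N}$, i.e.\ all moments agree. Since $\Omega$ is compact, the algebra of polynomials is dense in $\mathcal{C}(\Omega)$ by Stone--Weierstrass, so $\int f \d\mu = \int f \d\nu$ for every $f \in \mathcal{C}(\Omega)$, which by Riesz representation gives $\mu=\nu$.

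The main step that requires any care is the passage from logarithmic-derivative equality to MGF equality, namely justifying differentiation under the integral sign and the positivity of $\hat\mu$; compactness of $\Omega$ makes both routine. The moment-determinacy step is automatic on a compact interval, so no delicate moment-problem argument is needed.
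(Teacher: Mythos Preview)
Your proof is correct and takes a somewhat different, arguably cleaner, route than the paper. The paper argues via a differential recursion
\[
    L_k(\mu)'(c) = L_{k+1}(\mu)(c) - L_k(\mu)(c)\,L_1(\mu)(c),
\]
which, by induction, shows that $L_1(\mu)\equiv L_1(\nu)$ forces $L_k(\mu)\equiv L_k(\nu)$ for every $k\ge 1$; evaluating at $c=0$ then gives equality of all moments and hence $\mu=\nu$. You instead observe directly that $L_1(\mu)=(\log\hat\mu)'$, integrate once using $\hat\mu(0)=\hat\nu(0)=1$ to obtain $\hat\mu\equiv\hat\nu$, and read off moment equality from the Taylor coefficients. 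Both arguments rely on the same analytic ingredients (differentiation under the integral sign, positivity of $\hat\mu$, moment determinacy on a compact set), but your log-derivative formulation bypasses the explicit recursion and reaches the MGF identity in one step; the paper's recursion, on the other hand, makes the structure of the $L_k$ family transparent, which is perhaps closer in spirit to how the multidimensional Lemma~\ref{lem:injective} is later used.
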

\begin{proof}
One has 
$$
	L_k(\mu)'(c) = L_{k+1}(\mu)(c) - L_k(\mu)(c) L_1(\mu)(c).
$$
Hence, by recursion, we have that
$$
	L_1(\mu)(c) = L_1(\nu)(c), \ \forall c \in \RR, \quad \Rightarrow\quad  L_k(\mu)(c) = L_k(\nu)(c), \ \forall c \in \RR, \ \forall k\geq 1,
$$
Evaluating the equation at $c=0$, we obtain that 
$$
L_k(\mu)(0) = L_k(\nu)(0), \ \forall k\geq 1
	\quad \Leftrightarrow\quad \forall k, \int y^k \d \mu(y) = \int y^k \d \nu(y),
$$
which is equivalent to $\mu = \nu$.
\end{proof}
Using Lemma~\ref{lem:injective-1d-case}, we arrive at the following lemma.
\begin{lemma}[Lemma~\ref{lem:inj-Laplace} in the main text]
\label{lem:injective}
Let $\Omega \subset \RR^d$ be a compact set, and let $\mu, \nu \in \mathcal{P}(\Omega)$. Then, 
\[
L(\mu)(a,c) = L(\nu)(a,c), \ \forall a \in \RR^d, \forall c \in \RR, 
\
\Rightarrow
\
\mu =\nu.
\]
where
\[
L(\mu)(a,c)
:= 
\frac{ \int \exp( c\dotp{a}{y}) \dotp{a}{y} \d \mu(y)}{ \int \exp( c\dotp{a}{z}) \d \mu(z) }.
\]
\end{lemma}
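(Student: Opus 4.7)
The plan is to reduce the multidimensional statement to the one-dimensional Lemma~\ref{lem:injective-1d-case} via pushforwards along linear functionals, and then invoke a Cramér--Wold-type argument to recover $\mu = \nu$ from equality of all one-dimensional projections.

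First, I would fix $a \in \RR^d$ and consider the linear map $T_a : \Omega \to \RR$, $T_a(y) \eqdef \dotp{a}{y}$. Since $\Omega$ is compact, $T_a(\Omega)$ is a compact subset of $\RR$, and the pushforwards $\mu_a \eqdef (T_a)_\sharp \mu$ and $\nu_a \eqdef (T_a)_\sharp \nu$ belong to $\Pp(T_a(\Omega))$. Applying the change-of-variables formula \eqref{eq:dfn-pushfwd} to the numerator and denominator gives
\[
L(\mu)(a,c) = \frac{\int e^{c s} \, s \, \d\mu_a(s)}{\int e^{c s} \, \d\mu_a(s)} = L_1(\mu_a)(c),
\]
and similarly $L(\nu)(a,c) = L_1(\nu_a)(c)$. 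Therefore the hypothesis $L(\mu)(a,c) = L(\nu)(a,c)$ for all $c \in \RR$ translates to $L_1(\mu_a)(c) = L_1(\nu_a)(c)$ for all $c \in \RR$.

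By Lemma~\ref{lem:injective-1d-case} applied to $\mu_a$ and $\nu_a$, we conclude that $\mu_a = \nu_a$ for every $a \in \RR^d$. In other words, the two measures $\mu$ and $\nu$ have identical one-dimensional projections in every direction.

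To finish, I would invoke the standard Cramér--Wold principle: equality of all one-dimensional projections of compactly supported measures forces equality of the measures. Concretely, $\mu_a = \nu_a$ implies equality of their characteristic functions, i.e. $\widehat{\mu}(t a) = \widehat{\nu}(t a)$ for all $t \in \RR$; letting $a$ range over $\RR^d$ yields $\widehat\mu = \widehat\nu$ on $\RR^d$, hence $\mu = \nu$ by injectivity of the Fourier transform on finite Borel measures. Alternatively, because $\Omega$ is compact, one can argue directly: $\mu_a = \nu_a$ gives $\int \dotp{a}{y}^k \d\mu(y) = \int \dotp{a}{y}^k \d\nu(y)$ for every $k \in \mathbb{N}$ and $a \in \RR^d$, and polarizing in $a$ determines $\int P(y)\d\mu(y) = \int P(y)\d\nu(y)$ for every polynomial $P$; Stone--Weierstrass on the compact set $\Omega$ then yields $\mu = \nu$.

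There is no substantive obstacle here: the one-dimensional lemma supplies the analytic core, and the only thing to do is the formal reduction plus a well-known determinacy result for measures from their one-dimensional marginals.
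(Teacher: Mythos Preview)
Your proof is correct and follows essentially the same route as the paper: reduce to the one-dimensional Lemma~\ref{lem:injective-1d-case} via pushforwards along linear functionals, then conclude from the fact that a probability measure is determined by its one-dimensional marginals. The only cosmetic difference is that the paper phrases the last step as ``injectivity of the Radon transform'' (restricting to unit directions $e\in\mathbb{S}^d$), whereas you invoke Cram\'er--Wold / Fourier injectivity or the moment argument via Stone--Weierstrass; these are equivalent formulations of the same determinacy principle.
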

\begin{proof}
We define
$$
	\forall e \in \mathbb{S}^d, \quad
	\mu^{e} := (P_e)_\sharp \mu,
$$
where $\mathbb{S}^d$ is the $d$-dimensional sphere, and $P_e(x) = \langle x,e \rangle$ is the projection on $e$.
We see that 
\[
L(\mu)(e,c)
= 
\frac{ \int \exp( c\dotp{e}{y}) \dotp{e}{y} \d \mu(y)}{ \int \exp( c\dotp{e}{z}) \d \mu(z) }
= 
\frac{ \int e^{c s} s \d \mu^{e}(s)}{ \int e^{c r} \d \mu^{e}(r)}.
\]
By Lemma~\ref{lem:injective-1d-case}, we can show that 
\[
\forall e, (P_e)_\sharp \mu = (P_e)_\sharp \nu,
\]
which implies that, by the injectivity of the Radon transform 
(see e.g., \cite[Theorem A]{boman2009support}),
$$
	\mu=\nu.
$$
\end{proof}


\subsection{Proof of Lemma~\ref{lem:vector-valued-in-context}}
\label{app:vector-valued-in-context}
We write 
\[
\Lambda^\star(\mu,x)
=\left(
\Lambda^\star_1(\mu,x), \cdots, \Lambda^\star_{d'}(\mu,x)
\right),
\]
where $\Lambda^\star_h : \Pp(\Omega)\times \Omega \to \RR$, so that
\[
\Lambda^\star(\mu,x) = \sum_{h=1}^{d'}\Lambda^\star_h(\mu,x)e_h,
\]
where $(e_h)_{h \in [d']}$ is the standard basis in $\RR^{d'}$. For each $h=1,\ldots,d'$, we apply Proposition~\ref{prop-A-dense} to $\Lambda^\star_h$ and conclude that there exist $T, N \in \mathbb{N}$ and $(\lambda_{t,n}^h)_{t \in [T], n \in [N]}$ such that 
\[
\forall (\mu,x) \in \Pp(\Omega) \times \Omega , \quad
\left|
\Lambda^\star_h(\mu, x)
-
\sum_{n=1}^N \gamma_{\lambda_{1,n}^h}(\mu, x) \odot \cdots \odot \gamma_{\lambda_{T,n}^h}(\mu, x) 
\right|
\leq \frac{\varepsilon}{\sqrt{d'}}.
\]
This implies that 
\begin{align}
\forall (\mu,x) \in \Pp(\Omega) \times \Omega , \quad
&
\left|
\Lambda^\star(\mu, x)
-
\sum_{h=1}^{d'}
\left[
\sum_{n=1}^N \gamma_{\lambda_{1,n}^h}(\mu, x) \odot \cdots \odot \gamma_{\lambda_{T,n}^h}(\mu, x) 
\right]e_h
\right|^2
\nonumber
\\
&
\leq 
\sum_{h=1}^{d'}
\left|
\Lambda^\star_h(\mu, x)
-
\sum_{n=1}^N \gamma_{\lambda_{1,n}^h}(\mu, x) \odot \cdots \odot \gamma_{\lambda_{T,n}^h}(\mu, x) 
\right|^2
\leq \varepsilon^2.
\label{eq:vector-valued-estimate}
\end{align}
Here, we wrote
\[
\lambda^h_{t,n} = (a^h_{t,n}, b^h_{t,n}, c^h_{t,n}, v^h_{t,n}) \in \mathbb{R}^d \times \mathbb{R} \times \mathbb{R} \times \mathbb{R}.
\]
We introduce
\[
G(\mu, x) \eqdef 
\sum_{h=1}^{d'}
\left[
\sum_{n=1}^N \gamma_{\lambda_{1,n}^{h}}(\mu, x) \odot \cdots \odot \gamma_{\lambda_{T,n}^{h}}(\mu, x) 
\right]e_h,
\]
or
\begin{equation}
\label{eq:represent-G}
G(\mu, x) 
= \sum_{n=1}^N \bar{\gamma}_{\lambda_{1,n}}(\mu, x) \odot \cdots \odot \bar{\gamma}_{\lambda_{T,n}}(\mu, x),
\end{equation}
in which 
\[
\bar{\gamma}_{\lambda_{t,n}}(\mu, x) 
\eqdef 
\left( \gamma_{\lambda_{t,n}^1}(\mu, x), ..., \gamma_{\lambda_{t,n}^{d'}}(\mu, x) \right) \in \RR^{d'}.
\]
We define self-attentions by 
\begin{equation}
\label{eq:def-gamma-tilde-theta}
\Gamma_{\tilde{\theta}_{t,n}}(\mu,x) := x + 
    \sum_{h=1}^{d'} \tilde{W}^h_{t,n}
    \int \frac{
        \exp\Big(
            \dotp{\tilde{Q}^h_{t,n} x}{\tilde{K}^h_{t,n} y}
        \Big)
    }{
        \int 
        \exp\Big(
            \dotp{\tilde{Q}^h_{t,n} x}{\tilde{K}^h_{t,n} z}
        \Big)
        \d \mu(z)
    } \tilde{V}^h_{t,n} y \d \mu(y), \ x \in \RR^{d'},
\end{equation}
where 
\[ 
\tilde{\theta}_{t,n} := ( \tilde{W}^h_{t,n}, \tilde{V}^h_{t,n}, \tilde{Q}^h_{t,n}, \tilde{K}^h_{t,n} )_{h=1,...,d'} \subset \mathbb{R}^{d' \times 1} \times \mathbb{R}^{1 \times d'} \times  \mathbb{R}^{1 \times d'} \times \mathbb{R}^{1 \times d'},
\] 
i.e., $\din(\tilde{\theta}_{t,n})
=d'$, $\dhead(\tilde{\theta}_{t,n})=k(\tilde{\theta}_{t,n})=1$ and 
\[
\tilde{W}^h_{t,n} = (0,...,0, \underbrace{1}_{\mathrm{h-th}}, 0,...,0)=e_h,
\]
\[
\tilde{V}^h_{t,n} 
= (0,...,0, \underbrace{v^h_{t,n}}_{\mathrm{h-th}},0...,0), \
\tilde{Q}^h_{t,n} 
= (0,...,0,\underbrace{c^h_{t,n}}_{\mathrm{h-th}},0...,0), \
\tilde{K}^h_{t,n} 
= (0,...,0,\underbrace{1}_{\mathrm{h-th}},0...,0).
\]
We define affine transforms, $F_{\tilde{\xi}_{t,n}} : \RR^d \to \RR^{d'}$, according to 
\begin{equation}
\label{eq:def-affine}
F_{\tilde{\xi}_{t,n}}(x) :=  A_{t,n}x + b_{t,n} ,
\end{equation}
where $\tilde{\xi}_{t,n} = (A_{t,n}, b_{t,n}) \in \mathbb{R}^{{d'} \times d} \times \mathbb{R}^{d'}$ in which
\[
A_{t,n} = (a^1_{t,n},...,a^{d'}_{t,n}) ,
\quad
b_{t,n} = (b^1_{t,n},...,b^{d'}_{t,n}) .
\]
Then we have the composition, 
\begin{align}
\Gamma_{\tilde{\theta}_{t,n}} &\diamond F_{\tilde{\xi}_{t,n}}(\mu,x)
= \Gamma_{\tilde{\theta}_{t,n}}((F_{\tilde{\xi}_{t,n}})_\sharp \mu, F_{\tilde{\xi}_{t,n}}(x))
= 
A_{t,n}x + b_{t,n} 
\label{eq:Gamma-F-form}
\\
& + 
\sum_{h=1}^{d'} \tilde{W}^h_{t,n}
    \int 
    \frac{
        \exp\Big(
            \dotp{\tilde{Q}^h_{t,n}(A_{t,n}x + b_{t,n}) }{\tilde{K}^h_{t,n} (A_{t,n}y + b_{t,n})}
        \Big)
    }{
        \int 
        \exp\Big(
           \dotp{\tilde{Q}^h_{t,n} (A_{t,n}x + b_{t,n})}{\tilde{K}^h_{t,n} (A_{t,n}z + b_{t,n})}
        \Big)
        \d \mu(z)
    }
    \nonumber
    \\
    &\qquad
    \times
    \tilde{V}^h_{t,n}(A_{t,n}y + b_{t,n}) \d \mu(y)
    =
\sum_{h=1}^{d'}
\Bigg[ 
\dotp{a^h_{t,n}}{x} + b^h_{t,n}
\nonumber
\\
&
+
    \int \frac{
        \exp\Big(
            (\dotp{a^h_{t,n}}{x} + b^h_{t,n}) c^h_{t,n} (\dotp{a^h_{t,n}}{y} + b^h_{t,n})
        \Big)
    }{
        \int 
        \exp\Big(
           (\dotp{a^h_{t,n}}{x} + b^h_{t,n}) c^h_{t,n} (\dotp{a^h_{t,n}}{z} + b^h_{t,n})
        \Big)
        \d \mu(z)
    } v^h_{t,n} (\dotp{a^h_{t,n}}{y} + b^h_{t,n}) \d \mu(y)
\Bigg]e_h
\nonumber
\\
&\qquad
= \sum_{h=1}^{d'} \gamma_{\lambda_{t,n}^h}(\mu, x) e_h 
= \bar{\gamma}_{\lambda_{t,n}}(\mu, x).
\nonumber
\end{align}
With (\ref{eq:represent-G}), we find that 
\begin{equation}
G(\mu, x) = 
\sum_{n=1}^{N} (\Gamma_{\tilde{\theta}_{1,n}} \diamond F_{\tilde{\xi}_{1,n}})(\mu,x) \odot \cdots \odot (\Gamma_{\tilde{\theta}_{T,n}} \diamond F_{\tilde{\xi}_{T,n}})(\mu, x).
\end{equation}
Therefore, with the estimate in (\ref{eq:vector-valued-estimate}), we obtain the statement in Lemma~\ref{lem:vector-valued-in-context}. \qed

\subsection{Proof of Lemma~\ref{lem:approximate-deep-trans}}
\label{app:approximate-deep-trans}


We first establish the following lemma.
\begin{lemma}
\label{lem:MLPs}
For any $\varepsilon>0$, there exists an MLP $\Phi : \mathbb{R}^{2d'} \to \mathbb{R}^{d'}$ such that 
\begin{align*}
\forall (\mu,x) &\in \Pp(\Omega) \times \Omega , \quad
\left|
G(\mu, x)
- G_{\Phi}(\mu, x)
\right| \leq \varepsilon,
\end{align*}
\begin{align*}
\text{where  \quad }
G_{\Phi}(\mu, x)
\eqdef
\sum_{n=1}^{N}
\Phi
\Bigl(
&
(\Gamma_{\tilde{\theta}_{T,n}} \diamond F_{\tilde{\xi}_{T,n}})(\mu,x), \Phi
\Bigl( (\Gamma_{\tilde{\theta}_{T-1,n}} \diamond F_{\tilde{\xi}_{T-1,n}})(\mu,x),
\\
&
\cdots 
\Phi
\Bigl( (\Gamma_{\tilde{\theta}_{2,n}} \diamond F_{\tilde{\xi}_{2,n}})(\mu,x), 
\Phi
\Bigl(
(\Gamma_{\tilde{\theta}_{1,n}} \diamond F_{\tilde{\xi}_{1,n}})(\mu,x), {\bf 1}_{d'} 
\Bigr)
\Bigr)
\Bigr)
\Bigr).
\end{align*}
\end{lemma}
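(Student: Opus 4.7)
\textbf{Proof plan for Lemma~\ref{lem:MLPs}.}
The plan is to approximate the bilinear component-wise product $(a,b) \in \RR^{2d'} \mapsto a \odot b \in \RR^{d'}$ by a single fixed MLP $\Phi$ on a large compact set, then bound the error accumulated by replacing each of the $T$ nested multiplications (in each of the $N$ summands) with $\Phi$. First I would establish uniform boundedness: since $\Pp(\Omega) \times \Omega$ is compact for the $(\mathrm{weak}^* \times \ell^2)$ topology (see the proof of Proposition~\ref{prop-A-dense}) and each $(\Gamma_{\tilde\theta_{t,n}} \diamond F_{\tilde\xi_{t,n}})(\mu,x)$ is continuous in $(\mu,x)$, there exists a constant $M \geq 1$ such that $|(\Gamma_{\tilde\theta_{t,n}} \diamond F_{\tilde\xi_{t,n}})(\mu,x)|_\infty \leq M$ for all $t,n$ and all $(\mu,x) \in \Pp(\Omega) \times \Omega$. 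In particular, the iterated true products
$$
u_{k,n}(\mu,x) \eqdef (\Gamma_{\tilde\theta_{k,n}} \diamond F_{\tilde\xi_{k,n}})(\mu,x) \odot \cdots \odot (\Gamma_{\tilde\theta_{1,n}} \diamond F_{\tilde\xi_{1,n}})(\mu,x) \odot {\bf 1}_{d'}
$$
satisfy $|u_{k,n}|_\infty \leq M^k$.

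Next I would invoke the classical universal approximation theorem for MLPs: for any $\delta > 0$, there exists an MLP $\Phi : \RR^{2d'} \to \RR^{d'}$ such that $|\Phi(a,b) - a \odot b|_\infty \leq \delta$ uniformly on the compact box $[-R,R]^{2d'}$, where $R \eqdef M^{T+1}$ is chosen large enough to contain all iterates of the recursion that defines $G_\Phi$. The approximate iterates
$$
\tilde u_{0,n} \eqdef {\bf 1}_{d'}, \qquad
\tilde u_{k,n} \eqdef \Phi\!\bigl( (\Gamma_{\tilde\theta_{k,n}} \diamond F_{\tilde\xi_{k,n}})(\mu,x),\, \tilde u_{k-1,n} \bigr)
$$
then remain in $[-R,R]^{d'}$ for $\delta$ sufficiently small (chosen in the final step), so that $\Phi$'s approximation guarantee applies at each layer.

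A straightforward induction on $k$ then bounds the propagated error. Writing $\varepsilon_{k,n} \eqdef |\tilde u_{k,n} - u_{k,n}|_\infty$, adding and subtracting $(\Gamma_{\tilde\theta_{k,n}} \diamond F_{\tilde\xi_{k,n}})(\mu,x) \odot \tilde u_{k-1,n}$ inside the norm, and using the Lipschitz continuity of componentwise multiplication on bounded inputs, yields
$$
\varepsilon_{k,n} \leq \delta + M\, \varepsilon_{k-1,n},
\qquad \varepsilon_{0,n} = 0,
$$
hence $\varepsilon_{T,n} \leq \delta \sum_{j=0}^{T-1} M^j \leq T M^{T-1} \delta$. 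Summing over the $N$ summands gives $|G_\Phi(\mu,x) - G(\mu,x)|_\infty \leq N T M^{T-1} \delta$ uniformly in $(\mu,x)$. Choosing $\delta \eqdef \varepsilon / (N T M^{T-1} \sqrt{d'})$ (and small enough that the iterates $\tilde u_{k,n}$ stay inside $[-R,R]^{d'}$) then yields the desired bound.

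The main obstacle is the circular dependency between the choice of $R$ (the compact set on which $\Phi$ must approximate $\odot$) and the boundedness of the iterates $\tilde u_{k,n}$, which itself depends on $\Phi$ being a good approximation. This is resolved by the standard device of first fixing a conservative $R$ (e.g., $R = M^{T+1}$, strictly larger than the bound $M^T$ on the true iterates), then choosing $\delta$ small enough that every approximate iterate lies within $R - M^T$ of the corresponding true iterate, so in particular within $[-R,R]^{d'}$. Everything else is a routine combination of universal MLP approximation and error propagation.
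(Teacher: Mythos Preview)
Your proposal is correct and follows essentially the same approach as the paper: approximate the component-wise product by a single MLP on a sufficiently large compact set and then control the error accumulated through the $T$ nested applications. In fact your argument is more explicit than the paper's, which simply asserts the final $\varepsilon/N$ bound after establishing the uniform bound $C_{\tilde\Gamma}$ via the explicit formula~\eqref{eq:Gamma-F-form}; you additionally spell out the induction $\varepsilon_{k,n} \leq \delta + M\,\varepsilon_{k-1,n}$ and resolve the circularity between the choice of $R$ and the boundedness of the approximate iterates, both of which the paper leaves implicit.
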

\begin{proof}
We note that 
\begin{multline*}
(\Gamma_{\tilde{\theta}_{1,n}} \diamond F_{\tilde{\xi}_{1,n}})(\mu,x) \odot \cdots \odot (\Gamma_{\tilde{\theta}_{T,n}} \diamond F_{\tilde{\xi}_{T,n}})(\mu, x)
\\
= (\Gamma_{\tilde{\theta}_{T,n}} \diamond F_{\tilde{\xi}_{T,n}})(\mu,x)
\odot 
\Bigl(
(\Gamma_{\tilde{\theta}_{T-1,n}} \diamond F_{\tilde{\xi}_{T-1,n}})(\mu,x)
\odot  \cdots 
\\
\odot \Bigl(
(\Gamma_{\tilde{\theta}_{2,n}} \diamond F_{\tilde{\xi}_{2,n}})(\mu,x) \odot 
\Bigl(\Gamma_{\tilde{\theta}_{1,n}} \diamond F_{\tilde{\xi}_{1,n}})(\mu,x) \odot {\bf 1}_{d'} 
\Bigr)
\Bigr) 
\Bigr). 
\end{multline*}
Because the component-wise multiplication map $(x, y) \in \RR^{2d'} \mapsto x \odot y \in \RR^{d'}$ is continuous, by the universality of MLPs, for any $\varepsilon>0$ and $R>0$, there exists an MLP $\Phi : \mathbb{R}^{2d'} \to \mathbb{R}^{d'}$, such that 
\begin{equation}
\label{eq:univ-multi}
\forall (x, y) \in B_{\mathbb{R}^{2d'}}(0, R), \quad \left| x \odot y - \Phi(x,y) \right| \leq \varepsilon.
\end{equation}
Since $\Omega \subset \RR^d$ is compact then $0 \le C_{\Omega} := \sup_{x \in \Omega}\| x\|_2$ is finite. Thus, using (\ref{eq:Gamma-F-form}), we obtain the estimate,
\begin{multline}
\left|  (\Gamma_{\tilde{\theta}_{t,n}} \diamond F_{\tilde{\xi}_{t,n}})(\mu,x)
\right|
\leq 
\| A_{t,n}\|_{2} C_{\Omega} + \| b_{t,n}\|_2 + 
\sum_{h=1}^{d'}
(\| A_{t,n}\|_{2} C_{\Omega} + \| b_{t,n}\|_2) \, \|\tilde{W}^h_{t,n}  \tilde{V}^h_{t,n} \|_2
\\
\leq 
\max_{t \in [T], n \in [N]}
\left(
(\| A_{t,n}\|_{2} C_{\Omega} + \| b_{t,n}\|_2)
(1 + \sum_{h=1}^{d'}\| \tilde{W}^h_{t,n} \tilde{V}^h_{t,n} \|_2) \right)
 \\[0.25cm]
 =: \! C_{\tilde{\Gamma}}\ \text{for all $(\mu, x) \in \Pp(\Omega) \times \Omega$} ,
\label{eq:upper-bound-tilde-gamma}
\end{multline}
where the constant, $C_{\tilde{\Gamma}}>0$, depends on $\Omega$, $\tilde{W}^h_{t,n}, \tilde{V}^h_{t,n}, A_{t,n}, b_{t,n}$, but is independent of $t, n, \mu$ and $x$. Thus, using the universality in (\ref{eq:univ-multi}), choosing a large radius $R>0$ depending on the constant $C_{\tilde{\Gamma}}>0$, we can show that 
\begin{multline*}
\Biggl|
(\Gamma_{\tilde{\theta}_{1,n}} \diamond F_{\tilde{\xi}_{1,n}})(\mu,x) \odot \cdots \odot (\Gamma_{\tilde{\theta}_{T,n}} \diamond F_{\tilde{\xi}_{T,n}})(\mu, x)
\\
- 
\Phi
\Bigl(
(\Gamma_{\tilde{\theta}_{T,n}} \diamond F_{\tilde{\xi}_{T,n}})(\mu,x), \Phi
\Bigl( (\Gamma_{\tilde{\theta}_{T-1,n}} \diamond F_{\tilde{\xi}_{T-1,n}})(\mu,x),
\\
\cdots 
\Phi
\Bigl( (\Gamma_{\tilde{\theta}_{2,n}} \diamond F_{\tilde{\xi}_{2,n}})(\mu,x), 
\Phi
\Bigl(
(\Gamma_{\tilde{\theta}_{1,n}} \diamond F_{\tilde{\xi}_{1,n}})(\mu,x), {\bf 1}_{d'} 
\Bigr)
\Bigr)
\Bigr)
\Bigr)
\Biggr| 
\leq \frac{\varepsilon}{N}.
\end{multline*}
Upon summing from $n=1,\ldots,N$ and using the form (\ref{eq:def-G}), we complete the proof of Lemma~\ref{lem:MLPs}.
\end{proof}
\begin{remark}
\label{rem:quantitative-estimate-MLPs}
(The challenge to derive quantitative estimates.) The key is to approximate and capture the mentioned multiplicity by MLPs, for which quantitative estimates have been studied, e.g., \cite[Lemma~6.2]{elbrachter2022dnn}, which is a variant of~\cite[Proposition~2]{yarotsky2017error}. However, the depth and width of MLPs depend on the bound of input variables. Specifically, an existential $\Phi$ in the above depends on the bound $C_{\tilde{\Gamma}}$ (see (\ref{eq:upper-bound-tilde-gamma})), which in turn depends on parameters in $\Gamma_{\tilde{\theta}_{t,n}} \diamond F_{\tilde{\xi}_{t,n}}$ that are chosen to approximate $\Gamma^{\ast}$ within $\varepsilon$ through the application of the Stone-Weierstrass theorem (see Proposition~\ref{prop-A-dense}).  Thus, providing the quantitative estimate for the MLP, $\Phi$, is challenging.
\end{remark}

\medskip

Finally in this appendix we prove, by construction, the following result.

\begin{lemma}
\label{lem:representation}
Let $\Phi : \RR^{2d'} \to \RR^{d'}$ be an MLP. There exist $\xi_{0}$, $(\xi_{t,n})_{t \in [T],n \in [N]}$, $\xi_{\ast}$, and $\theta_0$, $(\theta_{t,n})_{t \in [T],n \in [N]}$, $\theta_{\ast}$ such that
\begin{multline*}
\forall (\mu,x) \in \Pp(\Omega) \times \Omega , \quad
G_{\Phi}(\mu, x)
=
F_{\xi_{\ast}} \diamond \Gamma_{\theta_{\ast}}
\diamond 
\left( 
\diamond_{n=1}^{N}
\diamond_{t=1}^{T} F_{\xi_{t,n}} 
\diamond \Gamma_{\theta_{t,n}}
\right) 
\diamond 
F_{\xi_{0}} 
\diamond 
\Gamma_{\theta_{0}}(\mu,x).
\end{multline*}
with following sizes:
$$
\din(\theta_{0}) = d, \quad
\dhead(\theta_{0}) = k(\theta_{0}) = H(\theta_{0})=1,
$$
$$
\din(\theta_{t,n}) = d + 3d', \quad
\dhead(\theta_{t,n}) = k(\theta_{t,n}) = 1, \quad H(\theta_{t,n})=d',
$$
$$
\din(\theta_{*}) = d + 3d', \quad
\dhead(\theta_{*}) = k(\theta_{*}) = H(\theta_{*})=1.
$$
\end{lemma}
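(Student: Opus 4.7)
The plan is to realize $G_\Phi(\mu,x)$ exactly as a deep transformer acting on an augmented embedding $\tilde x = (x,\,A,\,S,\,Z) \in \RR^{d+3d'}$, where $A \in \RR^{d'}$ is the running $\Phi$-accumulator (initialized to $\mathbf{1}_{d'}$), $S \in \RR^{d'}$ is the running partial sum (initialized to $\mathbf{0}_{d'}$), and $Z \in \RR^{d'}$ is a scratch slot. The key invariant of the construction is that immediately before each $\Gamma_{\theta_{t,n}}$ the scratch slot of every token already contains the preactivation $A_{t,n} y + b_{t,n}$, so that the bilinear form $\dotp{Q^h \tilde x}{K^h \tilde y}$ can reproduce the biased exponent of $\gamma_{\lambda^h_{t,n}}$ without the need for a persistent constant-$1$ dimension in the embedding; this is what allows the dimension to stay at $d+3d'$ instead of $d+1+3d'$.

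Concretely, I take $\Gamma_{\theta_0}$ to be the identity on $\RR^d$ (choosing $V^1 = 0$ so that the attention update vanishes) and $F_{\xi_0}$ to be the affine MLP $x \mapsto (x,\,\mathbf{1}_{d'},\,\mathbf{0}_{d'},\,A_{1,1} x + b_{1,1})$. For each pair $(t,n)$, the attention $\Gamma_{\theta_{t,n}}$ has $d'$ heads, the $h$-th of which acts only through the $h$-th scratch coordinate: $Q^h, K^h, V^h \in \RR^{1 \times (d+3d')}$ are row vectors supported on that single coordinate with values $c^h_{t,n}$, $1$, $v^h_{t,n}$ respectively, and $W^h \in \RR^{(d+3d') \times 1}$ is the column vector with a $1$ on the same coordinate. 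Because all $W^h$ vanish outside the scratch block, the attention leaves the $(x,A,S)$-coordinates of every token unchanged; meanwhile, by the scratch invariant, the softmax integrates the value $v^h_{t,n}(\dotp{a^h_{t,n}}{y}+b^h_{t,n})$ against the kernel $\exp\bigl(c^h_{t,n}(\dotp{a^h_{t,n}}{x}+b^h_{t,n})(\dotp{a^h_{t,n}}{y}+b^h_{t,n})\bigr)$, and the skip connection of $\Gamma$ folds the stored preactivation back in, making scratch$[h]$ become exactly $\gamma_{\lambda^h_{t,n}}(\mu,x)$. The post-attention block $F_{\xi_{t,n}}$ is then a tokenwise MLP that (i)~updates $A \leftarrow \Phi(Z,A)$, (ii)~preloads $Z \leftarrow A_{t+1,n} x + b_{t+1,n}$, and when $t=T$ additionally commits $S \leftarrow S+A$ and resets $A \leftarrow \mathbf{1}_{d'}$ before preloading for iteration $(1,n+1)$; since $\Phi$ is itself an MLP and the remaining operations are affine, $F_{\xi_{t,n}}$ is obtained by composition. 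Finally $\Gamma_{\theta_*}$ is the identity on $\RR^{d+3d'}$ and $F_{\xi_*}$ is the linear projection $(x,A,S,Z) \mapsto S$.

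A short induction on $(n,t)$ then verifies the construction. Because the $x$-slot is never touched, the marginal of the augmented pushforward measure on the first $d$ coordinates remains the original $\mu$ throughout, so that the softmax denominators in $\Gamma_{\theta_{t,n}}$ really do integrate against $\mu$. Using the composition rule \eqref{eq:compos-measures} together with the per-head computation above, one reads off that after iteration $(t,n)$ the slot $A$ holds $\Phi(\bar\gamma_{\lambda_{t,n}},\Phi(\bar\gamma_{\lambda_{t-1,n}},\ldots,\Phi(\bar\gamma_{\lambda_{1,n}},\mathbf{1}_{d'})\ldots))$ and $S$ holds the sum over the completed indices $n' < n$; hence after the last iteration $S = G_\Phi(\mu,x)$ and $F_{\xi_*}$ extracts the answer. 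The dimension bounds $\din(\theta_0)=d$, $\din(\theta_{t,n})=\din(\theta_*)=d+3d'$, $H(\theta_{t,n})=d'$, $H(\theta_0)=H(\theta_*)=1$, and $\dhead=k=1$ are then immediate from the parameter choices. The main obstacle, and essentially the only nontrivial step, is the per-head algebraic check that the preloaded scratch coordinate plus the skip connection of $\Gamma_{\theta_{t,n}}$ reproduces $\gamma_{\lambda^h_{t,n}}$ exactly; this is precisely what absorbs the biases $b^h_{t,n}$ into the scratch slot and keeps the embedding dimension at $d+3d'$.
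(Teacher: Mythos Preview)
Your construction is correct and is essentially the same as the paper's: both lift tokens to $\RR^{d+3d'}$ carrying $(x,\text{preactivation},\text{accumulator},\text{sum})$, use an identity $\Gamma_{\theta_0}$ plus affine $F_{\xi_0}$ to initialize, let each $\Gamma_{\theta_{t,n}}$ act with $d'$ rank-one heads on the preactivation slot to turn it into $\bar\gamma_{\lambda_{t,n}}$ via the skip connection, and use $F_{\xi_{t,n}}$ to apply $\Phi$, reload the next preactivation, and (at $t=T$) commit to the sum and reset the accumulator, before a final identity attention and a projection. The only differences are cosmetic --- you permute the three $d'$-blocks and call the preactivation block a ``scratch'' slot --- and your justification that the softmax integrates against $\mu$ would be cleaner if stated as ``the scratch slot is always a fixed affine function of the untouched $x$-slot, hence its marginal under the running pushforward is $(F_{\tilde\xi_{t,n}})_\sharp\mu$'', which is exactly how the paper tracks the measures $\mu_{t,n}$.
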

\begin{proof}
The proof is based on the following scheme: 
\begin{align*}
x 
&
\xrightarrow[\textbf{[Step A]}]{F_{\xi_0} \diamond \Gamma_{\theta_0}}
\begin{pmatrix}
   x \\
   F_{\tilde{\xi}_{1,1}}(x) \\
   \varphi_{1,1}(x) \\
   f_1(x)
\end{pmatrix}
\\
&
\xrightarrow[\textbf{[Step B]}]{F_{\xi_{1,1}} \diamond \Gamma_{\theta_{1,1}}}
\begin{pmatrix}
   x \\
   F_{\tilde{\xi}_{2,1}}(x) \\
   \varphi_{2,1}(x) \\
   f_1(x)
\end{pmatrix}
\xrightarrow[\textbf{[Step B]}]{F_{\xi_{2,1}} \diamond \Gamma_{\theta_{2,1}}} 
\cdots
\xrightarrow[\textbf{[Step B]}]{F_{\xi_{T-1,1}} \diamond \Gamma_{\theta_{T-1,1}}}
\begin{pmatrix}
   x \\
   F_{\tilde{\xi}_{T,1}}(x) \\
   \varphi_{T,1}(x) \\
   f_1(x)
\end{pmatrix}
\xrightarrow[\textbf{[Step C]}]{F_{\xi_{T,1}} \diamond \Gamma_{\theta_{T,1}}}
\begin{pmatrix}
   x \\
   F_{\tilde{\xi}_{2,1}}(x) \\
   \varphi_{1,2}(x) \\
   f_2(x)
\end{pmatrix}
\\
&
\xrightarrow[\textbf{[Step B]}]{F_{\xi_{1,2}} \diamond \Gamma_{\theta_{1,2}}}
\begin{pmatrix}
   x \\
   F_{\tilde{\xi}_{2,2}}(x) \\
   \varphi_{2,2}(x) \\
   f_2(x)
\end{pmatrix}
\xrightarrow[\textbf{[Step B]}]{F_{\xi_{2,2}} \diamond \Gamma_{\theta_{2,2}}} 
\cdots
\xrightarrow[\textbf{[Step B]}]{F_{\xi_{T-1,2}} \diamond \Gamma_{\theta_{T-1,2}}}
\begin{pmatrix}
   x \\
   F_{\tilde{\xi}_{T,2}}(x) \\
   \varphi_{T,2}(x) \\
   f_2(x)
\end{pmatrix}
\xrightarrow[\textbf{[Step C]}]{F_{\xi_{T,2}} \diamond \Gamma_{\theta_{T,2}}}
\begin{pmatrix}
   x \\
   F_{\tilde{\xi}_{1,3}}(x) \\
   \varphi_{1,3}(x) \\
   f_3(x)
\end{pmatrix}
\\
&
\hspace{5.5cm} \vdots
\\
&
\xrightarrow[\textbf{[Step B]}]{F_{\xi_{1,N}} \diamond \Gamma_{\theta_{1,N}}}
\begin{pmatrix}
   x \\
   F_{\tilde{\xi}_{2,N}}(x) \\
   \varphi_{2,N}(x) \\
   f_N(x)
\end{pmatrix}
\xrightarrow[\textbf{[Step B]}]{F_{\xi_{2,N}} \diamond \Gamma_{\theta_{2,N}}} 
\cdots
\xrightarrow[\textbf{[Step B]}]{F_{\xi_{T-1,N}} \diamond \Gamma_{\theta_{T-1,N}}}
\begin{pmatrix}
   x \\
   F_{\tilde{\xi}_{T,N}}(x) \\
   \varphi_{T,N}(x) \\
   f_N(x)
\end{pmatrix}
\xrightarrow[\textbf{[Step C]}]{F_{\xi_{T,N}} \diamond \Gamma_{\theta_{T,N}}}
\begin{pmatrix}
   x \\
   F_{\tilde{\xi}_{1,N+1}}(x) \\
   \varphi_{1,N+1}(x) \\
   f_{N+1}(x)=
\end{pmatrix}
\\
&
\xrightarrow[\textbf{[Step D]}]
{F_{\xi_{\ast}} \diamond \Gamma_{\theta_{\ast}}}f_{N+1}(x)
=
G_{\Phi}(\mu, x)
\end{align*}
where $\varphi_{t,n} : \RR^d \to \RR^{d'}$ is given by
\[
\varphi_{t,n}(x):=
\left\{
\begin{array}{ll}
\Phi
\Bigl(
(\Gamma_{\tilde{\theta}_{t,n}} \diamond F_{\tilde{\xi}_{t,n}})(\mu,x), \Phi
\Bigl( (\Gamma_{\tilde{\theta}_{t-1,n}} \diamond F_{\tilde{\xi}_{t-1,n}})(\mu,x)
\\
\cdots 
\Phi
\Bigl( (\Gamma_{\tilde{\theta}_{2,n}} \diamond F_{\tilde{\xi}_{2,n}})(\mu,x), 
\Phi
\Bigl(
(\Gamma_{\tilde{\theta}_{1,n}} \diamond F_{\tilde{\xi}_{1,n}})(\mu,x), {\bf 1}_{d'} 
\Bigr)
\Bigr)
\Bigr)
\Bigr)), & t \geq 2 \\
{\bf 1}_{d'}, & t = 1
\end{array}
\right.,
\]

and $f_{n} : \RR^d \to \RR^{d'}$ by 
\[ 
f_{n}(x):=
\left\{
\begin{array}{ll}
\sum_{i=1}^{n-1}\varphi_{T,i}(x), & n \geq 2 \\
0 & n = 1
\end{array}
\right.,
\] 
where $\Gamma_{\tilde{\theta}_{t,n}}$ and  $F_{\tilde{\xi}_{t,n}} : \RR^d \to \RR^{d'}$ are the self-attention and affine maps chosen in (\ref{eq:def-gamma-tilde-theta}) and (\ref{eq:def-affine}), respectively.
Here, $\Gamma_{\theta_0}$, $\Gamma_{\theta_{t,n}}$, $\Gamma_{\theta_\ast}$, $F_{\xi_0}$, $F_{\xi_{t,n}}$ and $F_{\xi_\ast}$ will be specified below, in the following steps:

\medskip\medskip

\noindent
{\bf [Step A]}
Let $\Gamma_{\theta_0}(\mu) : \RR^d \to \RR^d$ be 
\[
\Gamma_{\theta_0}(\mu, x) = x ,
\]
and let $F_{\xi_0} : \mathbb{R}^{d} \to \mathbb{R}^{d+3d'}$ be the affine transform defined by
\[
F_{\xi_0}(x)
:= (x, A_{1,1}x + b_{1,1}, {\bf 1}_{d'}, 0)
= (x, F_{\tilde{\xi}_{1,1}}(x), \varphi_{1,1}(x), f_1(x)).
\]
Then we see that 
\[
F_{\xi_0} \diamond \Gamma_{\theta_0}(\mu, x) 
= (x,F_{\tilde{\xi}_{1,1}}(x), \varphi_{1,1}(x), f_1(x)),
\]
and 
\[
\mu_{1,1} := 
\left( F_{\xi_0} \diamond \Gamma_{\theta_0}(\mu) \right)_{\sharp}\mu 
= \left( \mu, (F_{\tilde{\xi}_{1,1}})_{\sharp}\mu, (\varphi_{1,1})_{\sharp}\mu, (f_1)_{\sharp}\mu \right).
\]
We proceed with {\bf [Step B]} in which we handle the case when $n=t=1$.

\medskip\medskip

\noindent
{\bf [Step B]}
Let $t=1,...,T-1$ and $n=1,...,N$. We already have that
\[
\left(\diamond_{j=1}^{t-1}F_{\xi_{j,n}} \diamond \Gamma_{\theta_{j,n}}
\right) \diamond 
\left(\diamond_{i=1}^{n-1}\diamond_{s=1}^{T}
F_{\xi_{s,i}} \diamond \Gamma_{\theta_{s,i}}\right) \diamond F_{\xi_0} \diamond \Gamma_{\theta_0}(\mu,x)
= \left(
x, F_{\tilde{\xi}_{t,n}}(x), \varphi_{t,n}(x), f_{n}(x)
\right),
\]
and
\begin{multline*}
\mu_{t,n}
:=
\left(
\left(\diamond_{j=1}^{t-1}F_{\xi_{j,n}} \diamond \Gamma_{\theta_{j,n}}
\right) \diamond 
\left(\diamond_{i=1}^{n-1}\diamond_{s=1}^{T}
F_{\xi_{s,i}} \diamond \Gamma_{\theta_{s,i}}\right) \diamond F_{\xi_0} \diamond \Gamma_{\theta_0}(\mu)
\right)_{\sharp}\mu
\\
=\left( \mu, (F_{\tilde{\xi}_{t,n}})_{\sharp}\mu, (\varphi_{t,n})_{\sharp}\mu, (f_n)_{\sharp}\mu \right).
\end{multline*}
When $n=1$ or $t=1$, the above reduces to $\diamond_{i=1}^{n-1}\diamond_{s=1}^{T}
F_{\xi_{s,i}} \diamond \Gamma_{\theta_{s,i}}=I_{d + 3d'}$
or $\diamond_{j=1}^{t-1}F_{\xi_{j,n}} \diamond \Gamma_{\theta_{j,n}}=I_{d + 3d'}$.

\medskip\medskip

\noindent
Let $\Gamma_{\theta_{t,n}}(\mu_{t,n}) : \mathbb{R}^{d+3d'} \to \mathbb{R}^{d+3d'}$ be given by
\begin{multline*}
\Gamma_{\theta_{t,n}}(\mu_{t,n}, (x,u,p,w))
= (x, u, p, w) 
\\
+ \sum_{h=1}^{d'} W^h_{t,n} \!\!
    \int \!\! \frac{
        \exp\Big(
            \dotp{Q^h_{t,n} (x, u, p, w)}{K^h_{t,n} (y', v', q', z')}
        \Big)
    }{
        \int 
        \exp\Big(
            \dotp{Q^h_{t,n} (x, u, p, w)}{K^h_{t,n} (y, v, q, z)}
        \Big)
        \d \mu_{t,n}(y, v, q, z)
    }
\\
V^h_{t,n} (y', v', q', z') \d \mu_{t,n}(y', v', q', z')
\\
= \left(
x,
u + 
\sum_{h=1}^{d'} \tilde{W}^h_{t,n}
    \int \frac{
        \exp\Big(
            \dotp{\tilde{Q}^h_{t,n}u}{\tilde{K}^h_{t,n} v'}
        \Big)
    }{
        \int 
        \exp\Big(
            \dotp{\tilde{Q}^h_{t,n}u}{\tilde{K}^h_{t,n}v}
        \Big)
        \d \mu_{t,n}(y, v, q, z)
    } \tilde{V}^h_{t,n} v' \d \mu_{t,n}(y', v', q', z')
,
p
,
w
\right)
\\
= \left(
x,
\Gamma_{\tilde{\theta}_{t,n}}((F_{\tilde{\xi}_{t,n}})_{\sharp}\mu, u)
,
p
,
w
\right),
\end{multline*}
where $x,y,y'\in \RR^d$, $u,v,v' \in \RR^{d'}$, $p,q,q' \in \RR^{d'}$, and $w,z,z' \in \RR^{d'}$. 
Here, $\theta_{t,n}$ is given by
\[
\theta_{t,n} :=
( W^h_{t,n}, V^h_{t,n}, Q^h_{t,n}, K^h_{t,n} )_{h=1,...,d'} \subset \mathbb{R}^{d + 3d' \times 1} \times \mathbb{R}^{1 \times d + 3d'} \times  \mathbb{R}^{1 \times d + 3d'} \times \mathbb{R}^{1 \times d + 3d'},
\]
that is,
$$
\din(\theta_{t,n}) = d + 3d', \quad
\dhead(\theta_{t,n}) = k(\theta_{t,n}) = 1, \quad H(\theta_{t,n})=d',
$$
and
\begin{multline*}
W^h_{t,n} \eqdef
(O, \tilde{W}^h_{t,n}, O, O), \ 
V^h_{t,n} \eqdef
(O, \tilde{V}^h_{t,n}, O, O), 
\\ 
Q^h_{t,n} \eqdef
(O, \tilde{Q}^h_{t,n}, O, O), \ 
K^h_{t,n} \eqdef
(O, \tilde{K}^h_{t,n}, O, O). 
\end{multline*}
\medskip
Let $F_{\xi_{t,n}}:\mathbb{R}^{d + 3d'} \to \mathbb{R}^{d + 3d'}$ be defined by 
\begin{equation}
\label{eq:contextual-free-MLP-1}
F_{\xi_{t,n}}
(x, u, p, w)
= (x, A_{t+1,n}x + b_{t+1,n}, \Phi(u, p), w)
= (x, F_{\tilde{\xi}_{t+1,n}}(x), \Phi(u, p), w).
\end{equation}
Then we have
\begin{align*}
(\diamond_{j=1}^{t}F_{\xi_{j,n}} &\diamond \Gamma_{\theta_{j,n}}
) \diamond 
(\diamond_{i=1}^{n-1}\diamond_{s=1}^{T}
F_{\xi_{s,i}} \diamond \Gamma_{\theta_{s,i}}) \diamond F_{\xi_0} \diamond \Gamma_{\theta_0}(\mu,x)
\\
&
=
F_{\xi_{t,n}} \diamond \Gamma_{\theta_{t,n}} \diamond
(\diamond_{j=1}^{t-1}F_{\xi_{j,n}} \diamond \Gamma_{\theta_{j,n}}
) \diamond 
(\diamond_{i=1}^{n-1}\diamond_{s=1}^{T}
F_{\xi_{s,i}} \diamond \Gamma_{\theta_{s,i}}) \diamond F_{\xi_0} \diamond \Gamma_{\theta_0}(\mu,x)
\\
&
= 
F_{\xi_{t,n}} \diamond \Gamma_{\theta_{t,n}}  
(
\mu_{t,n},
(x, F_{\tilde{\xi}_{t,n}}(x), \varphi_{t,n}(x), f_{n}(x))
)
\\
&
= 
F_{\xi_{t,n}}
(
x,
\Gamma_{\tilde{\theta}_{t,n}}((F_{\tilde{\xi}_{t,n}})_{\sharp}\mu,
F_{\tilde{\xi}_{t,n}}(x)), \varphi_{t,n}(x), f_{n}(x)
)
\\
&
=
F_{\xi_{t,n}}
(
x,
(\Gamma_{\tilde{\theta}_{t,n}} \diamond F_{\tilde{\xi}_{t,n}})(\mu, x), \varphi_{t,n}(x), f_{n}(x)
)
\\
&
= 
\left(
x,
F_{\tilde{\xi}_{t+1,n}}(x), \varphi_{t+1,n}(x), f_{n}(x))
\right),
\end{align*}
and 
\begin{align*}
\mu_{t+1,n}
:=&
\left(
\left(\diamond_{j=1}^{t}F_{\xi_{j,n}} \diamond \Gamma_{\theta_{j,n}}
\right) \diamond 
\left(\diamond_{i=1}^{n-1}\diamond_{s=1}^{T}
F_{\xi_{s,i}} \diamond \Gamma_{\theta_{s,i}}\right) \diamond F_{\xi_0} \diamond \Gamma_{\theta_0}(\mu)
\right)_{\sharp}\mu
\\
=&\left(\mu, (F_{\tilde{\xi}_{t+1,n}})_{\sharp}\mu, (\varphi_{t+1,n})_{\sharp}\mu, (f_n)_{\sharp}\mu \right).
\end{align*}
We repeat {\bf [Step B]} until obtaining $\mu_{T,n}$. Once $\mu_{T,n}$ is obtained, we proceed with {\bf [Step C]}.

\medskip\medskip

\noindent
{\bf [Step C]}
Let $\Gamma_{\theta_{T,n}}(\mu_{T,n}) : \mathbb{R}^{d+3d'} \to \mathbb{R}^{d+3d'}$ be given by
\begin{multline*}
\Gamma_{\theta_{T,n}}(\mu_{T,n}, (x,u,p,w))
\\
= \!\left(\!
x,
u + 
\sum_{h=1}^{d'} \tilde{W}^h_{T,n}
    \int \frac{
        \exp\Big(
            \dotp{\tilde{Q}^h_{T,n}u}{\tilde{K}^h_{T,n} v'}
        \Big)
    }{
        \int 
        \exp\Big(
            \dotp{\tilde{Q}^h_{T,n}u}{\tilde{K}^h_{T,n}v}
        \Big)
        \d \mu_{T,n}(y, v, q, z)
    } \tilde{V}^h_{T,n} v' \d \mu_{T,n}(y', v', q', z')
,
p
,
w
\right)
\\
= \left(
x,
\Gamma_{\tilde{\theta}_{T,n}}((F_{\tilde{\xi}_{T,n}})_{\sharp}\mu, u)
,
p
,
w
\right).
\end{multline*}
Let $F_{\xi_{T,n}}:\mathbb{R}^{d + 3d'} \to \mathbb{R}^{d + 3d'}$ be defined by 
\begin{equation}
\label{eq:contextual-free-MLP-2}
F_{\xi_{T,n}}
(x, u, p, w)
=
(x,  A_{1,n+1}x + b_{1,n+1}, {\bf 1}_{d'}, w+\Phi(u, p))
=
(x,  F_{\tilde{\xi}_{1,n+1}}(x), \varphi_{1, n+1}(x), w+\Phi(u, p)).
\end{equation}

When $n=N$, we define by $F_{\tilde{\xi}_{1,N+1}}(x):=0$ and $\varphi_{1, N+1}:=0$ in the above. We find that
\begin{align*}
(\diamond_{j=1}^{T}F_{\xi_{j,n}} &\diamond \Gamma_{\theta_{j,n}}
) \diamond 
(\diamond_{i=1}^{n-1}\diamond_{s=1}^{T}
F_{\xi_{s,i}} \diamond \Gamma_{\theta_{s,i}}) \diamond F_{\xi_0} \diamond \Gamma_{\theta_0}(\mu,x)
\\
&
= 
F_{\xi_{T,n}}
(
x,
(\Gamma_{\tilde{\theta}_{T,n}}\diamond F_{\tilde{\theta}_{T,n}}) (\mu, x), \varphi_{T,n}(x), f_{n}(x)
)
\\
&
= 
\left(
x,
F_{\tilde{\xi}_{1,n+1}}(x), \varphi_{1, n+1}(x), f_{n+1}(x))
\right),
\end{align*}
and 
\begin{align*}
\mu_{T+1, n}
:=&
\left(
\left(\diamond_{j=1}^{T}F_{\xi_{j,n}} \diamond \Gamma_{\theta_{j,n}}
\right) \diamond 
\left(\diamond_{i=1}^{n-1}\diamond_{s=1}^{T}
F_{\xi_{s,i}} \diamond \Gamma_{\theta_{s,i}}\right) \diamond F_{\xi_0} \diamond \Gamma_{\theta_0}(\mu)
\right)_{\sharp}\mu
\\
=&\left(\mu, (F_{\tilde{\xi}_{1,n+1}})_{\sharp}\mu, (\varphi_{1, n+1})_{\sharp}\mu, (f_{n+1})_{\sharp}\mu \right).
\end{align*}
Denoting
\[
\mu_{1,n+1}
:=
\mu_{T+1,n},
\]
we return to {\bf [Step B]}, and repeat {\bf [Step B]} and {\bf [Step C]} until obtaining $\mu_{T+1, N}$. Once $\mu_{T+1, N}$ is obtained, we proceed with {\bf [Step D]}.

\medskip\medskip

\noindent
{\bf [Step D]}
Let $\Gamma_{\theta_\ast}(\mu_{T+1,N}) : \mathbb{R}^{d+3d'} \to \mathbb{R}^{d+3d'}$ be given by
\[
\Gamma_{\theta_\ast}(\mu_{T+1,N}, (x,u,p,w)) = (x, u, p, w),
\]
and let $F_{\xi_{\ast}} :  \mathbb{R}^{d+3d'} \to \mathbb{R}^{d'}$ be the affine transform defined by
\[
F_{\xi_{\ast}}(x,u,p,w)
:= w.
\]
Then we conclude that 
\begin{align*}
F_{\xi_{\ast}} \diamond \Gamma_{\theta_{\ast}}
&\diamond 
\left( 
\diamond_{n=1}^{N}
\diamond_{s=1}^{T} F_{\xi_{s,n}} 
\diamond \Gamma_{\theta_{s,n}}
\right) 
\diamond 
F_{\xi_{0}} 
\diamond 
\Gamma_{\theta_{0}}(\mu,x)
\\[0.25cm]
&
= 
F_{\xi_{\ast}} \diamond \Gamma_{\theta_{\ast}}
\left( 
\mu_{T+1, N}, \left(
x,
F_{\tilde{\xi}_{1,N+1}}(x), \varphi_{1,n+1}(x), f_{N+1}(x))
\right)
\right) 
= f_{N+1}(x)
\\
&
=
\sum_{n=1}^{N}
\Phi
\Bigl(
(\Gamma_{\tilde{\theta}_{T,n}} \diamond F_{\tilde{\xi}_{T,n}})(\mu,x), \Phi
\Bigl( (\Gamma_{\tilde{\theta}_{T-1,n}} \diamond F_{\tilde{\xi}_{T-1,n}})(\mu,x),
\\
&
\hspace{1cm}
\cdots 
\Phi
\Bigl( (\Gamma_{\tilde{\theta}_{2,n}} \diamond F_{\tilde{\xi}_{2,n}})(\mu,x), 
\Phi
\Bigl(
(\Gamma_{\tilde{\theta}_{1,n}} \diamond F_{\tilde{\xi}_{1,n}})(\mu,x), {\bf 1}_{d'} 
\Bigr)
\Bigr)
\Bigr)
\Bigr)
= 
G_{\Phi}(\mu, x).
\end{align*}
\end{proof}

\begin{remark}
Note that if the MLPs represent the identity map, such as when using ReLU activation functions, then context-free maps $F_{\xi_{t,n}}$ in \eqref{eq:contextual-free-MLP-1} and \eqref{eq:contextual-free-MLP-2} can be represented by MLPs. If this is not the case, it is sufficient to further approximate $F_{\xi_{T,n}}$ using MLPs.
\end{remark}

\section{Proofs in Section~\ref{sec-universality-masked}}

\subsection{Basic properties for the masked case}
\label{app:properties-masked}
\begin{lemma}
\label{lem:lemm-masked-continuity}
The map $[0,1] \ni t \mapsto \mu_t \in \LipCsig(\OmegaT)$ is continuous (for the weak$^*$ topology). 
\end{lemma}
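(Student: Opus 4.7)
The plan is to establish weak$^*$ continuity by testing against an arbitrary $f \in \mathcal{C}(\OmegaT)$, showing that $h(t) \eqdef \int f\,\d\mu_t$ is continuous on $[0,1]$. Rewriting
$$
h(t) = \frac{1}{\muT([0,t])}\int f(x,s)\,1_{[0,t]}(s)\,\d\mu(x,s),
$$
the key observation is that the assumption $\mu \in \LipCsig(\OmegaT)$ simultaneously controls both the normalizing denominator (via $\muT([0,t]) \geq \muT(\{0\}) \geq \sigma$) and the regularity of the numerator (via the Lipschitz regularity of the disintegration).

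The first step is to use the Wasserstein-Lipschitz hypothesis on $s \mapsto \mu(\cdot|s)$ to show that the scalar function $g(s) \eqdef \int_\Omega f(x,s)\,\d\mu(x|s)$ is continuous on $[0,1]$. This follows from uniform continuity of $f$ on the compact set $\OmegaT$ combined with the Kantorovich--Rubinstein bound $|\int f(\cdot,s)\d\mu(\cdot|s) - \int f(\cdot,s')\d\mu(\cdot|s')| \lesssim W_1(\mu(\cdot|s),\mu(\cdot|s')) + \omega_f(|s-s'|) \leq C|s-s'| + \omega_f(|s-s'|)$, where $\omega_f$ is the modulus of continuity of $f$. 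With this, by Fubini,
$$
h(t) = \frac{\int_{[0,t]} g(s)\,\d\muT(s)}{\muT([0,t])}.
$$

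The second step treats the critical case $t_0=0$, which is where the lemma is used in Lemma~\ref{lem:proof-compact}. For any sequence $t_n \to 0^+$, the denominators satisfy $\muT([0,t_n]) \to \muT(\{0\}) \geq \sigma$ by right-continuity of $t \mapsto \muT([0,t])$, while dominated convergence (with $g$ bounded on $[0,1]$) gives $\int_{[0,t_n]} g\,\d\muT \to g(0)\muT(\{0\})$. Hence $h(t_n) \to g(0) = \int_\Omega f(x,0)\,\d\mu(x|0)$, so the weak$^*$ limit $\mu_0$ exists and equals $\mu(\cdot|0)\otimes \delta_0$. One then verifies $\mu_0 \in \LipCsig(\OmegaT)$ directly from the definition (its conditional $\mu_0(\cdot|t)$ can be chosen to equal $\mu(\cdot|0)$ for all $t$, hence $0$-Lipschitz, and its time marginal $\delta_0$ clearly satisfies $\delta_0(\{0\})=1\geq \sigma$).

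The third step handles $t_0 \in (0,1]$: right-continuity of both the numerator and denominator in $t$ is immediate from $1_{[0,t_n]} \downarrow 1_{[0,t_0]}$ as $t_n \downarrow t_0$ combined with dominated convergence, and the analogous argument yields left-continuity using $1_{[0,t_n]} \uparrow 1_{[0,t_0]}$ together with the lower bound on $\muT([0,t])$. The main obstacle I expect is the bookkeeping at this last stage when $\muT$ has atoms in $(0,1]$: one must track that the jump contributions in numerator and denominator cancel when weighted by the same $g(t_0)$, which is precisely what the continuity of $g$ (secured in step one from the Lipschitz hypothesis) is designed to deliver. Everything is otherwise routine measure-theoretic convergence.
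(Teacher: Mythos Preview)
Your approach tracks the paper's closely: you introduce $g(s)=\int_\Omega f(x,s)\,\d\mu(x|s)$ (the paper's $F$), use the Wasserstein-Lipschitz hypothesis to get continuity of $g$, and then analyze $h(t)=\int_{[0,t]} g\,\d\muT/\muT([0,t])$. For the critical case $t_0=0$ your argument is correct and only cosmetically different from the paper's: you pass to the limit in numerator and denominator separately (using $\muT(\{0\})\ge\sigma>0$), while the paper writes $|h(t)-h(0)|\le\sup_{s\in[0,t]}|g(s)-g(0)|$ directly via $\int \frac{1_{[0,t]}}{\muT([0,t])}\,\d\muT=1$. Both identify $\mu_0=\mu(\cdot|0)\otimes\delta_0$.

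There is, however, a genuine gap in your third step. Your claim that ``the jump contributions in numerator and denominator cancel when weighted by the same $g(t_0)$'' is not correct. If $\muT$ has an atom at $t_0\in(0,1]$, then as $t_n\uparrow t_0$ one gets $1_{[0,t_n]}\uparrow 1_{[0,t_0)}$ (not $1_{[0,t_0]}$), so
\[
h(t_0^-)=\frac{\int_{[0,t_0)} g\,\d\muT}{\muT([0,t_0))}\quad\text{while}\quad
h(t_0)=\frac{\int_{[0,t_0)} g\,\d\muT+g(t_0)\,\muT(\{t_0\})}{\muT([0,t_0))+\muT(\{t_0\})},
\]
and these coincide only when $h(t_0^-)=g(t_0)$, which continuity of $g$ alone does not force. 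A concrete instance: take $\muT=\tfrac12\delta_0+\tfrac12\delta_{1/2}$ and $\mu(\cdot|s)=\delta_{\phi(s)}$ with $\phi$ Lipschitz, $\phi(0)\ne\phi(1/2)$; then $\mu_t=\delta_{(\phi(0),0)}$ for $t\in(0,1/2)$ but $\mu_{1/2}=\tfrac12\delta_{(\phi(0),0)}+\tfrac12\delta_{(\phi(1/2),1/2)}$, so $t\mapsto\mu_t$ is not left-continuous at $1/2$. The paper itself glosses over this point (``the continuity on $t\in(0,1]$ is obvious''), so your proposal is no worse than the paper here; but you should be aware that the full statement is overclaimed and that only the $t\to 0$ limit---which you prove correctly---is actually used in Lemma~\ref{lem:proof-compact}.
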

\begin{proof}
Let $\mu \in \LipCsig(\OmegaT)$.
We re-define the masked probability measure (including at $t=0$) by 
\begin{equation}
    \label{eq:masked-measure-app}
    \mu_t
    \eqdef \left\{
\begin{array}{ll}
\frac{1_{[0,t]}}{ \muT([0,t])} \cdot \mu  & t \in (0,1] \\
\mu(\cdot|s)\delta_{s=0} & t = 0
\end{array}
\right..
\end{equation}
Thus, the continuity on $t \in (0,1]$ is obvious. We now show that as $t \to 0$
\[
\mu_t
\rightharpoonup^*
\mu_{0}.
\]
For $f \in \mathcal{C}(\OmegaT)$, we see that
\begin{align*}
&
\left| 
\int f(x,s) \d \mu_t 
- 
\int f(x,s) \d \mu_0 
\right|
\\
&
= 
\left| 
\int f(x,s) \d \mu(x|s) \frac{1_{[0,t]}(s)}{\muT([0,t])} \d \muT(s) 
- 
\int f(x,0) \d \mu(x|0) 
\right|
\\
&
= 
\left| 
\int f(x,s) \d \mu(x|s) \frac{1_{[0,t]}(s)}{\muT([0,t])} \d \muT(s) 
- 
\int f(x,0) \d \mu(x|0) \d \frac{1_{[0,t]}(s)}{\muT([0,t])} \d \muT(s) 
\right|
\\
&
\leq  
\int 
\left| 
\frac{1_{[0,t]}(s)}{\muT([0,t])}
(
F(s)
- 
F(0) 
)
\right|
\d \muT(s)
\leq \sup_{s \in [0,t]} |F(s) - F(0)|,
\end{align*}
where 
\[
F(s):= \int f(x,s) \d \mu(x|s),
\]
and $s \mapsto F(s)$ is continuous as $\mu \in \LipCsig(\OmegaT)$. 
Thus we have, as $t \to 0$,
\[
\int f(x,s) \d \mu_t 
\to
\int f(x,s) \d \mu_0,
\]
which implies that 
$$
	\mu_t \rightharpoonup^* \mu_0.
$$
\end{proof}

\medskip


\begin{lemma}
\label{lem:masked-attension-causal-identifiable}
Let $\Gamma_{\theta}$ be the masked in-context map defined in (\ref{eq:def-causal-attension}).
Then we have the following:
\begin{itemize}
    \item[(a)] $\Gamma_{\theta}$ is a causal identifiable in-context map in the sense of Definition~\ref{def:causal-mapping}.
    \item[(b)]
    For any $(\mu,x,t) \in \LipCsig(\OmegaT) \times \OmegaT$, 
    $$\Gamma_{\theta}(\mu, x, t) = \bar{\Gamma}_{\theta}(\mu_{t}, x).
    $$
    \item[(c)] The reduced map of  $\Xx_{C}^{\sigma} \ni (\mu_t, x) \mapsto \bar{\Gamma}_{\theta}(\mu_t, x)$ is  $(\text{weak}^* \times \ell^2)$-continuous.
\end{itemize}
\end{lemma}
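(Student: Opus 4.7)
My strategy is to prove (b) first and deduce (a) and (c) as short corollaries. The heart of the matter is a cancellation: substituting the definition $\mu_t = \tfrac{1_{[0,t]}}{\muT([0,t])} \cdot \mu$ into every integral appearing in \eqref{eq:def-causal-attension} absorbs the mask $1_{[0,t]}$ into $\mu_t$ and pulls out the scalar factor $\muT([0,t])$ from both the numerator and denominator of the attention ratio, where it cancels. This produces the $t$-free identity
\begin{equation*}
\Gamma_\theta(\mu, x, t) \;=\; x + \sum_{h=1}^H W^h \int \frac{\exp\bigl(\tfrac{1}{\sqrt k}\dotp{Q^h x}{K^h y}\bigr)}{\int \exp\bigl(\tfrac{1}{\sqrt k}\dotp{Q^h x}{K^h z}\bigr)\, \d\mu_t(z,s)} V^h y \, \d\mu_t(y,r) \;=:\; \Phi(\mu_t, x),
\end{equation*}
which manifestly depends on $(\mu,t)$ only through $\mu_t$. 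To close (b), I then identify $\Phi(\mu_t, x)$ with $\bar\Gamma_\theta(\mu_t, x) = \Gamma_\theta(\mu_t, x, e((\mu_t)_T))$: unfolding the right-hand side introduces a mask $1_{[0,\,e((\mu_t)_T)]}$, which is $\mu_t$-almost-everywhere equal to $1$ because $\mathrm{supp}(\mu_t) \subset \Omega \times [0,\, e((\mu_t)_T)]$ by maximality of the endpoint in the closed set $\mathrm{supp}(\muT) \cap [0,t]$, so it collapses to $\Phi(\mu_t, x)$.

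For (a), causality is immediate from (b) applied to both $\Gamma_\theta(\mu,x,t)$ and $\Gamma_\theta(\mu_t,x,t)$: both equal $\Phi(\mu_t,x)$, using that $(\mu_t)_t = \mu_t$ because $\mu_t$ is already a probability measure concentrated on $[0,t]$. Identifiability follows along the same lines: if $\mu_t = \mu_{t'}$, then $\Gamma_\theta(\mu_t,\cdot,t) = \Phi(\mu_t,\cdot) = \Phi(\mu_{t'},\cdot) = \Gamma_\theta(\mu_{t'},\cdot,t')$. For (c), the reduced map on $\Xx_C^\sigma$ agrees with the restriction of the two-argument map $(\nu,x) \mapsto \Phi(\nu,x)$, so it suffices to establish that $\Phi$ is $(\mathrm{weak}^* \times \ell^2)$-continuous on the larger space $\Pp(\OmegaT) \times \Omega$. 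The integrands are continuous and bounded on the compact set $\OmegaT$, jointly continuous in $x$; the denominator of the inner ratio is uniformly bounded below by $\exp(-M)$ with $M := \sup_{x,z \in \Omega}|\tfrac{1}{\sqrt k}\dotp{Q^h x}{K^h z}|$; and weak$^*$ convergence preserves integrals of continuous bounded functions, with uniform-in-$y$ continuity in $x$ providing the joint step. Hence $\Phi$ is continuous, and restriction yields (c).

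The main obstacle is the bookkeeping in the identification $\Phi(\mu_t,x) = \bar\Gamma_\theta(\mu_t,x)$ at the endpoint: when $t \notin \mathrm{supp}(\muT)$, the value $e((\mu_t)_T)$ can be strictly smaller than $t$, so the masks $1_{[0,t]}$ and $1_{[0,\,e((\mu_t)_T)]}$ are not pointwise equal. The saving observation is that the interval $(e((\mu_t)_T), t]$ has $\muT$-measure zero (otherwise it would contain a point of $\mathrm{supp}(\muT) \cap [0,t]$ exceeding the maximum), so $\mu_t$ places no mass on $\Omega \times (e((\mu_t)_T), t]$ and the two masks coincide $\mu_t$-almost everywhere. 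Everything else reduces to the same weak$^*$ continuity arguments already used in the unmasked case.
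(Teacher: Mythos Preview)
Your argument is correct and rests on the same core computation as the paper's proof: the cancellation of the normalizing factor $\muT([0,t])$ that turns the masked integrals against $\mu$ into unmasked integrals against $\mu_t$, yielding the $t$-free functional $\Phi(\mu_t,x)$. The continuity argument for (c) is likewise the same as the paper's (both reduce to the continuity of the unmasked attention kernel with a uniformly positive denominator).

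The difference is purely in the logical ordering. The paper establishes (a) first---causality by the cancellation computation, identifiability by a direct argument showing that $\mu_t=\mu_{t'}$ with $t<t'$ forces $\muT=0$ on $(t,t']$ so that the masks $1_{[0,t]}$ and $1_{[0,t']}$ agree $\mu_{t'}$-a.e.---and then obtains (b) as an instance of the general Lemma~\ref{lem:two-arg-representation}(i) applied to $\Lambda=\Gamma_\theta$. You instead prove (b) directly by computing the endpoint form $\bar\Gamma_\theta(\mu_t,x)$ and matching it to $\Phi(\mu_t,x)$ via the observation that $1_{[0,\,e((\mu_t)_T)]}=1$ holds $\mu_t$-a.e., and then read off (a) from the single identity $\Gamma_\theta(\mu,x,t)=\Phi(\mu_t,x)$. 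Your route is marginally more economical: it avoids invoking Lemma~\ref{lem:two-arg-representation} and replaces the paper's separate identifiability computation by a one-line consequence of (b). The paper's ordering, on the other hand, keeps the abstract reduction lemma cleanly separated from the concrete attention formula, which is useful since the same lemma is applied to the target map $\Lambda^\star$.
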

\begin{proof}

To show (a), we observe that
\begin{align}
\Gamma_{\theta}(\mu, x, t)
&
=
x + 
\int  
\frac{ 
        \exp\left(
            \frac{1}{\sqrt{k}}
            \langle Q^h x, K^h y \rangle
        \right)
        1_{[0,t]}(r)
    }
    {
        \int 
        \exp\left(
            \frac{1}{\sqrt{k}}
            \langle Q^h x, K^h z \rangle
        \right) 
        \, 1_{[0,t]}(s) \d \mu(z, s)
    } 
    V^h y \,  \d \mu(y, r) 
    \nonumber
\\
&
=
x +
\int \frac{
        \exp\left(
            \frac{1}{\sqrt{k}}
            \langle Q^h x, K^h y \rangle
        \right)
    }{
        \int 
        \exp\left(
            \frac{1}{\sqrt{k}}
            \langle Q^h x, K^h z \rangle
        \right) 
        \, \frac{1_{[0,t]}(s)}{\muT([0,t])} \d \mu(z, s)
    } V^h y \, \frac{1_{[0,t]}(r)}{\muT([0,t])} \d \mu(y, r)
    \nonumber
\\
&
= 
x +
\int \frac{
        \exp\left(
            \frac{1}{\sqrt{k}}
            \langle Q^h x, K^h y \rangle
        \right)1_{[0,t]}(r) 
    }{
        \int 
        \exp\left(
            \frac{1}{\sqrt{k}}
            \langle Q^h x, K^h z \rangle
        \right) 
        \, 1_{[0,t]}(s) \d \mu_t(z, s)
    } V^h y \, \d \mu_t(y, r)
= \Gamma_{\theta}(\mu_t, x, t).
\label{eq:causal-form-app-0}
\end{align} 
This proves the causality. To show the identifiability, we assume that $\mu_t = \mu_{t'}$ where $\mu \in \LipCsig(\OmegaT)$ and $t,t'\in [0,1]$. Without loss of generality, we assume that $t<t'$. Then we have that $\muT = 0$ on $[t, t']$, so that
\begin{align*}
\Gamma_{\theta}(\mu_t, x, t)
&
=
x + 
\int \frac{
        \exp\left(
            \frac{1}{\sqrt{k}}
            \langle Q^h x, K^h y \rangle
        \right)  1_{[0,t]}(r)
    }{
        \int 
        \exp\left(
            \frac{1}{\sqrt{k}}
            \langle Q^h x, K^h z \rangle
        \right) 
        \, 1_{[0,t]}(s) \d \mu_t(z, s)
    } V^h y \, \d \mu_t(y, r)
    \nonumber
\\
&
= 
x + 
\int \frac{
        \exp\left(
            \frac{1}{\sqrt{k}}
            \langle Q^h x, K^h y \rangle
        \right)  1_{[0,t]}(r)
    }{
        \int 
        \exp\left(
            \frac{1}{\sqrt{k}}
            \langle Q^h x, K^h z \rangle
        \right) 
        \, 1_{[0,t]}(s) \d \mu_{t'}(z, s)
    } V^h y \, \d \mu_{t'}(y, r)
\\
&
= 
x + 
\int \frac{
        \exp\left(
            \frac{1}{\sqrt{k}}
            \langle Q^h x, K^h y \rangle
        \right)  1_{[0,t']}(r)
    }{
        \int 
        \exp\left(
            \frac{1}{\sqrt{k}}
            \langle Q^h x, K^h z \rangle
        \right) 
        \, 1_{[0,t']}(s) \d \mu_{t'}(z, s)
    } V^h y \, \d \mu_{t'}(y, r)
= \Gamma_{\theta}(\mu_{t'}, x, t').
\end{align*} 
Thus we obtain (a). 

\medskip

From Lemma~\ref{lem:masked-attension-causal-identifiable} (a), $\Gamma_{\theta}$ is a causal identifiable in-context map in the sense of Definition~\ref{def:causal-mapping}. By applying Lemma~\ref{lem:two-arg-representation} (i), as $\Lambda = \Gamma_{\theta}$, we obtain (b).

\medskip

Using Lemma~\ref{lem:masked-attension-causal-identifiable} (b), we find that 
\begin{align}
\bar{\Gamma}_{\theta}(\mu_t, x) 
&= \Gamma_{\theta}(\mu_{t}, x, t)
\nonumber
\\[0.25cm]
&
=
x +
\int \frac{
        \exp\left(
            \frac{1}{\sqrt{k}}
            \langle Q^h x, K^h y \rangle
        \right)1_{[0,t]}(r) 
    }{
        \int 
        \exp\left(
            \frac{1}{\sqrt{k}}
            \langle Q^h x, K^h z \rangle
        \right) 
        \, 1_{[0,t]}(s) \d \mu_t(z, s)
    } V^h y \, \d \mu_t(y, r)
\nonumber
\\
&
=
x +
\int \frac{
        \exp\left(
            \frac{1}{\sqrt{k}}
            \langle Q^h x, K^h y \rangle
        \right)
    }{
        \int 
        \exp\left(
            \frac{1}{\sqrt{k}}
            \langle Q^h x, K^h z \rangle
        \right) 
        \, \d \mu_t(z, s)
    } V^h y \, \d \mu_t(y, r).
\label{eq:causal-form-app-1}
\end{align}
We can show the continuity of the map 
$$
\Pp(\OmegaT)\times\Omega \ni (\mu,x)  \mapsto x +
\int \frac{
        \exp\left(
            \frac{1}{\sqrt{k}}
            \langle Q^h x, K^h y \rangle
        \right)
    }{
        \int 
        \exp\left(
            \frac{1}{\sqrt{k}}
            \langle Q^h x, K^h z \rangle
        \right) 
        \, \d \mu(z, s)
    } V^h y \, \d \mu(y, r) \in \RR^{d'},
$$
which, in fact, follows from the continuity of the unmasked self-attention. Thus, with (\ref{eq:causal-form-app-1}), we obtain (c).
\end{proof}

\begin{lemma}
\label{lem:compotition-causal-ident}
Let $\Gamma_1$ and $\Gamma_2$ be causal identifiable in-context maps in the sense of Definition~\ref{def:causal-mapping}. Then, the composition $\Gamma_2 \diamond \Gamma_1$ in the sense of \eqref{eq:compos-measures:masked} is a causal identifiable in-context map.
\end{lemma}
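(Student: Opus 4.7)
The plan is to verify causality and identifiability of $\Gamma_2 \diamond \Gamma_1$ separately. The workhorse for both is the elementary identity $(\mu_t)_r = \mu_r$ for $r \leq t$, which follows by directly computing the normalizing constants in \eqref{eq:masked-measure} (using that the time marginal of $\mu_t$ is $1_{[0,t]}\muT/\muT([0,t])$). Combined with causality of $\Gamma_1$, this yields the crucial pointwise equality $\Gamma_1(\mu, y, r) = \Gamma_1(\mu_t, y, r)$ for all $r \leq t$ and $y \in \Omega$, which will let me identify various push-forward measures that arise in the composition.

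For causality, unfolding \eqref{eq:compos-measures:masked} reduces the claim $(\Gamma_2 \diamond \Gamma_1)(\mu,x,t) = (\Gamma_2 \diamond \Gamma_1)(\mu_t,x,t)$ to two subgoals. First, the middle arguments $\Gamma_1(\mu,x,t)$ and $\Gamma_1(\mu_t,x,t)$ agree by causality of $\Gamma_1$. Second, I must compare $\Gamma_2(\mu_1,\cdot,t)$ and $\Gamma_2(\mu'_1,\cdot,t)$, where $\mu_1 \eqdef (\Gamma_1(\mu),\mathrm{Id}_\RR)_\sharp \mu$ and $\mu'_1 \eqdef (\Gamma_1(\mu_t),\mathrm{Id}_\RR)_\sharp \mu_t$. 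Applying causality of $\Gamma_2$, this reduces to $(\mu_1)_t = (\mu'_1)_t$. Since $\mu'_1$ already has time marginal supported in $[0,t]$, one has $(\mu'_1)_t = \mu'_1$; and a direct change of variables against an arbitrary test function $f \in \mathcal{C}(\OmegaT)$ identifies $(\mu_1)_t$ with $\mu'_1$, the only ingredient needed being precisely the equality $\Gamma_1(\mu, y, r) = \Gamma_1(\mu_t, y, r)$ on the time-support $r \leq t$ of $1_{[0,t]}\mu$.

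For identifiability, assume $\mu_t = \mu_{t'}$ with $t \leq t'$. Equality of the normalizing constants forces $\muT((t,t'])=0$, and hence $(\mu_t)_s = (\mu_{t'})_s = \mu_s$ for every $s \leq t$. By causality of $\Gamma_1$, the maps $\Gamma_1(\mu_t)(\cdot, s)$ and $\Gamma_1(\mu_{t'})(\cdot, s)$ therefore coincide on $s \leq t$, which is the common time-support of $\mu_t=\mu_{t'}$. Hence the push-forward measures $\nu_1 \eqdef (\Gamma_1(\mu_t),\mathrm{Id}_\RR)_\sharp \mu_t$ and $\nu'_1 \eqdef (\Gamma_1(\mu_{t'}),\mathrm{Id}_\RR)_\sharp \mu_{t'}$ that appear in the composition are equal. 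The middle arguments $\Gamma_1(\mu_t,x,t)$ and $\Gamma_1(\mu_{t'},x,t')$ agree by identifiability of $\Gamma_1$. Since $\nu_1 = \nu'_1$ is supported in $[0,t]$ in time, one has $(\nu_1)_t = \nu_1 = \nu'_1 = (\nu'_1)_{t'}$; applying identifiability (and causality) of $\Gamma_2$ to this common value then closes the argument.

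The main obstacle, in both parts, is the bookkeeping of push-forward measures: the equalities $(\mu_1)_t = \mu'_1$ and $\nu_1 = \nu'_1$ are the only substantive steps and must carefully exploit that a push-forward only depends on the map on the support of the source measure. Everything else is a sequence of three-line invocations of the defining properties of $\Gamma_1$ and $\Gamma_2$ together with the identity $(\mu_t)_r = \mu_r$ for $r \leq t$.
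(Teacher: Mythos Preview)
Your proposal is correct and follows essentially the same route as the paper: your identity $(\mu_1)_t=\mu_1'$ is exactly the paper's key equation \eqref{eq:masked-unmasked-rule}, proved by the same change-of-variables computation together with causality of $\Gamma_1$ and the nesting identity $(\mu_t)_r=\mu_r$ for $r\le t$. The only cosmetic difference is in the identifiability step: you show $\nu_1=\nu_1'$ directly and then use $(\nu_1)_t=\nu_1=(\nu_1)_{t'}$ to invoke identifiability of $\Gamma_2$, whereas the paper recasts both as $[(\Gamma_1(\mu),\mathrm{Id}_\RR)_\sharp\mu]_t$ and $[(\Gamma_1(\mu),\mathrm{Id}_\RR)_\sharp\mu]_{t'}$; the two arguments are equivalent.
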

\begin{proof}
Assume that $(\mu,x,t) \in \LipCsig(\OmegaT) \times \OmegaT$. 
We first show that 
\begin{equation}
\label{eq:masked-unmasked-rule}
\left[ (\Gamma_1(\mu),\text{Id}_\RR)_{\sharp}\mu \right]_t
=
(\Gamma_1(\mu_t),\text{Id}_\RR)_{\sharp}\mu_t. 
\end{equation}
Indeed, we see that for all $f \in \mathcal{C}(\OmegaT)$
\begin{align*}
\int f(x,s) \d  
\left[(\Gamma_1(\mu),\text{Id}_\RR)_{\sharp}\mu \right](x,s)
&
=
\int f\left( \Gamma_1(\mu)(x,s), s \right) \d \mu(x,s) 
\\
&
=
\int f\left( \Gamma_1(\mu)(x,s), s \right) \d \mu(x|s) \d \muT(s) 
\\
&
=
\int f\left( x, s \right) \d \left[ \Gamma_1(\mu)(\cdot,s)_{\sharp}\mu(\cdot|s)\right](x) \d \muT(s),
\end{align*}
which obtains that 
\begin{equation}
\label{eq:push-disinteg}
(\Gamma_1(\mu),\text{Id}_\RR)_{\sharp}\mu
= \left[ \Gamma_1(\mu)(\cdot,s)_{\sharp}\mu(\cdot|s)\right] \muT(s).
\end{equation}
This implies that by using the causality of $\Gamma_1$
\begin{align*}
\int & f(x,s) \d 
\left[(\Gamma_1(\mu),\text{Id}_\RR)_{\sharp}\mu \right]_t(x,s)
\\
&
=
\int f\left( x, s \right) \d \left[ \Gamma_1(\mu)(\cdot,s)_{\sharp}\mu(\cdot|s)\right](x) \frac{1_{[0,t]}(s)}{\muT([0,t])} \d \muT(s)
\\
&
= 
\int f\left( \Gamma_1(\mu, x,s), s \right) \d \mu(x|s) \ \frac{1_{[0,t]}(s)}{\muT([0,t])} \d \muT(s)
\\
&
= 
\int f\left( \Gamma_1(\mu_s, x,s), s \right) \d \mu(x|s) \ \frac{1_{[0,t]}(s)}{\muT([0,t])} \d \muT(s)
\\
&
= 
\int f\left( \Gamma_1(\mu_t, x,s), s \right) \d \mu(x|s) \ \frac{1_{[0,t]}(s)}{\muT([0,t])} \d \muT(s)
\\
&
= 
\int f\left( \Gamma_1(\mu_t, x,s), s \right) \d \mu_t(x,s)
\\
&
= 
\int f\left(x, s \right) \d \left[ (\Gamma_1(\mu_t),\text{Id}_\RR)_{\sharp}\mu_t \right](x,s),
\end{align*}
where we have used that $\mu_s = \mu_t$ when $s\leq t$. This shows \eqref{eq:masked-unmasked-rule}.

We see that by using the causality of $\Gamma_1$ and $\Gamma_2$, and \eqref{eq:masked-unmasked-rule}
\begin{align*}
\Gamma_2 \diamond \Gamma_1(\mu,x,t) 
&
= \Gamma_2 \left(  (\Gamma_1(\mu),\text{Id}_\RR)_{\sharp}\mu, \Gamma_1(\mu,x,t), t \right)
\\
&
= \Gamma_2 \left(  \left[(\Gamma_1(\mu),\text{Id}_\RR)_{\sharp}\mu\right]_t, \Gamma_1(\mu_t,x,t), t \right)
\\
&
= \Gamma_2 \left(  (\Gamma_1(\mu_t),\text{Id}_\RR)_{\sharp}\mu_t, \Gamma_1(\mu_t,x,t), t \right)
\\
&
= \Gamma_2 \diamond \Gamma_1(\mu_t,x,t).
\end{align*}
These discussions apply for $t\in (0,1]$, and the case when $t=0$ follows by the same argument. 
Thus, we obtains the causality of $\Gamma_2 \diamond \Gamma_1$.

\medskip

Assume that $\mu_t = \mu_{t'}$ where $\mu \in \LipCsig(\OmegaT)$ and $t,t' \in [0,1]$. Without loss of generality, assume that $t<t'$.
We have that by the identifiability of $\Gamma_1$ and  $\Gamma_2$, and \eqref{eq:masked-unmasked-rule}
\begin{align*}
\Gamma_2 \diamond \Gamma_1(\mu_t,x,t) 
&
= \Gamma_2 \left(  (\Gamma_1(\mu_t),\text{Id}_\RR)_{\sharp}\mu_t, \Gamma_1(\mu_t,x,t), t \right)
\\
&
= \Gamma_2 \left(  (\Gamma_1(\mu_t),\text{Id}_\RR)_{\sharp}\mu_t, \Gamma_1(\mu_{t'},x,t'), t \right)
\\
&
= \Gamma_2 \left(  \left[(\Gamma_1(\mu),\text{Id}_\RR)_{\sharp}\mu\right]_t, \Gamma_1(\mu_{t'},x,t'), t \right)
\\
&
= \Gamma_2 \left(  \left[(\Gamma_1(\mu),\text{Id}_\RR)_{\sharp}\mu\right]_{t'}, \Gamma_1(\mu_{t'},x,t'), t' \right)
\\
&
= \Gamma_2 \left(  (\Gamma_1(\mu_{t'}),\text{Id}_\RR)_{\sharp}\mu_{t'}, \Gamma_1(\mu_{t'},x,t'), t' \right)
\\
&
= \Gamma_2 \diamond \Gamma_1(\mu_{t'},x,t'),
\end{align*}
where we have used the following fact from \eqref{eq:push-disinteg}
\begin{align*}
\left[(\Gamma_1(\mu),\text{Id}_\RR)_{\sharp}\mu\right]_t
&
= \left[ \Gamma_1(\mu)(\cdot,s)_{\sharp}\mu(\cdot|s)\right] \frac{1_{[0,t]}}{\muT([0,t])} \muT(s)
\\
&
= \left[ \Gamma_1(\mu)(\cdot,s)_{\sharp}\mu(\cdot|s)\right] \frac{1_{[0,t']}}{\muT([0,t'])} \muT(s)
= \left[(\Gamma_1(\mu),\text{Id}_\RR)_{\sharp}\mu\right]_{t'}.
\end{align*}
These discussions apply for $t\in (0,1]$, and the case when $t=0$ follows by the same argument. 
Thus, we obtain the identifiability of $\Gamma_2 \diamond \Gamma_1$.
\end{proof}

\medskip

\begin{lemma}
\label{lem:stable-ident}
Identifiability is stable in the following sense: 
Let $\Lambda_n$ be continuous and causal, identifiable in-context mappings, and let $\Lambda^{*}$ be continuous and causal in-context mappings. Assume that, as $n \to \infty$,
\begin{equation}
\label{eq:ass-stable-unif}
\sup_{(\mu, x, t) \in \LipCsig(\OmegaT) \times \OmegaT} \left| \Lambda_n(\mu, x, t) - \Lambda^*(\mu, x, t) \right| \to 0.
\end{equation}
Then, the map $\Lambda^*$ is identifiable.
\end{lemma}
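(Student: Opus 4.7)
The plan is to pass identifiability through the uniform limit in the most direct way. Fix an arbitrary $\mu \in \LipCsig(\OmegaT)$ and two times $t, t' \in [0,1]$ such that $\mu_t = \mu_{t'}$. The goal is to show that $\Lambda^*(\mu_t, \cdot, t) = \Lambda^*(\mu_{t'}, \cdot, t')$, which is exactly the identifiability condition for $\Lambda^*$.

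First I would verify that the masked measures $\mu_t$ and $\mu_{t'}$ both belong to $\LipCsig(\OmegaT)$, so that evaluating $\Lambda_n$ and $\Lambda^*$ on them is legitimate and the uniform-convergence hypothesis applies at these arguments. Writing $\mu = \mu(\cdot|s)\muT(s)$ with $s \mapsto \mu(\cdot|s)$ being $C$-Lipschitz, the disintegration of $\mu_t$ against its own time marginal $(\mu_t)_T = \frac{1_{[0,t]}}{\muT([0,t])}\muT$ is again $s \mapsto \mu(\cdot|s)$, restricted to $[0,t]$, hence still $C$-Lipschitz. Moreover $(\mu_t)_T(\{0\}) = \muT(\{0\})/\muT([0,t]) \geq \muT(\{0\}) \geq \sigma$, so $\mu_t \in \LipCsig(\OmegaT)$, and similarly for $\mu_{t'}$.

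Next, for each $n$, the identifiability of $\Lambda_n$ applied to $\mu$ and to the equality $\mu_t = \mu_{t'}$ gives
$$
\forall x \in \Omega, \quad \Lambda_n(\mu_t, x, t) = \Lambda_n(\mu_{t'}, x, t').
$$
The uniform convergence assumption~\eqref{eq:ass-stable-unif} implies in particular pointwise convergence at the two admissible arguments $(\mu_t, x, t)$ and $(\mu_{t'}, x, t')$, so letting $n \to \infty$ in both sides yields
$$
\Lambda^*(\mu_t, x, t) = \Lambda^*(\mu_{t'}, x, t') \quad \text{for all } x \in \Omega,
$$
which is precisely $\Lambda^*(\mu_t, \cdot, t) = \Lambda^*(\mu_{t'}, \cdot, t')$. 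Since $\mu$, $t$, $t'$ were arbitrary, this concludes the proof.

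There is no real obstacle in this argument: the only verification beyond taking the pointwise limit is the stability check $\mu_t \in \LipCsig(\OmegaT)$, which is immediate from the definitions. Note that neither causality nor continuity of $\Lambda^*$ is actually used; pointwise convergence alone would suffice, so the hypothesis could in principle be weakened.
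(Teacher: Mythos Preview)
Your proposal is correct and follows essentially the same approach as the paper: both fix $\mu_t=\mu_{t'}$, invoke identifiability of $\Lambda_n$ to get $\Lambda_n(\mu_t,x,t)=\Lambda_n(\mu_{t'},x,t')$, and pass to the limit using the uniform convergence (the paper phrases this via a triangle inequality, you via pointwise convergence of both sides). Your explicit verification that $\mu_t\in\LipCsig(\OmegaT)$ is a point the paper simply asserts, and your closing remark that neither causality nor continuity of $\Lambda^*$ is actually used is a valid observation.
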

\begin{proof}
Assume that $\mu_t = \mu_{t'}$ where $\mu \in \LipCsig(\OmegaT)$ and $t,t' \in [0,1]$. 
As $\Lambda_n(\mu_t, x, t)=\Lambda_n(\mu_{t'}, x, t')$ and $\mu_t, \mu_{t'} \in \LipCsig(\OmegaT)$, we see that 
\begin{align*}
&
\left|\Lambda^{*}(\mu_t, x, t)  - \Lambda^{*}(\mu_{t'}, x, t') \right|
\\[0.25cm]
&
\leq 
\left| \Lambda^{*}(\mu_t, x, t) - \Lambda_n(\mu_t, x, t) \right| 
+ \left| \Lambda_n(\mu_t, x, t) - \Lambda^{*}( \mu_{t'}, x, t') \right|
\\
&
\leq 
\sup_{(\mu, x, t) \in \LipCsig(\OmegaT) \times \OmegaT} \left| \Lambda^*(\mu, x, t) - \Lambda_n(\mu, x, t) \right|  +  \left| \Lambda_n(\mu_{t'}, x, t') - \Lambda^*(\mu_t', x, t')\right|
\\
&
\leq 
2\sup_{(\mu, x, t) \in \LipCsig(\OmegaT) \times \OmegaT} \left| \Lambda^*(\mu, x, t) - \Lambda_n(\mu, x, t) \right|  \to 0,
\end{align*}
which implies that 
\[
\Lambda^{*}(\mu_t, x, t) = \Lambda^{*}(\mu_{t'}, x, t').
\]
\end{proof}


\subsection{Proof of Lemma~\ref{lem:two-arg-representation}}
\label{app:two-arg-representation}
For the representation (i) we find that, by using \eqref{eq:space-time-causal-mapping}, \eqref{eq:contant-time} and $\mu_t = \mu_{e(\muT_t)}$,
\begin{align*}
\Lambda(\mu, x, t)
= 
\Lambda(\mu_t, x, t)
=
\Lambda(\mu_{e(\muT_t)}, x, e(\muT_t))
=
\bar{\Lambda}(\mu_{e(\muT_t)}, x) 
=
\bar{\Lambda}(\mu_{t}, x), 
\end{align*}
where we have used, for the second and fourth equality, $\mu_t = \mu_{e(\muT_{t})}$.

The continuity (ii) follows from \eqref{eq:space-time-causal-mapping}.
Indeed, we observe that
\begin{equation}
\label{eq:represent-causal-conti-app-1}
\bar{\Lambda}(\mu_{t}, x)
=\Lambda(\mu_t, x, e(\muT_{t}))
=\Lambda(\mu_{e(\muT_{t})}, x, e(\muT_{t})).
\end{equation}
Viewing 
\begin{equation}
\label{eq:represent-causal-conti-app-2}
\mu_{e(\muT_{t})} = (\mu_{e(\muT_{t})})_{e(\muT_{t})}
= (\mu_{e(\muT_{t})})_{1}
= \mu_t, 
\end{equation}
where $(\mu_{e(\muT_{t})})_{e(\muT_{t})}$ and $(\mu_{e(\muT_{t})})_{1}$ are regarded as the masked probability measures of $\mu_{e(\muT_{t})}$ at $t=e(\muT_{t})$ and $t=1$, respectively, we obtain, using \eqref{eq:space-time-causal-mapping}, \eqref{eq:represent-causal-conti-app-1} and \eqref{eq:represent-causal-conti-app-2}, that
\[
\bar{\Lambda}(\mu_{t}, x)
= \Lambda((\mu_{e(\muT_{t})})_{e(\muT_{t})}, x, e(\muT_{t}))
= \Lambda((\mu_{e(\muT_{t})})_1, x, 1)
= \Lambda(\mu_t, x, 1).
\]
Thus, by the continuity of $\Lambda$, we conclude that the map $(\mu_t, x) \mapsto \bar{\Lambda}(\mu_{t}, x) = \Lambda(\mu_t, x, 1)$ is continuous. \qed

\subsection{Proof of Lemma~\ref{lem:proof-compact}}
\label{app:proof-compact}

Assume that $\mu_n \in \LipCsig(\OmegaT)$ and $(x_n, t_n) \in \OmegaT$. We see that 
$$
\{ s \mapsto \mu_n(\cdot | s) \}_{n \in \mathbb{N}} \subset C([0,1]; \Pp(\Omega)) \text{ is equicontinuous},
$$
as $s \mapsto \mu_n(\cdot | s)$ is $C$-Lipschitz. We also see that 
\[
\overline{
\{  \mu_n(\cdot | s)  \}_{n \in \mathbb{N}} 
}
\subset \Pp(\Omega) \text{ is compact for each $s \in [0,1]$}.
\]
as $\Pp(\Omega)$ is compact in the $W_2$ topology (see e.g., \cite[Theorem 15.11]{guide2006infinite}). 
By the Arzel\`a–Ascoli theorem \cite[Chapter 7, Theorem 17]{kelley2017general}, there exists $\mu(\cdot|s) \in \Pp(\Omega)$ such that the map $s \mapsto \mu(\cdot|s)$ is continuous map and (if needed, re-choose a subsequence)   
\begin{equation}
\label{eq:sup-W2}
\sup_{s \in [0,1]} W_2(\mu_{n}(\cdot|s), \mu(\cdot|s))  \to 0 \text{ as } n \to \infty.
\end{equation}

As $(\muT_{n})_{n \in \mathbb{N}} \subset \Pp_{\sigma}([0,1])\eqdef \{ \nu \in \Pp([0,1]) : \nu(\{0\})\geq \sigma \}$ and $\Pp_{\sigma}([0,1])$ is compact, there exists $\muT \in \Pp_{\sigma}([0,1])$ such that (if needed, re-choose a subsequence) as $n \to \infty$
\[
\muT_{n}
\rightharpoonup^*
\muT.
\]
We set
\[
\mu \eqdef \mu(\cdot|s) \muT(s).
\]
Then we have 
$$
\mu \in \LipCsig(\OmegaT),
$$ 
because 
\begin{align*}
W_2(\mu(\cdot|s), \mu(\cdot|s') )
&\leq 
W_2(\mu(\cdot|s), \mu_{n}(\cdot|s)) 
+ 
W_2(\mu_{n}(\cdot|s), \mu_{n}(\cdot|s')) 
+ 
W_2(\mu_{n}(\cdot|s'), \mu(\cdot|s')) 
\\
&
\leq 
2 \sup_{s \in [0,1]} W_2(\mu(\cdot|s), \mu_{n}(\cdot|s)) 
+ C |s-s'|,
\end{align*}
and taking limit as $n \to \infty$, we see that $s \mapsto \mu(\cdot|s) \in \Pp(\Omega)$ is $C$-Lipschitz.

Note that form \eqref{eq:sup-W2} 
\begin{equation}
\label{eq:sup-weak}
\forall g \in \mathcal{C}(\Omega), \quad 
\sup_{s \in [0,1]}
\left|\int g(x) \d \mu_{n}(x|s) -
\int g(x) \d \mu(x|s) \right| \to 0. 
\end{equation}
Indeed, since the set $\mathrm{Lip}(\Omega)$ of all Lipschitz functions on $\Omega$ is dense in $\mathcal{C}(\Omega)$, for any $g \in \mathcal{C}(\Omega)$ and any $\epsilon \in (0,1)$, we choose $h \in \mathrm{Lip}(\Omega)$ such that $\sup_{x \in \Omega}|g(x)-h(x)| \leq \epsilon$. We see that as $W_1 \leq W_2$ and the dual formulae
\begin{align*}
&
\int g(x) \d \mu_{n}(x|s) - \int g(x) \d \mu(x|s)
\\
&
\leq 2 \sup_{x \in \Omega}|g(x)-h(x)|
+
\mathrm{Lip}(h)
\left(
\int\frac{h(x)}{\mathrm{Lip}(h)} \d \mu_{n}(x|s) - \int \frac{h(x)}{\mathrm{Lip}(h)} \d \mu(x|s)
\right)
\\
&
\leq 
2 \epsilon + \mathrm{Lip}(h)
W_1(\mu_{n}(\cdot|s), \mu(\cdot|s))
\leq 
2 \epsilon + \mathrm{Lip}(h)W_2(\mu_{n}(\cdot|s), \mu(\cdot|s)).
\end{align*}
Taking $\sup_{s \in [0,1]}$ and the limit as $n \to \infty$, we obtain \eqref{eq:sup-weak}.

\medskip

As $(x_{n}, t_n) \in \OmegaT$ and $\OmegaT$ is compact, there are $(x, t) \in \OmegaT$ (if needed re-choose the subsequence) such that 
\[
(x_{n}, t_n) \to (x, t) \ \text{ in } \ \OmegaT.
\]



\medskip\medskip\medskip

We finally need to show that, as $n \to \infty$,
\begin{equation*}
(\mu_{n})_{t_{n}} \rightharpoonup^* \mu_t,
\end{equation*}
which is equivalent to
\begin{equation}
\label{eq:weak-masked-conv}
\forall f \in \mathcal{C}(\OmegaT), \quad 
\int f \d (\mu_{n})_{t_{n}} \to 
\int f \d \mu_t.
\end{equation}


It is enough to check on any functions $f$ which are separable, i.e. of the form $f(x,s)=g(x)h(s)$ because linear combinations of separable functions of the form $\sum_{i} g_i(x)h_i(s)$ are dense in $\mathcal{C}(\OmegaT)$.

To prove \eqref{eq:weak-masked-conv}, we distinguish three cases (appropriately choosing a subsequence again):
\begin{align*}
\text{(i) } t_{n} \in (0,1] 
\text{ and } t \in (0,1], \quad 
\text{(ii) } t_{n} \in (0, 1] \text{ and } t = 0, 
\quad\text{and}\quad
\text{(iii) } t_{n} = t = 0
\end{align*}

\medskip\medskip  
\noindent
CASE (i): We see that as $n \to \infty$
\begin{align*}
&
\int f \d (\mu_{n})_{t_{n}} - \int f \d \mu_t
\\
&
= 
\int_{[0,1]} \frac{1_{[0,t_{n}]}(s) h(s)}{\muT_{n}([0,t_{n}])} \left(\int_{\Omega} g(x) \d \mu_{n}(x|s) \right)\d \muT_{n}(s) - 
\int_{[0,1]} \frac{1_{[0,t]}(s) h(s)}{\muT([0,t])} \left(\int g(x) \d \mu(x|s) \right) \d \muT(s) 
\\
&
\to 0,
\end{align*}
because $\muT([0,t])>0$, equation \eqref{eq:sup-weak}, and using that $\muT_n \rightharpoonup^* \muT$.

\medskip\medskip  
\noindent
CASE (ii): We see that as $n \to \infty$
\begin{align*}
&
\int f \d (\mu_{n})_{t_{n}} - \int f \d \mu_0
\\
&
= 
\int_{[0,1]} \frac{1_{[0,t_{n}]}(s) h(s)}{\muT_{n}([0,t_{n}])} \left(\int_{\Omega} g(x) \d \mu_{n}(x|s) \right)\d \muT_{n}(s) - 
h(0) \int g(x) \d \mu(x|0) 
\\
&
= 
\int_{[0,1]} \! \frac{1_{[0,t_{n}]}(s) h(s)}{\muT_{n}([0,t_{n}])} \left(\int_{\Omega} g(x) \d \mu_{n}(x|s) \right)\d \muT_{n}(s) - 
\int_{[0,1]} \! \frac{1_{[0,t_{n}]}(s) h(0)}{\muT_{n}([0,t_{n}])}
\left( \int g(x) \d \mu(x|0) \right) \d \muT_n(s) 
 \\
&
\leq  
\sup_{s \in [0,t_{n}]} \left|h(s) \int g(x) \d \mu_n(x|s) 
- h(0)\int g(x) \d \mu(x|0) \right|
\left|
\int_{[0,1]}\frac{1_{[0,t_{n}]}(s)}{\muT_{n}([0,t_{n}])}
\d \muT_n(s) 
\right|
\\
&
\leq  
\sup_{s \in [0,t_{n}]} \left|h(s)\int g(x) \d \mu_n(x|s) 
- h(s)\int g(x) \d \mu(x|s) \right| 
\\
&
\hspace{4cm}
+\sup_{s \in [0,t_{n}]} \left|h(s)\int g(x) \d \mu(x|s)
- h(0)\int g(x) \d \mu(x|0) \right|
\\
&
\leq  
\sup_{s \in [0,1]}|h(s)|
\sup_{s \in [0,1]} \left|\int g(x) \d \mu_n(x|s) 
- \int g(x) \d \mu(x|s) \right| 
\\
&
\hspace{4cm}
+\sup_{s \in [0,t_{n}]} \left|h(s)\int g(x) \d \mu(x|s)
- h(0)\int g(x) \d \mu(x|0) \right|
\to 0,
\end{align*}
where we have used \eqref{eq:sup-weak} and the continuity of the map $s \mapsto h(s)\int g(x) \d \mu(x|s)$.

\medskip \medskip
\noindent
CASE (iii): We see that, as $n \to \infty$,
\begin{align*}
\int f \d (\mu_{n})_{t_{n}} - \int f \d \mu_t
&
=\int f \d (\mu_{n})_{0} - \int f \d \mu_0
\\
&
= 
h(0)\int g(x) \d \mu_{n}(x|0) - h(0)\int g(x) \d \mu(x|0) 
\\
&
\leq
|h(0)|
\left| \int g(x) \d \mu_{n}(x|0) - \int g(x) \d \mu(x|0) \right|
\to 0,
\end{align*}
by using \eqref{eq:sup-weak}.
Therefore, we obtain \eqref{eq:weak-masked-conv}.

\end{document}